\documentclass[11pt]{article}

\usepackage[margin=1in]{geometry}
\usepackage{amsmath,amssymb,amsthm}
\usepackage{booktabs}
\usepackage{graphicx}
\usepackage[hidelinks]{hyperref}
\usepackage{natbib}
\usepackage{placeins}

\IfFileExists{glyphtounicode.tex}{\input{glyphtounicode}\pdfgentounicode=1}{}

\title{Signed Evidence Flow: Conflict-Aware and Stability-Calibrated Data Analysis}

\author{
Jeffery Opoku\thanks{Corresponding author. E-mail: \texttt{jeffery.opoku01@utrgv.edu}}\\
School of Mathematical and Statistical Sciences,\\
University of Texas Rio Grande Valley, Edinburg, United States\\
\texttt{jeffery.opoku01@utrgv.edu}
\and
David Banahene\\
Robert Stempel College of Public Health and Social Work,\\
Florida International University, Miami, United States\\
\texttt{abanahene54@gmail.com}
}

\date{}

\newtheorem{definition}{Definition}
\newtheorem{theorem}{Theorem}

\newtheorem{assumption}{Assumption}
\newtheorem{proposition}{Proposition}
\newtheorem{corollary}{Corollary}

\newcommand{\E}{\mathbb{E}}
\newcommand{\Pp}{\mathbb{P}}
\newcommand{\R}{\mathbb{R}}
\newcommand{\one}{\mathbf{1}}
\newcommand{\sign}{\operatorname{sign}}

\begin{document}

\maketitle

\begin{abstract}
Modern data analysis usually gives a prediction without showing whether the evidence behind it is clear, mixed, or stable. Two cases can have the same fitted confidence while one is supported by mostly agreeing evidence and the other contains strong support and strong opposition. We propose Signed Evidence Flow (SEF), a method that turns a fitted prediction rule and a signed attribution map into an evidence audit measuring support, opposition, conflict, and perturbation stability. For complete, decision-aligned decompositions, we prove that confidence determines conflict if and only if it also determines total evidence mass, with the remaining conditional variance in closed form. A second theorem states exactly when conflict improves loss prediction beyond confidence and other audit variables. We link conflict geometrically to decision fragility through an exact flip-margin identity, and extend the construction to multi-class predictions. Across healthcare, Covertype, black-box, finance, and a ten-data-set external replication suite, cross-fitted analysis shows that conflict adds error-ranking information beyond confidence and attribution entropy on several tasks, with gains reaching $+0.144$ AUC on Magic Telescope and $+0.091$ on Credit Card Default. The direction is not universal: in some tasks, low-conflict confident cases are riskier. We introduce ScopeGate, a held-out permutation diagnostic that checks direction and calibrates the review rule before deployment. SEF reveals evidence structure; ScopeGate determines whether that structure supports safe review triage in a given population.
\end{abstract}

\noindent\textbf{Keywords:} interpretable learning; evidence decomposition; selective prediction; conformal prediction; model reliability; uncertainty quantification

\section{Introduction}

Many data-analysis tools answer one question well:

\begin{quote}
What is the prediction?
\end{quote}

In practice, this is not enough. A doctor, economist, policy analyst, data scientist, or reviewer may also need to know whether the evidence behind the prediction is clear or mixed. Two people can receive the same predicted risk score. For one person, most of the evidence may point in the same direction. For the other, strong evidence may push both for and against the prediction. Treating these two cases as equally trustworthy can be risky.

Existing explanation tools have made major progress. Shapley-based explanations, local surrogate models, permutation scores, and gradient-based attributions help describe how features contribute to a prediction \citep{shapley1953value,lundberg2017unified,ribeiro2016why,sundararajan2017axiomatic}. Most of these tools, however, focus on contribution size. They can show that one variable pushed the prediction up and another pushed it down, but they usually do not treat that disagreement as the main object of study.

This paper proposes Signed Evidence Flow (SEF). The method starts with any fitted model and a reference distribution. It then breaks the prediction into signed evidence terms. Positive evidence pushes the prediction toward a target outcome. Negative evidence pushes it away. SEF measures the balance between the two. If most evidence points in the same direction, the prediction has low conflict. If support and opposition are both large, the prediction has high conflict.

The guiding question is:

\begin{quote}
Among predictions that already look confident, does the evidence structure still separate safer cases from riskier ones?
\end{quote}

The working idea is simple: a prediction is often more reliable when its evidence is directional, stable, and low-conflict. The word ``often'' matters. Later experiments show that this direction can reverse in some finance tasks, so the paper treats the direction as something to test rather than assume.

SEF therefore reports five quantities (Table~\ref{tab:sef-notation}):

\begin{enumerate}
\item signed feature or group evidence;
\item total support and total opposition;
\item an evidence conflict score;
\item a perturbation stability score;
\item a reliable evidence score.
\end{enumerate}

\begin{table}[!htbp]
\centering
\caption{Main SEF quantities. The table is included to keep the notation stable across the paper.}
\label{tab:sef-notation}
\small
\begin{tabular}{p{0.24\textwidth}p{0.22\textwidth}p{0.44\textwidth}}
\toprule
Quantity & Symbol & Plain meaning \\
\midrule
Support & $S^+(x)$ & Evidence pushing toward the reported outcome \\
Opposition & $S^-(x)$ & Evidence pushing against the reported outcome \\
Evidence mass & $M(x)$ & Total amount of signed evidence, ignoring sign \\
Net evidence & $N(x)$ & Support minus opposition \\
Conflict & $C(x)$ & How much support and opposition coexist \\
Stability & $\operatorname{Stab}(x)$ & How stable the evidence is under perturbation \\
Reliable evidence score & $\operatorname{RES}(x)$ & Directional, stable, low-conflict evidence \\
SEF risk & $R_{\mathrm{SEF}}(x)$ & One minus the reliable evidence score \\
\bottomrule
\end{tabular}
\end{table}

SEF is not meant to replace prediction models. It sits on top of them. A user can fit logistic regression, a random forest, gradient boosting, a neural network, or another predictor, and then use SEF to study the evidence behind individual predictions.

\subsection{Contributions}

This paper makes three main contributions.

First, we define a signed evidence-flow decomposition for model outputs and prove three exact results: an if-and-only-if identification theorem stating when confidence determines conflict, a closed-form conditional variance for the conflict variation left within a confidence level, and an exact risk decomposition for when conflict adds loss-prediction value beyond existing audit variables.

Second, we make evidence conflict a geometric and statistical object. We prove that conflict is a normalized $\ell_1$ distance from one-sided evidence, derive an exact flip-margin identity linking conflict to decision fragility, establish perturbation radii for stable evidence decisions, and extend the full construction to multi-class predictions through pairwise class gaps. We then introduce ScopeGate: a positive screening proposition giving the minimal sufficient condition for review improvement, paired with a finite-sample permutation diagnostic that checks direction before deployment.

Third, we evaluate SEF under repeated splits on healthcare, finance, Covertype, multi-class, non-linear black-box, and independent external replication tasks spanning 22 data sets. The key comparison uses cross-fitted error-risk models to ask whether conflict adds information after confidence and attribution entropy are already known. Perturbation stability, concentration bounds, and split-conformal screening are supporting tools rather than separate novelty claims.

\section{Setup}

Let $X=(X_1,\ldots,X_p)$ be a feature vector and let $Y$ be an outcome. Suppose a model has already been trained and gives a real-valued score
\[
f:\mathcal{X}\rightarrow \R.
\]
For binary classification, $f(x)$ may be the logit score
\[
f(x)=\log\left\{\frac{\widehat{P}(Y=1\mid X=x)}
{\widehat{P}(Y=0\mid X=x)}\right\}.
\]
For regression, $f(x)$ may be the predicted mean. For risk scoring, $f(x)$ may be a predicted risk or transformed risk.

Let $P_0$ be a reference distribution. This may be the training distribution, a chosen baseline population, a subgroup, or a scientific reference profile. Define the baseline score
\[
\mu_0=\E_{P_0}\{f(X)\}.
\]
SEF studies the contrast
\[
\Delta f(x)=f(x)-\mu_0.
\]
This is the amount by which the model score at $x$ differs from the reference score.

\section{Mathematical Scope}

Before defining SEF, we state the scope of the results. This keeps the math clear and prevents the claims from being read too broadly.

First, the algebraic identities in this paper are pointwise statements. They hold for a fixed fitted model, a fixed reference distribution, and a fixed point $x$ once the signed evidence terms have been chosen.

Second, the conflict and flip-margin results assume that the signed evidence terms are the quantities being audited. They do not require the evidence construction to be unique. Different attribution rules may produce different evidence terms, and SEF should be reported together with the chosen attribution rule.

Third, the stability bounds treat the perturbation draws as independent conditional on the fitted procedure and the point being audited. This is the usual setting for bootstrap-style Monte Carlo estimates. If perturbations are dependent, the same formulas should be viewed as diagnostic approximations rather than exact concentration bounds.

Fourth, the conformal reliability statement uses the standard split-conformal assumption: the calibration points and the future test point are exchangeable for the chosen unreliability score. Under distribution shift, the guarantee should be recalibrated or interpreted with care.

Finally, SEF is not a causal method by itself. It studies the evidence used by a fitted prediction rule. A signed evidence term should not be read as a causal effect unless the analyst adds separate causal assumptions.

\section{Signed Evidence Decomposition}

The first step is to decompose $\Delta f(x)$ into signed evidence terms.

\begin{definition}[Signed evidence decomposition]
A signed evidence decomposition is a collection of functions
\[
E_1(x),\ldots,E_p(x)
\]
such that
\[
\Delta f(x)=\sum_{j=1}^p E_j(x)+r(x),
\]
where $r(x)$ is a residual term. When $r(x)=0$, the decomposition is complete.
\end{definition}

The term $E_j(x)$ is the signed evidence assigned to feature or feature group $j$. If $E_j(x)>0$, that feature pushes the score above baseline. If $E_j(x)<0$, it pushes the score below baseline. If $E_j(x)=0$, it gives no net evidence relative to the reference.

One possible construction is the Shapley-style conditional contrast
\[
E_j(x)=
\sum_{S\subseteq \{1,\ldots,p\}\setminus \{j\}}
w(S)
\left[
\E\{f(X)\mid X_S=x_S,X_j=x_j\}
-
\E\{f(X)\mid X_S=x_S\}
\right],
\]
where
\[
w(S)=\frac{|S|!(p-|S|-1)!}{p!}.
\]
Other decompositions may also be used, provided they return signed feature-level or group-level evidence terms.

\subsection{A Model-Agnostic Perturbation Construction}

For implementation, SEF can use a simple perturbation contrast that works with any fitted model. Let $X_j^0$ be a draw from the reference distribution of feature $j$, independent of the observed point $x$. Define
\[
\widetilde{E}_j(x)
=
f(x)-
\E_{P_0}\{f(x_{-j},X_j^0)\}.
\]
This measures how much the model score changes when feature $j$ is replaced by a reference value while the other features are held at the observed values. Positive values mean the observed feature value raises the score relative to its reference behavior. Negative values mean it lowers the score.

Because perturbation contrasts do not always add exactly to $\Delta f(x)$, SEF can use the residual-spread version
\[
E_j(x)
=
\widetilde{E}_j(x)
+
\frac{\Delta f(x)-\sum_{\ell=1}^p \widetilde{E}_\ell(x)}
{p},
\]
when a complete decomposition is required. When exact additivity is not needed, the raw contrasts $\widetilde{E}_j(x)$ can be used directly for support, opposition, conflict, and stability scoring.

This construction is deliberately practical. It makes SEF usable with logistic regression, random forests, gradient boosting, neural networks, or any other model that can be evaluated after feature replacement.

\section{Support, Opposition, and Conflict}

Given signed evidence terms, split each term into positive and negative parts:
\[
E_j^+(x)=\max\{E_j(x),0\},
\qquad
E_j^-(x)=\max\{-E_j(x),0\}.
\]

\begin{definition}[Support and opposition]
The total supporting evidence and total opposing evidence are
\[
S^+(x)=\sum_{j=1}^p E_j^+(x),
\qquad
S^-(x)=\sum_{j=1}^p E_j^-(x).
\]
\end{definition}

The net evidence is
\[
N(x)=S^+(x)-S^-(x),
\]
and the total evidence mass is
\[
M(x)=S^+(x)+S^-(x).
\]

\begin{definition}[Decision-aligned evidence]
Let $\tau$ be the decision threshold on the model-score scale. Evidence is
\emph{decision-aligned} when the threshold-centered score has the complete
decomposition
\[
f(x)-\tau=\sum_{j=1}^p E_j(x)=N(x).
\]
If the feature-level evidence instead sums to $f(x)-\mu_0$, decision alignment
is obtained by adding the fixed offset $\mu_0-\tau$ as one more evidence
coordinate. All results that connect conflict to model confidence or decision
flips use decision-aligned evidence. The purely algebraic support, opposition,
and distance identities do not need this extra convention.
\end{definition}

This distinction matters. A reference-centered decomposition explains movement
away from a reference prediction. A decision-aligned decomposition explains
movement relative to the actual decision threshold. They coincide when
$\mu_0=\tau$.

\begin{definition}[Evidence direction index]
For a small $\varepsilon>0$, define
\[
D(x)=\frac{S^+(x)-S^-(x)}
{S^+(x)+S^-(x)+\varepsilon}.
\]
\end{definition}

The direction index lies near $1$ when evidence strongly supports the positive direction, near $-1$ when evidence strongly supports the negative direction, and near $0$ when evidence is balanced.

\begin{definition}[Evidence conflict score]
The evidence conflict score is
\[
C(x)=
\frac{2\min\{S^+(x),S^-(x)\}}
{S^+(x)+S^-(x)+\varepsilon}.
\]
\end{definition}

The conflict score is near $0$ when nearly all evidence points the same way. It is near $1$ when support and opposition are balanced.

\begin{proposition}[Conflict-direction identity]
If $\varepsilon=0$ and $M(x)>0$, then
\[
C(x)=1-|D(x)|.
\]
\end{proposition}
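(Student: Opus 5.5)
The plan is to reduce the claim to the elementary identity $a+b = 2\min\{a,b\}+|a-b|$, valid for any nonnegative reals $a,b$. With $\varepsilon=0$ and $M(x)=S^+(x)+S^-(x)>0$, both $D(x)$ and $C(x)$ are well defined. They share the same denominator $M(x)$. So the task is to show
\[
2\min\{S^+(x),S^-(x)\} = M(x) - |N(x)|,
\]
and then divide through by $M(x)$.

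First I will note that $S^+(x)\ge 0$ and $S^-(x)\ge 0$, since each is a sum of positive or negative parts $E_j^\pm(x)\ge 0$. Next I will write $|D(x)| = |S^+(x)-S^-(x)|/M(x) = |N(x)|/M(x)$; this uses $M(x)>0$ to pull the absolute value off the denominator. The identity $a+b-|a-b| = 2\min\{a,b\}$ can be checked by two cases. If $a\ge b$, the left side is $a+b-(a-b)=2b=2\min\{a,b\}$. If $a<b$, the left side is $2a$ by symmetry. Applying this with $a=S^+(x)$ and $b=S^-(x)$ gives
\[
1-|D(x)| = \frac{M(x)-|N(x)|}{M(x)} = \frac{2\min\{S^+(x),S^-(x)\}}{M(x)} = C(x).
\]

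There is no real obstacle here. The only point needing care is the role of the hypotheses: $\varepsilon=0$ makes the two denominators equal exactly, and $M(x)>0$ rules out division by zero. I will also remark that the identity needs no completeness or decision-alignment assumption, since it is purely algebraic in $S^\pm(x)$.
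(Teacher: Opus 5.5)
Your proposal is correct and follows essentially the same route as the paper's proof: both apply the identity $a+b=|a-b|+2\min\{a,b\}$ with $a=S^+(x)$ and $b=S^-(x)$, then divide by the common denominator $M(x)$. Your explicit two-case check of that identity and your remark that no completeness or decision-alignment assumption is needed are fine additions.
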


\begin{proof}
Let $a=S^+(x)$ and $b=S^-(x)$. Since
\[
a+b=|a-b|+2\min\{a,b\},
\]
we have
\[
\frac{2\min\{a,b\}}{a+b}
=1-\frac{|a-b|}{a+b}.
\]
The left side is $C(x)$ and the second term on the right is $|D(x)|$.
\end{proof}

This identity makes conflict easy to interpret. Conflict is the part of total evidence mass that does not create clear direction.

\begin{corollary}[Hidden evidence mass at a fixed score]
If $\varepsilon=0$, $M(x)>0$, and $N(x)\neq 0$, then
\[
M(x)=\frac{|N(x)|}{1-C(x)}.
\]
\end{corollary}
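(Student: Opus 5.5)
The plan is to derive the corollary directly from the Conflict-direction identity (Proposition above), which under $\varepsilon=0$ and $M(x)>0$ gives $C(x)=1-|D(x)|$. With $\varepsilon=0$ the direction index is $D(x)=N(x)/M(x)$. Hence $1-C(x)=|D(x)|=|N(x)|/M(x)$, where we use $M(x)>0$ to write $|N(x)/M(x)|=|N(x)|/M(x)$.

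The next step is to solve for $M(x)$, which requires dividing by $1-C(x)$. Here the hypothesis $N(x)\neq 0$ is used: it gives $|N(x)|>0$, so $1-C(x)=|N(x)|/M(x)>0$, and the division is legitimate. Rearranging $M(x)\{1-C(x)\}=|N(x)|$ then yields
\[
M(x)=\frac{|N(x)|}{1-C(x)}.
\]

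As a cross-check independent of the proposition, we can let $a=S^+(x)$ and $b=S^-(x)$ and use $a+b-2\min\{a,b\}=|a-b|$. This gives $M(x)\{1-C(x)\}=M(x)-2\min\{a,b\}=|N(x)|$ directly.

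There is no real obstacle. The only point needing care is confirming that $1-C(x)\neq 0$, which fails exactly when $S^+(x)=S^-(x)$, that is, when $N(x)=0$; the hypothesis excludes this case. It is also worth remarking, for the interpretation as ``hidden mass at a fixed score,'' that under decision-aligned evidence $N(x)=f(x)-\tau$. At a fixed score, $M(x)$ is therefore determined by $C(x)$ alone and increases strictly in $C(x)$ on $[0,1)$.
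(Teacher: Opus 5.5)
Your proof is correct and follows the paper's argument: apply the conflict-direction identity to get $1-C(x)=|D(x)|=|N(x)|/M(x)$, then rearrange. Your explicit check that $N(x)\neq 0$ forces $1-C(x)>0$, which justifies the division, is a detail the paper leaves implicit.
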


\begin{proof}
From the conflict-direction identity, $1-C(x)=|D(x)|=|N(x)|/M(x)$. Rearranging gives the result.
\end{proof}

This corollary explains why confidence alone can miss something important. In many classifiers, confidence is mainly tied to the size of a score or margin. Two cases can have similar scores, while one case has much more total evidence pushing in both directions. SEF exposes that hidden evidence mass through conflict.

\begin{proposition}[Conflict is not identified by a net-score magnitude]
Assume $\varepsilon=0$. Fix a nonzero net evidence value $n$ and any evidence mass $m\geq |n|$. There are nonnegative support and opposition values with $S^+-S^-=n$ and $S^++S^-=m$, and their conflict is
\[
C=1-\frac{|n|}{m}.
\]
Therefore a score magnitude or confidence value that determines $|n|$ does not determine conflict unless evidence mass is itself determined by that score magnitude.
\end{proposition}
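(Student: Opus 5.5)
The plan is to solve the two linear constraints explicitly and then read off the conflict value, either by direct computation or from the Conflict-direction identity. Given $n\neq 0$ and $m\geq |n|$, I set
\[
S^+=\frac{m+n}{2},\qquad S^-=\frac{m-n}{2}.
\]
These satisfy $S^+-S^-=n$ and $S^++S^-=m$ by construction. They are also the only solution, because the linear system has determinant $-2\neq 0$. Nonnegativity follows from $m\geq |n|$, which gives $m+n\geq 0$ and $m-n\geq 0$. Since $m\geq|n|>0$, the mass is positive, so the ratio defining $C$ is well defined with $\varepsilon=0$.

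Next, $\min\{S^+,S^-\}=(m-|n|)/2$, because the smaller of $(m\pm n)/2$ is the one where $\pm n=-|n|$. Substituting into the definition gives $C=2\cdot\frac{m-|n|}{2}/m=1-|n|/m$. Alternatively, I would invoke the Conflict-direction identity with $|D|=|n|/m$. This matches the Hidden evidence mass corollary read in reverse. If the statement is meant to apply to actual feature-level evidence terms, I would realize the prescribed $S^\pm$ with $p\geq 2$ coordinates, for example $E_1=S^+$, $E_2=-S^-$, and the rest zero. In the decision-aligned setting this gives $f(x)-\tau=n$.

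For the ``therefore'' clause, I fix a score magnitude, hence $|n|$. The map $m\mapsto 1-|n|/m$ is strictly increasing on $[|n|,\infty)$ and takes every value in $[0,1)$. So two decompositions with the same $|n|$ but different masses $m_1\neq m_2$ have different conflict. Conversely, if $m$ is a function of the score magnitude, then so is $C$. Hence conflict is a function of confidence exactly when mass is. I expect no real obstacle here. The only care needed is in stating the ``unless'' direction as the injectivity of $m\mapsto C$ for fixed $|n|$, and in noting that the case $n=0$ is excluded so that the division by $m$ is always valid.
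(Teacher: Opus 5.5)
Your proposal is correct and follows the paper's proof essentially verbatim: the same explicit choice $S^+=(m+n)/2$, $S^-=(m-n)/2$, nonnegativity from $m\geq|n|$, and the value $C=1-|n|/m$ read off via the conflict-direction identity. Your argument for the ``therefore'' clause is a correct addition that the paper leaves implicit here. It uses strict monotonicity of $m\mapsto 1-|n|/m$ on $[|n|,\infty)$ for fixed $|n|>0$, and the paper formalizes the same point later in part 2 of the identification theorem.
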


\begin{proof}
Set
\[
S^+=\frac{m+n}{2},
\qquad
S^-=\frac{m-n}{2}.
\]
The condition $m\geq |n|$ makes both quantities nonnegative. Their difference is $n$, their sum is $m$, and the conflict-direction identity gives $C=1-|n|/m$.
\end{proof}

This proposition resolves a basic identification question. Conflict and a
score magnitude may be computed from the same features and fitted parameters,
but they compress those inputs differently. The score magnitude records the
size of the net evidence. Conflict also depends on how much evidence cancels
before that net value is reached. The proposition does not prove that conflict
predicts error. It proves only that conflict can contain information that is
absent from the net-score magnitude. The next theorem states when this becomes
a statement about reported confidence.

\begin{theorem}[General and decision-aligned identification boundary]
Treat $N$, $M$, and $C$ as random variables over cases. Assume
$\varepsilon=0$ and $M>0$ almost surely, and let $Q$ be any reported
confidence score.

\begin{enumerate}
\item Conflict is determined by confidence if and only if the normalized net
evidence $|N|/M$ is determined by confidence. Moreover,
\[
\operatorname{Var}(C\mid Q)
=
\operatorname{Var}\!\left(\frac{|N|}{M}\,\middle|\,Q\right).
\]
\item Suppose in addition that $N\neq 0$ almost surely and $|N|$ is determined
by $Q$. Then conflict is determined by confidence if and only if evidence mass
is determined by confidence, and
\[
\operatorname{Var}(C\mid Q)
=
|N|^2\operatorname{Var}\!\left(\frac{1}{M}\,\middle|\,Q\right).
\]
\end{enumerate}
\end{theorem}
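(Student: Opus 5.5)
The whole argument rests on the conflict-direction identity, used pointwise. Since $\varepsilon=0$ and $M>0$ almost surely, that identity holds on a probability-one event and gives $C=1-|D|=1-|N|/M$. So $C$ is an affine function of the random variable $R:=|N|/M\in[0,1]$ with slope $-1$.

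For part 1, I read ``determined by confidence'' as ``equal almost surely to a measurable function of $Q$'', i.e.\ $\sigma(Q)$-measurable up to null sets.

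1. \textbf{If direction.} If $R=g(Q)$ a.s., then $C=1-g(Q)$ a.s.
2. \textbf{Only-if direction.} If $C=h(Q)$ a.s., then $R=1-h(Q)$ a.s.
3. \textbf{Variance identity.} The conditional variance is invariant under adding a $\sigma(Q)$-measurable constant and under sign flips:
\[
\operatorname{Var}(C\mid Q)=\operatorname{Var}(1-R\mid Q)=\operatorname{Var}(R\mid Q)\quad\text{a.s.}
\]
Both variances are finite because $C$ and $R$ lie in $[0,1]$, so no integrability assumption is needed.

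For part 2, assume in addition that $N\neq 0$ a.s.\ and $|N|=k(Q)$ a.s.\ for a measurable $k$ with $k>0$.

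4. \textbf{Rewriting conflict.} We have $R=k(Q)\cdot(1/M)$, so $C=1-k(Q)/M$.
5. \textbf{Reducing to $M$.} By part 1, $C$ is determined by $Q$ iff $R$ is. Since $k(Q)$ is $\sigma(Q)$-measurable and a.s.\ positive, $R$ is $\sigma(Q)$-measurable iff $1/M=R/k(Q)$ is. Because $M>0$ a.s., the map $t\mapsto 1/t$ is a measurable bijection of $(0,\infty)$. Hence $1/M$ is determined by $Q$ iff $M$ is. This is the same content as the hidden-mass corollary $M=|N|/(1-C)$, which is valid here because $N\neq0$ forces $C<1$.
6. \textbf{Variance formula.} Pull the $\sigma(Q)$-measurable factor $k(Q)=|N|$ out of the conditional variance:
\[
\operatorname{Var}(k(Q)/M\mid Q)=k(Q)^2\operatorname{Var}(1/M\mid Q).
\]
This gives the stated formula with $|N|^2$ read as $k(Q)^2$.

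\textbf{Main obstacle: integrability in step 6.} The variable $1/M$ may fail to be integrable, so $\operatorname{Var}(1/M\mid Q)$ need not be defined in the usual sense. I would handle this by conditioning first.
- On $\{k(Q)>0\}$, we have $1/M=R/k(Q)$ with $R$ bounded.
- So $1/M$ is conditionally square-integrable given $Q$, and the conditional variance can be defined via a regular conditional distribution.
- The pull-out rule then applies conditionally.

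I would state this convention explicitly and otherwise treat the remaining steps as routine.
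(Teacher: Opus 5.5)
Your proposal is correct and follows the paper's proof essentially step for step: both rest on the pointwise identity $C=1-|N|/M$, both invert it in part 2 using that $|N|$ is a function of $Q$ (your $1/M=R/k(Q)$ step is the paper's $M=a(Q)/(1-g(Q))$ in another form), and both pull the $\sigma(Q)$-measurable factor $|N|^2$ out of the conditional variance. Your remark that $1/M$ need not be unconditionally integrable, but is bounded by $1/k(Q)$ given $Q$ so that $\operatorname{Var}(1/M\mid Q)$ is still well defined, is a genuine point of care that the paper's proof passes over silently.
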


\begin{proof}
The conflict identity gives
\[
C=1-\frac{|N|}{M}.
\]
Therefore $C$ is a function of $Q$ if and only if $|N|/M$ is a function of
$Q$, and subtracting from one does not change conditional variance. This proves
the first claim.

For the second claim, write $|N|=a(Q)$ for some measurable function $a$. If
$M=h(Q)$, then $C=1-a(Q)/h(Q)$ is determined by $Q$. Conversely, if
$C=g(Q)$, then
\[
M=\frac{a(Q)}{1-g(Q)}
\]
is determined by $Q$. The denominator is positive because $N\neq0$ implies
$C<1$. Finally, conditional on $Q$, the value $|N|$ is fixed, so
\[
\operatorname{Var}\!\left(\frac{|N|}{M}\,\middle|\,Q\right)
=
|N|^2\operatorname{Var}\!\left(\frac{1}{M}\,\middle|\,Q\right).
\]
\end{proof}

\begin{corollary}[Binary logistic confidence]
For a binary logistic classifier with decision-aligned evidence,
\[
Q=\{1+\exp(-|N|)\}^{-1}
\]
is one-to-one in $|N|$. Therefore conflict is determined by reported
confidence if and only if evidence mass is determined by reported confidence.
\end{corollary}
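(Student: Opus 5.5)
The plan is to reduce the corollary to part~2 of the identification boundary theorem. The work is to verify that theorem's hypotheses for the logistic confidence score.

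First I will make the setting precise. For a binary logistic classifier, the score $f(x)$ is the logit and the natural threshold is $\tau=0$. With decision-aligned evidence, $N(x)=f(x)-\tau=f(x)$. The reported confidence is the predicted probability of the reported class, $\max\{\sigma(f),1-\sigma(f)\}$ with $\sigma(t)=\{1+\exp(-t)\}^{-1}$. Since $1-\sigma(t)=\sigma(-t)$, this equals $\sigma(|f|)=\sigma(|N|)$, which is the displayed formula for $Q$.

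Second, I will show that $t\mapsto\{1+\exp(-t)\}^{-1}$ is strictly increasing and continuous on $[0,\infty)$, with range $[1/2,1)$. Its inverse on that range is $q\mapsto\log\{q/(1-q)\}$, which is measurable. Hence $|N|=\log\{Q/(1-Q)\}$ is a measurable function of $Q$. Conversely, $Q$ is a function of $|N|$, so $Q$ and $|N|$ generate the same $\sigma$-field. This is the one-to-one claim, and it gives the hypothesis ``$|N|$ is determined by $Q$'' in part~2 of the theorem.

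Third, I will invoke the standing assumptions of the theorem: $\varepsilon=0$, $M>0$ almost surely, and $N\neq0$ almost surely. The last condition is automatic whenever the logit is almost surely nonzero; I would state it as an assumption carried over from the theorem rather than prove it. With these in place, part~2 yields directly that $C$ is determined by $Q$ if and only if $M$ is determined by $Q$, and it gives the variance formula $|N|^2\operatorname{Var}(1/M\mid Q)$ as a bonus.

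The main obstacle is the bookkeeping that identifies $N$ with the logit. The decision-aligned convention may require appending the offset $\mu_0-\tau$ as an extra coordinate, and that coordinate then enters $M$. This is harmless, because the theorem concerns whatever $M$ the decision-aligned decomposition produces. I would also note the edge case $N=0$, where $Q=1/2$ and $C=1$. There conflict is determined trivially, but $M$ is not recoverable from $Q$, which is why the almost-sure condition $N\neq0$ is needed.
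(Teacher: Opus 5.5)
Your proposal is correct and matches the paper's proof: strict monotonicity of the logistic map on $[0,\infty)$ makes $|N|$ a function of $Q$, and part~2 of the identification theorem then applies directly. The details you add are sound refinements of that same argument, namely the measurable inverse $q\mapsto\log\{q/(1-q)\}$ and the explicit carry-over of the condition $N\neq0$ almost surely (which the paper's one-line proof leaves implicit).
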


\begin{proof}
The logistic confidence map is strictly increasing on $|N|\geq0$, so $|N|$ is
determined by $Q$. Apply the second part of the theorem.
\end{proof}

The theorem gives a sharp answer to the identification objection without
requiring a special baseline. For any reported confidence, it identifies the
unresolved quantity as normalized net evidence. For a decision-aligned
logistic decomposition, this simplifies further: conflict can differ within a
confidence level only when total evidence mass differs within that level. The
conditional variance measures exactly how much evidence structure remains
hidden after confidence is known.

\begin{theorem}[Exact conditional value for error-risk prediction]
Let $L$ be any square-integrable loss, such as the binary indicator that a classifier is wrong. Let $Z$ collect the information already used by an audit model, for example confidence and attribution entropy. Define
\[
\mu_0(Z)=\E(L\mid Z),
\qquad
\mu_1(Z,C)=\E(L\mid Z,C).
\]
Under squared-error risk, the best possible improvement from adding conflict is
\[
\E\{(L-\mu_0)^2\}
-
\E\{(L-\mu_1)^2\}
=
\E\{(\mu_1-\mu_0)^2\}
=
\E\!\left[\operatorname{Var}\{\mu_1(Z,C)\mid Z\}\right].
\]
The improvement is positive if and only if mean loss changes with conflict after $Z$ is known.
\end{theorem}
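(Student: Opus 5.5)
The plan is to prove the identity with the standard orthogonal-decomposition (Pythagorean) argument for conditional expectations in $L^2$. Throughout, we treat $\mu_0=\mu_0(Z)$ and $\mu_1=\mu_1(Z,C)$ as random variables. Square-integrability of $L$ ensures that both are in $L^2$, by conditional Jensen, and that every expectation below is finite.

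The first step is to expand
\[
L-\mu_0=(L-\mu_1)+(\mu_1-\mu_0),
\]
so that
\[
\E\{(L-\mu_0)^2\}=\E\{(L-\mu_1)^2\}+\E\{(\mu_1-\mu_0)^2\}+2\,\E\{(L-\mu_1)(\mu_1-\mu_0)\}.
\]
The cross term vanishes because $\mu_1-\mu_0$ is $\sigma(Z,C)$-measurable. Conditioning on $(Z,C)$ gives
\[
\E\{(L-\mu_1)(\mu_1-\mu_0)\mid Z,C\}=(\mu_1-\mu_0)\,\E(L-\mu_1\mid Z,C)=0.
\]
Using the tower property then yields the first equality. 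The pull-out property is justified since the product of two $L^2$ variables is integrable.

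The second step identifies $\E\{(\mu_1-\mu_0)^2\}$ with the expected conditional variance. By the tower property,
\[
\E(\mu_1\mid Z)=\E\{\E(L\mid Z,C)\mid Z\}=\E(L\mid Z)=\mu_0,
\]
so $\mu_0$ is the conditional mean of $\mu_1$ given $Z$. Hence
\[
\E\{(\mu_1-\mu_0)^2\mid Z\}=\operatorname{Var}\{\mu_1(Z,C)\mid Z\}.
\]
Taking expectations gives the second equality.

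For the final claim, the improvement equals $\E\{(\mu_1-\mu_0)^2\}$. This is nonnegative, and it is zero if and only if $\mu_1=\mu_0$ almost surely, that is, $\E(L\mid Z,C)=\E(L\mid Z)$ a.s. That condition is exactly what ``mean loss does not change with conflict once $Z$ is known'' means. The only point needing care, and the main one I expect to state explicitly, is this interpretation. It should be phrased as an almost-sure statement about conditional expectations (versions), not as conditional independence of $L$ and $C$ given $Z$. Otherwise the argument is routine Hilbert-space projection.
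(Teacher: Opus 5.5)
Your proposal is correct and follows essentially the same route as the paper. It splits $L-\mu_0=(L-\mu_1)+(\mu_1-\mu_0)$, kills the cross term with $\E(L-\mu_1\mid Z,C)=0$, and then uses the tower-property identity $\E(\mu_1\mid Z)=\mu_0$ to get the conditional-variance form; your integrability remarks and the explicit argument that the improvement vanishes exactly when $\mu_1=\mu_0$ almost surely fill in details the paper's proof leaves implicit.
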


\begin{proof}
Write
\[
L-\mu_0=(L-\mu_1)+(\mu_1-\mu_0).
\]
After squaring and taking expectations, the cross term is zero because
\[
\E\{L-\mu_1\mid Z,C\}=0.
\]
This gives the first equality. Also,
\[
\E(\mu_1\mid Z)=\E\{\E(L\mid Z,C)\mid Z\}=\E(L\mid Z)=\mu_0,
\]
so the second equality is the conditional-variance identity.
\end{proof}

This result separates two claims that are easy to mix up. The first theorem
says when conflict contains structure not identified by confidence under a
decision-aligned decomposition. The second says when that structure is useful
for predicting loss. Variation in evidence mass is necessary for conflict to
vary within confidence under these assumptions, but it is not enough to
guarantee better error ranking. The loss must also change with that remaining
variation. This is why the paper tests incremental value and review direction
separately.

\subsection{Identification stress test}

We checked both the positive and negative sides of these results in a controlled simulation. Confidence takes twelve discrete levels, so the base error-risk model can estimate a separate error rate at every confidence level. In the identified-mass scenario, evidence mass is fixed within each level. Conflict is then a deterministic function of confidence, and error depends only on confidence. In the hidden-mass scenario, evidence mass varies within each level and error also depends on conflict. We repeat each scenario 50 times with 6,000 cases and use five-fold cross-fitting.

\begin{figure}[!htbp]
\centering
\includegraphics[width=0.96\textwidth]{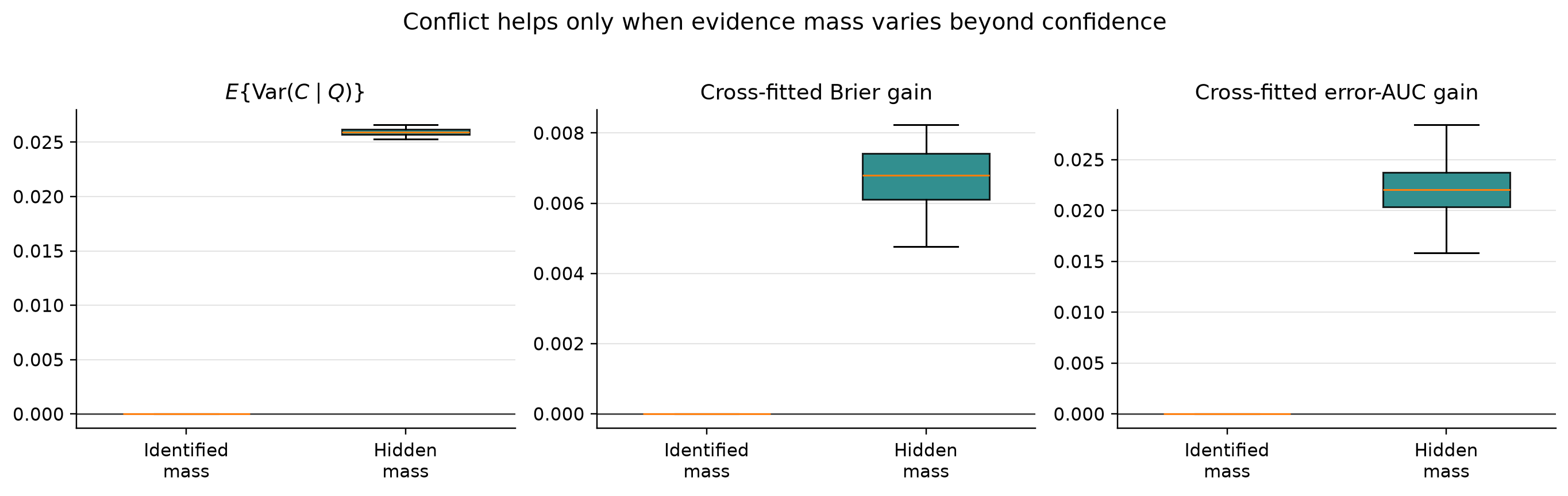}
\caption{Stress test of the identification boundary. When evidence mass is determined by confidence, conditional conflict variance is zero and adding conflict gives essentially no cross-fitted Brier or error-AUC gain. When evidence mass varies within confidence levels and loss changes with conflict, both gains become positive.}
\label{fig:identification-stress}
\end{figure}

Figure~\ref{fig:identification-stress} shows both scenarios. The negative control behaves as the theorem predicts. Its average conditional conflict variance is exactly $0$ up to numerical precision. Its mean cross-fitted Brier gain is $-0.0000003$, and its mean error-AUC change is $-0.000002$. In the hidden-mass scenario, average conditional conflict variance is $0.0260$, mean Brier gain is $0.00675$, and mean error-AUC gain is $0.0221$. Brier gain is positive in all 50 repetitions. The purpose is not to claim these synthetic effect sizes are universal. It is to show that SEF gives no artificial gain when confidence already identifies conflict, while recovering conditional value when hidden evidence mass truly affects error risk.

\begin{theorem}[Conflict as distance from one-sided evidence]
Let $e(x)=(E_1(x),\ldots,E_p(x))$ and assume $M(x)>0$ and $\varepsilon=0$. Define the two one-sided evidence cones
\[
\mathcal{K}_+=\{u\in\R^p:u_j\geq 0\ \text{for all }j\},
\qquad
\mathcal{K}_-=\{u\in\R^p:u_j\leq 0\ \text{for all }j\}.
\]
Then
\[
C(x)
=
\frac{2}{M(x)}
\min\left\{
\inf_{u\in\mathcal{K}_+}\|e(x)-u\|_1,
\inf_{u\in\mathcal{K}_-}\|e(x)-u\|_1
\right\}.
\]
\end{theorem}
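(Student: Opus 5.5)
The plan is to compute each of the two $\ell_1$ projection distances explicitly and then read off the minimum. The key observation is that both $\ell_1$ distance and the cones $\mathcal{K}_\pm$ separate across coordinates. Since $\mathcal{K}_+$ is a product of half-lines $[0,\infty)$, we have
\[
\inf_{u\in\mathcal{K}_+}\|e(x)-u\|_1=\sum_{j=1}^p \inf_{u_j\ge 0}|E_j(x)-u_j|,
\]
and similarly for $\mathcal{K}_-$ with half-lines $(-\infty,0]$. So the multivariate minimization reduces to $p$ independent scalar problems.

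I would then solve each scalar problem directly. The distance from a real number $t$ to $[0,\infty)$ is $\max\{-t,0\}=t^-$, attained at $u_j=\max\{t,0\}$. The distance from $t$ to $(-\infty,0]$ is $t^+$, attained at $u_j=\min\{t,0\}$. Summing over coordinates gives
\[
\inf_{u\in\mathcal{K}_+}\|e(x)-u\|_1=S^-(x),\qquad \inf_{u\in\mathcal{K}_-}\|e(x)-u\|_1=S^+(x).
\]
Both infima are attained, at the coordinatewise positive and negative parts of $e(x)$ respectively.

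Finally, the minimum of the two distances is $\min\{S^+(x),S^-(x)\}$. Multiplying by $2/M(x)$, with $M(x)=S^+(x)+S^-(x)>0$ and $\varepsilon=0$, gives exactly the definition of $C(x)$. This step does not need the conflict-direction identity, though it is consistent with it.

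None of these steps is a real obstacle. The only point that needs care is justifying that the infimum over a product set of a separable sum equals the sum of the coordinatewise infima. For the inequality $\ge$, bound the sum below term by term, since each term is at least its own infimum. For $\le$, exhibit the explicit minimizer $u=(E_j^+(x))_j\in\mathcal{K}_+$, and likewise $u=(-E_j^-(x))_j\in\mathcal{K}_-$. Showing the attaining point makes the argument self-contained.
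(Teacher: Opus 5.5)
Your proposal is correct and follows essentially the same route as the paper: the nearest point in $\mathcal{K}_+$ zeroes the negative coordinates, giving distance $S^-(x)$; symmetrically the distance to $\mathcal{K}_-$ is $S^+(x)$; multiplying the minimum by $2/M(x)$ recovers $C(x)$. Your explicit justification of the coordinatewise separability, which gives a lower bound term by term and exhibits attaining points $(E_j^+(x))_j$ and $(-E_j^-(x))_j$, fills in the step the paper simply asserts.
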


\begin{proof}
The closest point to $e(x)$ in $\mathcal{K}_+$ is obtained by replacing every negative evidence term by zero and leaving every nonnegative term unchanged. The resulting $\ell_1$ distance is $S^-(x)$. Similarly, the closest point to $e(x)$ in $\mathcal{K}_-$ has distance $S^+(x)$. The smaller of the two distances is therefore $\min\{S^+(x),S^-(x)\}$. Dividing twice this distance by $M(x)$ gives $C(x)$.
\end{proof}

This result gives conflict a geometric meaning. Low conflict means the signed evidence vector is close to one of the two one-sided cones. High conflict means the evidence vector is far from both cones, so the prediction is built from substantial evidence on both sides.

\section{Decision Fragility and Flip Margin}

Suppose the evidence is decision-aligned, so the decision is based on the sign
of net evidence:
\[
\widehat{y}(x)=\one\{N(x)\geq 0\}.
\]
If evidence is highly conflicted, a small change in signed evidence can flip the decision.

\begin{definition}[Evidence flip margin]
The evidence flip margin is the smallest total amount of signed evidence that must move from the winning side to the losing side in order to make the net evidence zero. When $M(x)>0$ and $\varepsilon=0$, this margin is
\[
m_{\mathrm{flip}}(x)=\frac{|S^+(x)-S^-(x)|}{2}.
\]
Here ``move'' means transferring the same amount from one side to the other. A transfer of size $u$ lowers the winning total by $u$ and raises the losing total by $u$, so it closes the evidence gap by $2u$.
\end{definition}

\begin{theorem}[Conflict controls flip margin]
Assume $M(x)>0$ and $\varepsilon=0$. The evidence flip margin is
\[
m_{\mathrm{flip}}(x)
=\frac{|S^+(x)-S^-(x)|}{2}
=\frac{M(x)\{1-C(x)\}}{2}.
\]
\end{theorem}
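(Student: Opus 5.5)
The plan has two parts. First, I will justify the first equality directly from the definition of the flip margin as the minimal transfer. Second, I will convert it into the conflict form using the conflict-direction identity.

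For the first equality, write $a=S^+(x)$ and $b=S^-(x)$, and assume without loss of generality that $a\geq b$ (the other case is symmetric). A transfer of size $u\geq 0$ from the winning side to the losing side produces totals $a-u$ and $b+u$, so the new net evidence is $a-b-2u$. This quantity is strictly decreasing in $u$. It vanishes exactly at $u^\ast=(a-b)/2$, and it stays positive for every $u<u^\ast$. The transfer is feasible because $u^\ast\leq a$: when $b\geq 0$ we have $(a-b)/2\leq a$. Hence the smallest transfer that makes the net evidence zero is $u^\ast=|S^+(x)-S^-(x)|/2$, which matches the formula recorded in the definition.

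For the second equality, I will invoke the conflict-direction identity. Since $\varepsilon=0$ and $M(x)>0$, it gives $1-C(x)=|D(x)|=|S^+(x)-S^-(x)|/M(x)$. Multiplying by $M(x)/2$ yields $M(x)\{1-C(x)\}/2=|S^+(x)-S^-(x)|/2$. As a check, the same conclusion follows from the identity $a+b=|a-b|+2\min\{a,b\}$ used in that proposition's proof.

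There is no real obstacle here. The only point requiring care is interpretive. The definition already asserts the closed form, so the substantive content is showing that this closed form really is the minimal transfer. That comes from the monotonicity argument above, together with the observation that transfers in the opposite direction only widen the gap. With that settled, the rest is the one-line algebra of the conflict-direction identity.
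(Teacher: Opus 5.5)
Your proposal is correct and follows essentially the same argument as the paper: a transfer of size $u$ closes the gap at rate $2u$, giving the minimal transfer $|S^+(x)-S^-(x)|/2$, and then the conflict-direction identity $|S^+-S^-|=M(1-C)$ gives the second form. Your feasibility check $u^\ast\leq S^+(x)$ and your remark that reverse transfers only widen the gap are small additions that the paper's proof leaves implicit.
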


\begin{proof}
If $S^+(x)\geq S^-(x)$, the positive side wins by gap $S^+(x)-S^-(x)$. Moving an amount $u$ of evidence from the positive side to the negative side reduces the gap by $2u$. The smallest value that closes the gap is
\[
u=\frac{S^+(x)-S^-(x)}{2}.
\]
The case $S^-(x)>S^+(x)$ is the same with signs reversed. Therefore
\[
m_{\mathrm{flip}}(x)=\frac{|S^+(x)-S^-(x)|}{2}.
\]
Using $|S^+-S^-|=M(1-C)$ from the previous identity gives the second equality.
\end{proof}

This theorem gives SEF its main mathematical message:

\begin{quote}
High conflict means low evidence margin, and low evidence margin means the decision is easier to flip.
\end{quote}

This statement is related in spirit to margin-based learning, where larger margins often correspond to more stable or more reliable decisions \citep{schapire1998boosting}. The difference is that the SEF margin is not a geometric classifier margin. It is an evidence margin computed from signed support and opposition inside one prediction.

\begin{theorem}[Perturbation radius for a stable evidence decision]
Let $e'(x)$ be another signed evidence vector for the same case, with net evidence $N'(x)=\sum_{j=1}^p e'_j(x)$. If
\[
\|e'(x)-e(x)\|_1 < |N(x)|,
\]
then $N'(x)$ has the same sign as $N(x)$. Equivalently, using the SEF conflict score,
\[
\|e'(x)-e(x)\|_1 < M(x)\{1-C(x)\}
\]
is enough to prevent a sign flip.
\end{theorem}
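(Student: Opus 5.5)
The plan is a one-line Lipschitz argument for the net-evidence map, followed by a translation into conflict language using the conflict-direction identity. The net evidence is the linear functional $N(e)=\sum_j e_j=\langle \one, e\rangle$. Since $\|\one\|_\infty=1$, Hölder's inequality gives
\[
|N'(x)-N(x)|=\Bigl|\sum_{j=1}^p \{e'_j(x)-e_j(x)\}\Bigr|\le \|e'(x)-e(x)\|_1 .
\]
So the hypothesis $\|e'(x)-e(x)\|_1<|N(x)|$ gives $|N'(x)-N(x)|<|N(x)|$.

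The sign conclusion then follows directly. The hypothesis forces $|N(x)|>0$, so $N(x)\neq 0$. If $N(x)>0$, then $N'(x)>N(x)-|N(x)|=0$. If $N(x)<0$, then $N'(x)<N(x)+|N(x)|=0$. Either way $N'(x)$ is nonzero and has the same strict sign as $N(x)$. This also keeps the decision rule $\one\{N\ge 0\}$ unchanged, with no boundary ambiguity.

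For the equivalent form I would assume, as in the preceding theorems, that $\varepsilon=0$ and $M(x)>0$. The conflict-direction identity then gives $|N(x)|=M(x)|D(x)|=M(x)\{1-C(x)\}$, so the two displayed conditions coincide. Equivalently, by the flip-margin theorem the radius is $2m_{\mathrm{flip}}(x)$. If $M(x)=0$, the hypothesis $\|e'-e\|_1<|N(x)|=0$ is vacuous, and there is nothing to prove.

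The only point needing care is the implicit assumptions behind the second formulation ($\varepsilon=0$, $M>0$). I would state these explicitly rather than treat them as a genuine obstacle. One could also remark that the radius is sharp: moving $|N(x)|$ units of mass onto a single coordinate against the winning side makes $N'=0$. I would not grind this out.
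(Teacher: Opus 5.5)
Your proposal is correct and matches the paper's proof essentially step for step: the triangle inequality (your H\"older bound) gives $|N'(x)-N(x)|\le\|e'(x)-e(x)\|_1<|N(x)|$, so $N'(x)$ cannot cross zero, and the conflict-direction identity $|N(x)|=M(x)\{1-C(x)\}$ at $\varepsilon=0$ gives the second form. Stating the assumptions $\varepsilon=0$ and $M(x)>0$ explicitly is worthwhile. For $\varepsilon>0$ one has $M(x)\{1-C(x)\}-|N(x)|=2\min\{S^+(x),S^-(x)\}\,\varepsilon/\{M(x)+\varepsilon\}\ge 0$, so the second condition would then be slightly weaker than the first and would no longer guarantee that the sign is preserved.
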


\begin{proof}
By the triangle inequality,
\[
|N'(x)-N(x)|
=
\left|\sum_{j=1}^p \{e'_j(x)-e_j(x)\}\right|
\leq
\sum_{j=1}^p |e'_j(x)-e_j(x)|
=
\|e'(x)-e(x)\|_1.
\]
If this quantity is smaller than $|N(x)|$, the perturbed net evidence cannot cross zero, so the decision sign cannot change. Since $|N(x)|=M(x)\{1-C(x)\}$ when $\varepsilon=0$, the second statement follows.
\end{proof}

This theorem is useful because it turns conflict into a robustness radius. At fixed evidence mass, lower conflict gives a larger radius of harmless evidence perturbations. Higher conflict gives a smaller radius, so less perturbation is needed before the decision can change.

\section{Multi-Class Signed Evidence Flow}

The binary construction extends naturally to multi-class prediction. Suppose a fitted model gives class scores $f_1(x),\ldots,f_K(x)$ and predicts
\[
\widehat{k}(x)=\arg\max_{1\leq k\leq K} f_k(x).
\]
For the predicted class $\widehat{k}$ and a competing class $\ell\neq\widehat{k}$, define the pairwise score gap
\[
G_{\widehat{k},\ell}(x)=f_{\widehat{k}}(x)-f_\ell(x).
\]
Assume this gap has signed evidence terms
\[
G_{\widehat{k},\ell}(x)
=
b_{\widehat{k},\ell}
+
\sum_{j=1}^p E_j^{(\widehat{k},\ell)}(x),
\]
where the baseline term may be treated as an additional evidence coordinate. Applying the binary SEF definitions to each pair gives pairwise evidence mass $M_{\widehat{k},\ell}(x)$ and pairwise conflict $C_{\widehat{k},\ell}(x)$.

\begin{definition}[Multi-class conflict and robustness radius]
Define
\[
C_{\mathrm{multi}}(x)
=
\max_{\ell\neq\widehat{k}}
C_{\widehat{k},\ell}(x)
\]
and
\[
\rho_{\mathrm{multi}}(x)
=
\min_{\ell\neq\widehat{k}}
M_{\widehat{k},\ell}(x)
\{1-C_{\widehat{k},\ell}(x)\}.
\]
Thus, multi-class conflict records the most conflicted comparison with a rival class, while the robustness radius records the smallest pairwise evidence gap.
\end{definition}

\begin{theorem}[Multi-class evidence robustness]
Suppose every pairwise evidence gap for the predicted class is positive. Let $e^{(\widehat{k},\ell)}(x)$ and $e'^{(\widehat{k},\ell)}(x)$ be the original and perturbed evidence vectors for competitor $\ell$. If
\[
\max_{\ell\neq\widehat{k}}
\left\|
e'^{(\widehat{k},\ell)}(x)
-
e^{(\widehat{k},\ell)}(x)
\right\|_1
<
\rho_{\mathrm{multi}}(x),
\]
then the predicted class remains $\widehat{k}$.
\end{theorem}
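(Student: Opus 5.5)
The plan is to reduce the multi-class claim to the binary perturbation-radius theorem, applied separately to each rival class $\ell\neq\widehat{k}$, and then combine the pairwise conclusions through the definition of $\widehat{k}$ as an argmax.

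First I will fix a competitor $\ell\neq\widehat{k}$. The pairwise gap $G_{\widehat{k},\ell}(x)$ has a complete signed decomposition once the baseline term $b_{\widehat{k},\ell}$ is absorbed as an extra evidence coordinate. So $G_{\widehat{k},\ell}(x)=N_{\widehat{k},\ell}(x)$, the pairwise net evidence. This is exactly the decision-aligned convention with threshold $0$. By hypothesis this gap is positive, so $M_{\widehat{k},\ell}(x)\ge |N_{\widehat{k},\ell}(x)|>0$, and the binary identities hold with $\varepsilon=0$. The conflict-direction identity then gives
\[
|N_{\widehat{k},\ell}(x)| = M_{\widehat{k},\ell}(x)\{1-C_{\widehat{k},\ell}(x)\}.
\]

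Next I will chain the inequalities. By definition of $\rho_{\mathrm{multi}}$ as a minimum over rivals, and by the hypothesis on the maximum perturbation,
\[
\bigl\|e'^{(\widehat{k},\ell)}(x)-e^{(\widehat{k},\ell)}(x)\bigr\|_1 \le \max_{\ell'\neq\widehat{k}}\bigl\|e'^{(\widehat{k},\ell')}(x)-e^{(\widehat{k},\ell')}(x)\bigr\|_1 < \rho_{\mathrm{multi}}(x) \le M_{\widehat{k},\ell}(x)\{1-C_{\widehat{k},\ell}(x)\} = |N_{\widehat{k},\ell}(x)|.
\]
The binary perturbation-radius theorem then says the perturbed net evidence $N'_{\widehat{k},\ell}(x)$ has the same sign as $N_{\widehat{k},\ell}(x)$, so it is strictly positive. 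This is really just the triangle inequality: $N' \ge N-\|e'-e\|_1>0$.

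Finally, I read $N'_{\widehat{k},\ell}(x)$ as the perturbed gap $f'_{\widehat{k}}(x)-f'_\ell(x)$. Positivity for every $\ell\neq\widehat{k}$ means $\widehat{k}$ is the strict argmax of the perturbed scores, so the prediction is unchanged.

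The main obstacle is interpretive rather than computational. The argument needs the perturbed pairwise evidence vectors to be decompositions of the perturbed score gaps, with the baseline coordinate included. Otherwise a change in "evidence" says nothing about the scores. A related subtlety is that for $K>2$ the pairwise vectors are not free: they must be jointly consistent, coming from a single set of perturbed class scores. I will state this as the interpretation of the hypothesis, exactly as in the binary theorem, where $N'$ is defined as the sum of $e'$. The proof itself never uses the consistency, because it works pairwise. With that convention fixed, the rest is the one-line triangle inequality.
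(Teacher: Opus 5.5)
Your proposal is correct and is essentially the paper's proof: apply the binary perturbation-radius theorem to each rival $\ell$, note that the hypothesized maximum perturbation lies below $\rho_{\mathrm{multi}}(x)\le M_{\widehat{k},\ell}(x)\{1-C_{\widehat{k},\ell}(x)\}$, and conclude that every perturbed pairwise gap stays positive, so $\widehat{k}$ remains the argmax. The paper's short proof uses implicitly the conventions you make explicit: the baseline is absorbed as an evidence coordinate, $\varepsilon=0$ so that $M\{1-C\}=|N|$, and the perturbed vectors decompose the perturbed gaps. The $\varepsilon=0$ convention is genuinely needed, since for $\varepsilon>0$ one only has $M\{1-C\}\ge|N|$.
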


\begin{proof}
For every competitor $\ell$, the binary perturbation theorem shows that the pairwise gap remains positive whenever the pairwise evidence perturbation is smaller than
\[
M_{\widehat{k},\ell}(x)
\{1-C_{\widehat{k},\ell}(x)\}.
\]
The assumed bound is smaller than the minimum of these pairwise radii. Therefore $f_{\widehat{k}}(x)>f_\ell(x)$ remains true for every $\ell\neq\widehat{k}$, so the predicted class cannot change.
\end{proof}

This extension keeps the original interpretation. A multi-class decision is fragile when at least one important class comparison is built from strongly competing evidence or has a small pairwise evidence gap.

\section{Perturbation Stability}

Large evidence is not enough. A feature may look important only because of sampling noise, missingness, a fragile preprocessing choice, or a small shift in the data. SEF therefore measures whether evidence survives perturbation.

Let $T_b$ be a perturbation operator. Examples include bootstrap resampling, feature-noise perturbation, missingness perturbation, subgroup perturbation, or model refitting. Let
\[
E_j^{(b)}(x)
\]
be the evidence term recomputed under perturbation $b$.

\begin{definition}[Sign stability]
The sign stability of feature $j$ at $x$ is
\[
\operatorname{Stab}_j(x)
=
\Pp_b\left\{
\sign(E_j^{(b)}(x))=\sign(E_j(x))
\right\}.
\]
\end{definition}

When $\operatorname{Stab}_j(x)$ is high, feature $j$ keeps the same evidence direction under perturbation. When it is low, the feature's role is unstable.

\begin{definition}[Evidence-weighted stability]
The overall evidence stability is
\[
\operatorname{Stab}(x)
=
\frac{\sum_{j=1}^p |E_j(x)|\operatorname{Stab}_j(x)}
{\sum_{j=1}^p |E_j(x)|+\varepsilon}.
\]
\end{definition}

This definition gives more weight to features that carry more evidence.

\section{Finite-Sample Stability Bounds}

In practice, stability is estimated using $B$ perturbations:
\[
\widehat{\operatorname{Stab}}_j(x)
=
\frac{1}{B}
\sum_{b=1}^B
\one\{\sign(E_j^{(b)}(x))=\sign(E_j(x))\}.
\]

\begin{theorem}[Feature-level stability concentration]
For fixed $x$ and feature $j$, suppose the $B$ perturbation indicators are independent conditional on the fitted procedure and the audited point. Then, for any $t>0$,
\[
\Pp\left(
\left|
\widehat{\operatorname{Stab}}_j(x)
-
\operatorname{Stab}_j(x)
\right|>t
\right)
\leq
2\exp(-2Bt^2).
\]
\end{theorem}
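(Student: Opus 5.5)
This is a direct application of Hoeffding's inequality to bounded i.i.d.\ indicators, once we pin down what $\operatorname{Stab}_j(x)$ is the mean of. Condition throughout on the fitted procedure and the audited point $x$. Then $E_j(x)$, and hence $\sign(E_j(x))$, is a fixed quantity. Define the indicators
\[
I_b=\one\{\sign(E_j^{(b)}(x))=\sign(E_j(x))\},\qquad b=1,\ldots,B,
\]
so that $\widehat{\operatorname{Stab}}_j(x)=B^{-1}\sum_{b=1}^B I_b$.

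The first step is to check that each $I_b$ has conditional mean $\operatorname{Stab}_j(x)$. By the definition of sign stability, $\operatorname{Stab}_j(x)=\Pp_b\{\sign(E_j^{(b)}(x))=\sign(E_j(x))\}$, where $\Pp_b$ is the law of a single perturbation draw. I will read the assumption as saying the $B$ draws are independent copies from this perturbation law, so the indicators are identically distributed as well as independent. Then $\E(I_b\mid\text{fit},x)=\operatorname{Stab}_j(x)$ for every $b$, and $\widehat{\operatorname{Stab}}_j(x)$ is a conditionally unbiased estimator.

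The second step applies Hoeffding's inequality. Each $I_b$ takes values in $[0,1]$, so the range width is $1$, and the two-sided bound for the mean of $B$ independent $[0,1]$-valued variables gives
\[
\Pp\Bigl(\bigl|\widehat{\operatorname{Stab}}_j(x)-\operatorname{Stab}_j(x)\bigr|>t \,\Bigm|\, \text{fit},x\Bigr)\le 2\exp(-2Bt^2).
\]
The exponent is $2B^2t^2/\sum_b(1-0)^2=2Bt^2$. This gives the result conditionally. If an unconditional statement is wanted, I will take expectations over the conditioning variables; the bound does not depend on them, so it survives unchanged.

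I expect no real analytic obstacle. The only point needing care is the first step: the statement says only ``independent'', and Hoeffding's inequality in this form needs the indicators to share the common mean $\operatorname{Stab}_j(x)$, which is what identical distribution supplies. So the proof should state explicitly that the draws are i.i.d.\ from the perturbation law defining $\Pp_b$. A second, minor point is ties: if $E_j(x)=0$ or some $E_j^{(b)}(x)=0$, a sign convention is needed, but any fixed convention keeps $I_b$ a $\{0,1\}$-valued variable, and the argument is unaffected.
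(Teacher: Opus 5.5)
Your proof is correct and takes the same route as the paper. The paper also treats each sign-agreement indicator as a Bernoulli variable with mean $\operatorname{Stab}_j(x)$ and applies Hoeffding's inequality; your explicit remark that the draws must share this common mean (identically distributed from the perturbation law) states openly what the paper assumes implicitly.
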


\begin{proof}
For fixed $x$ and $j$, each agreement indicator is a Bernoulli random variable with mean $\operatorname{Stab}_j(x)$. The result follows from Hoeffding's inequality.
\end{proof}

\begin{theorem}[Uniform stability concentration]
For fixed $x$, suppose the perturbation indicators are independent conditional on the fitted procedure and the audited point. Then, with probability at least $1-\delta$,
\[
\max_{1\leq j\leq p}
\left|
\widehat{\operatorname{Stab}}_j(x)
-
\operatorname{Stab}_j(x)
\right|
\leq
\sqrt{\frac{\log(2p/\delta)}{2B}}.
\]
\end{theorem}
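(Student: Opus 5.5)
The plan is to combine the feature-level concentration result (the preceding theorem) with a union bound over the $p$ features. Fix $x$ and condition throughout on the fitted procedure and the audited point. Then each $\operatorname{Stab}_j(x)$ is a fixed number, and each $\widehat{\operatorname{Stab}}_j(x)$ is an average of $B$ independent Bernoulli indicators with that mean.

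The first step is to set the target radius
\[
t_\delta=\sqrt{\frac{\log(2p/\delta)}{2B}}.
\]
For each $j$, the feature-level theorem gives $\Pp(|\widehat{\operatorname{Stab}}_j(x)-\operatorname{Stab}_j(x)|>t_\delta)\leq 2\exp(-2Bt_\delta^2)$. Substituting $t_\delta$ makes the exponent $-\log(2p/\delta)$, so each such probability is at most $\delta/p$.

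The second step is to observe that the event that the maximum deviation exceeds $t_\delta$ is the union over $j=1,\ldots,p$ of the individual deviation events. Boole's inequality then bounds its probability by $p\cdot\delta/p=\delta$. Taking complements gives the stated bound with probability at least $1-\delta$.

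The only point that needs care is the dependence structure. The $p$ indicators within a single perturbation $b$ are computed from the same perturbed model, so they are typically dependent across features. The union bound requires no independence across $j$; it only needs each marginal Hoeffding bound to hold. Those marginal bounds hold because, for each fixed $j$, the indicators are independent across $b$, which is exactly the hypothesis of the feature-level theorem. So the dependence across features causes no difficulty. The resulting constant is conservative, since it ignores any correlation that could tighten the bound, but that is all the statement asserts.
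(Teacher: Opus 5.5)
Your proposal is correct and matches the paper's proof, which applies the feature-level Hoeffding bound to each $j$ and takes a union bound over the $p$ features. Your explicit choice of $t_\delta$, and your remark that the union bound needs no independence across features, fill in details the paper leaves implicit.
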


\begin{proof}
Apply the previous theorem to each feature and take a union bound over $j=1,\ldots,p$.
\end{proof}

The Hoeffding bound is simple but does not use the fact that sign-agreement indicators often have low variance. A variance-sensitive bound can be tighter when a feature is either very stable or very unstable.

\begin{theorem}[Variance-sensitive stability concentration]
For fixed $x$ and feature $j$, let
\[
v_j(x)=\operatorname{Stab}_j(x)\{1-\operatorname{Stab}_j(x)\}.
\]
Under the same conditional independence assumption, with probability at least $1-\delta$,
\[
\left|
\widehat{\operatorname{Stab}}_j(x)
-
\operatorname{Stab}_j(x)
\right|
\leq
\sqrt{
\frac{2v_j(x)\log(2/\delta)}{B}
}
+
\frac{2\log(2/\delta)}{3B}.
\]
The same statement holds uniformly over $p$ features after replacing $\delta$ by $\delta/p$.
\end{theorem}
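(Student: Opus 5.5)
The plan is to apply Bernstein's inequality to the $B$ sign-agreement indicators, in the same way the feature-level concentration theorem applied Hoeffding's inequality. For fixed $x$ and $j$, write
\[
I_b=\one\{\sign(E_j^{(b)}(x))=\sign(E_j(x))\},\qquad b=1,\ldots,B.
\]
Conditional on the fitted procedure and the audited point, these are independent Bernoulli variables with mean $\operatorname{Stab}_j(x)$ and variance $v_j(x)$. Define the centered variables $\xi_b=I_b-\operatorname{Stab}_j(x)$. They have mean zero and variance $v_j(x)$, and they are bounded by $|\xi_b|\le 1$. Since $\widehat{\operatorname{Stab}}_j(x)-\operatorname{Stab}_j(x)=B^{-1}\sum_b \xi_b$, the problem reduces to a standard sum of bounded, independent, centered variables.

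Next I will invoke the two-sided Bernstein inequality with range bound $c=1$:
\[
\Pp\Bigl(\Bigl|\tfrac1B\textstyle\sum_b\xi_b\Bigr|>t\Bigr)\le 2\exp\!\Bigl(-\frac{Bt^2}{2v_j(x)+2t/3}\Bigr).
\]
I then set the right-hand side equal to $\delta$ and write $L=\log(2/\delta)$. This gives the quadratic equation $Bt^2-(2L/3)t-2v_j(x)L=0$ in $t$. Its positive root is
\[
t^*=\frac{L}{3B}+\sqrt{\frac{L^2}{9B^2}+\frac{2v_j(x)L}{B}}.
\]
Using $\sqrt{a+b}\le\sqrt a+\sqrt b$, I bound
\[
t^*\le \sqrt{\frac{2v_j(x)L}{B}}+\frac{2L}{3B}.
\]
The left side of Bernstein's bound is decreasing in $t$, so the deviation exceeds this larger threshold with probability at most $\delta$. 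That is the stated single-feature bound.

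The step that needs the most care is keeping the constants consistent, because Bernstein's inequality is stated in several versions. I must use the version whose denominator is $2\sigma^2+2ct/3$, which is valid when $|\xi_b|\le c$. I must also make sure the range bound is $c=1$, which holds because $\xi_b$ takes values in $[-\operatorname{Stab}_j,1-\operatorname{Stab}_j]$. With a different convention, for example one based on the range $b-a$, the constant $2/3$ would change. The two-sided factor $2$ is what produces $\log(2/\delta)$ rather than $\log(1/\delta)$.

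For the uniform statement, I apply the single-feature bound to each $j=1,\ldots,p$ at confidence level $\delta/p$. Each feature's deviation then exceeds its own threshold, now with $\log(2p/\delta)$ in place of $\log(2/\delta)$, with probability at most $\delta/p$. A union bound over the $p$ features gives total failure probability at most $\delta$. This step uses only the marginal independence of the indicators within each feature, so no independence across features is needed. This mirrors the proof of the uniform Hoeffding theorem.
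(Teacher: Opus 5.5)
Your proposal is correct and follows the same route as the paper: a two-sided Bernstein inequality for the centered Bernoulli sign-agreement indicators (bounded by one, variance $v_j(x)$), then a union bound at level $\delta/p$ for the uniform statement. The paper only cites Bernstein without inverting it. Your explicit solution of the quadratic, followed by the step $\sqrt{a+b}\le\sqrt a+\sqrt b$, is what produces the stated constants $\sqrt{2v_j(x)\log(2/\delta)/B}+2\log(2/\delta)/(3B)$.
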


\begin{proof}
Each sign-agreement indicator is Bernoulli with variance $v_j(x)$ and absolute centered value at most one. The first claim follows from the two-sided Bernstein inequality. Applying it with failure probability $\delta/p$ for each feature and taking a union bound gives the uniform statement \citep{boucheron2013concentration}.
\end{proof}

When stability is near zero or one, $v_j(x)$ is small and the leading term can be noticeably smaller than the Hoeffding radius. The bound still depends on the unknown population variance, so in practice it is best used as a theoretical guide or with a valid empirical-variance upper bound.

\section{Reliable Evidence Score}

SEF combines evidence direction and stability into one reliability score:
\[
\operatorname{RES}(x)
=
\{1-C(x)\}\operatorname{Stab}(x).
\]
Equivalently, when $\varepsilon=0$,
\[
\operatorname{RES}(x)
=
|D(x)|\operatorname{Stab}(x).
\]

The score is high only when evidence points clearly in one direction and remains stable under perturbation. It is low when evidence is conflicted, unstable, or both.

\section{SEF-Audit and the Evidence Reliability Frontier}

The quantities above can be used for explanation, but they can also be used for action. We define SEF-Audit as a review triage rule. The goal is to decide which predictions are safe enough to accept automatically and which predictions should be reviewed.

\begin{definition}[SEF risk score]
The SEF risk score is
\[
R_{\mathrm{SEF}}(x)=1-\operatorname{RES}(x).
\]
Large values indicate high evidence conflict, low evidence stability, or both.
\end{definition}

\begin{definition}[SEF-Audit rule]
For a review budget $\beta\in[0,1]$, let $q_\beta$ be the $(1-\beta)$ quantile of $R_{\mathrm{SEF}}(X)$ in the audit population. SEF-Audit reviews a case when
\[
R_{\mathrm{SEF}}(x)>q_\beta
\]
and accepts it automatically otherwise.
\end{definition}

Thus, $\beta$ controls the fraction of cases sent to review. When $\beta=0$, no cases are reviewed. When $\beta$ grows, the rule reviews more cases with high evidence risk.

\begin{definition}[Evidence Reliability Frontier]
Let $L(x)$ be a loss indicator or bounded loss for the prediction at $x$. The Evidence Reliability Frontier is
\[
\mathcal{F}(\beta)
=
\frac{
\E\{L(X)\mathbf{1}[R_{\mathrm{SEF}}(X)\leq q_\beta]\}
}{
\Pp\{R_{\mathrm{SEF}}(X)\leq q_\beta\}
},
\]
whenever the denominator is positive.
\end{definition}

The frontier answers a practical question:

\begin{quote}
If we review the riskiest $\beta$ fraction of cases, what is the error rate among the cases we accept automatically?
\end{quote}

\begin{assumption}[Monotone evidence-risk condition]
Let
\[
\eta(r)=\E\{L(X)\mid R_{\mathrm{SEF}}(X)=r\}
\]
be the conditional prediction risk among cases with SEF risk value $r$. We say the audit population satisfies monotone evidence risk if $\eta(r)$ is nondecreasing in $r$.
\end{assumption}

This assumption is not automatic. It is exactly the condition the experiments are meant to check. When it holds, higher SEF risk means higher expected prediction loss.

\subsection{A finite-sample deployment diagnostic}

Let $(R_i,L_i)_{i=1}^m$ be a calibration sample that was not used to fit the prediction model, choose the evidence construction, or tune the review rule. Here $R_i$ is the proposed SEF review score and $L_i\in\{0,1\}$ is the observed error indicator. Let $U_i$ be the standardized rank of $R_i$, and define
\[
T_m=\frac{1}{m}\sum_{i=1}^m U_i\{L_i-\bar L\}.
\]
Positive values mean that errors tend to occur at larger review scores. We compute a one-sided permutation $p$-value by holding the scores fixed and permuting the losses.

\begin{proposition}[Finite-sample validity under the no-association null]
Suppose the calibration losses are exchangeable conditional on the observed review scores and are independent of their ordering under the null hypothesis. Then the permutation test that rejects for large $T_m$ has conditional type-I error at most $\alpha$, with randomized tie-breaking when exact level is required.
\end{proposition}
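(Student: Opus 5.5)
The proof is the standard randomization argument, carried out conditionally on the observed review scores. Condition on the multiset of scores $(R_1,\ldots,R_m)$, and hence on the standardized ranks $(U_1,\ldots,U_m)$. Under the null hypothesis, the loss vector $(L_1,\ldots,L_m)$ is exchangeable given these scores. Concretely, for every permutation $\pi$ of $\{1,\ldots,m\}$,
\[
(L_{\pi(1)},\ldots,L_{\pi(m)}) \overset{d}{=} (L_1,\ldots,L_m)
\]
conditionally. Write $T_m^{\pi}=m^{-1}\sum_i U_i\{L_{\pi(i)}-\bar L\}$. The mean $\bar L$ is permutation invariant, so $T_m^{\pi}$ is the same functional applied to permuted losses. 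Exchangeability then gives
\[
(T_m^{\pi})_{\pi} \overset{d}{=} (T_m^{\pi\circ\sigma})_{\pi}
\]
for any fixed $\sigma$. This is the group-invariance property needed for the permutation test.

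Next I define the test. Let $\Pi$ be the full group of $m!$ permutations. Reject when
\[
p=\frac{1}{m!}\#\{\pi\in\Pi: T_m^{\pi}\geq T_m\}\leq\alpha .
\]
The key step is to show that the rank of the observed statistic $T_m=T_m^{\mathrm{id}}$ among the $m!$ values $\{T_m^{\pi}\}$ is uniformly distributed, up to ties. By the invariance above, the observed loss vector is conditionally equally likely to be any element of the orbit $\{L\circ\pi\}$. So $T_m$ behaves like a uniform draw from the multiset $\{T_m^{\pi}:\pi\in\Pi\}$. The usual counting argument then gives $\Pp(p\leq\alpha\mid R)\leq\alpha$. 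For a sorted multiset $t_{(1)}\geq\cdots\geq t_{(m!)}$, the number of positions whose upper-tail count is at most $\alpha m!$ is at most $\lfloor\alpha m!\rfloor$. Averaging over the conditioning then yields unconditional validity as well.

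Ties are the only delicate point, and they are where I expect the real work to be. With binary $L_i$ and rank scores, many permutations give identical $T_m^{\pi}$, so the non-randomized test is only conservative. To get exact level, I would use the randomized version: reject with probability $\gamma$ on the boundary, where
\[
\gamma=\frac{\alpha m!-\#\{T^{\pi}>T_m\}}{\#\{T^{\pi}=T_m\}},
\]
truncated to $[0,1]$. Equivalently, break ties with independent uniform variables, which makes $\Pp(\text{reject}\mid R)=\alpha$ exactly. I would also note that the Monte Carlo version, using $B$ random permutations and $p=(1+\#\{T^{(b)}\geq T_m\})/(B+1)$, is valid by the same argument applied to the $B+1$ exchangeable statistics. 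Finally, I would stress that the guarantee relies on the calibration sample being independent of model fitting and rule tuning; otherwise the scores would not be fixed relative to the losses and exchangeability could fail.
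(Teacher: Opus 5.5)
Your proposal is correct and follows the paper's argument. You condition on the scores and the orbit of the losses, use exchangeability to make the observed $T_m$ a uniform draw from its permutation values, and conclude that the permutation $p$-value is super-uniform; your counting bound, randomized tie-breaking rule, and Monte Carlo variant fill in details the paper's three-line sketch leaves implicit (one small wording fix: condition on the score vector $(R_1,\ldots,R_m)$ rather than its multiset, since the pairing of each $U_i$ with index $i$ must be held fixed).
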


\begin{proof}
Conditional on the observed scores and the multiset of losses, every permutation of the losses is equally likely under the null. The rank of the observed statistic among its permutation values is therefore uniform up to ties. The usual permutation $p$-value is consequently super-uniform.
\end{proof}

This is a test of positive rank association, not a proof that the full conditional risk function $\eta(r)$ is monotone. It is a practical gate for the review rule. In our experiments we call the diagnostic adequately informed only when the calibration subset contains at least 20 errors, at least 20 non-errors, and at least 25 observations in each conflict quartile. These are transparent operating requirements, not universal power guarantees. If the sample is smaller, if the test does not reject, or if the estimated direction is negative, the high-conflict review rule is not activated.

\begin{theorem}[Optimal review under monotone evidence risk]
Assume monotone evidence risk and suppose $R_{\mathrm{SEF}}(X)$ has no point mass at the review threshold. Among all review rules that send a fraction $\beta$ of cases to review, SEF-Audit minimizes the expected loss among automatically accepted cases.
\end{theorem}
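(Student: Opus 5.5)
The plan is a rearrangement argument of Neyman--Pearson type. The review rules in question are the measurable, possibly randomized, rules $\phi(X)\in[0,1]$ that depend on $X$ only through $R=R_{\mathrm{SEF}}(X)$, with $\E\{\phi(X)\}=\beta$. Here $\phi(X)$ is the probability of sending the case to review. A general rule $\phi(X)$ can be reduced to one based on $R$ by conditioning: since $\E\{L\,\phi(X)\}=\E[\phi(X)\E(L\mid X)]$, and the theorem only controls $\E(L\mid R)$, I will state the result for rules measurable in $R$ (or, equivalently, when $\E(L\mid X)$ depends on $X$ only through $R$). This is the natural reading of ``among all review rules'' in this setting.

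Because the review fraction is fixed at $\beta$, the acceptance probability is the same for every rule, namely $1-\beta$. Minimizing the conditional loss among accepted cases is therefore the same as minimizing
\[
\E\{L\,(1-\phi)\}=\E\{\eta(R)(1-\phi)\},
\]
where the equality uses the tower property. This in turn is the same as maximizing $\E\{\eta(R)\phi\}$ subject to $\E\phi=\beta$.

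The SEF-Audit rule is $\phi^*=\one\{R>q_\beta\}$. The no-point-mass assumption gives $\Pp(R>q_\beta)=\beta$ exactly, so $\phi^*$ is feasible; this is the only place the assumption is used. For any competing feasible $\phi$, I will show
\[
\E\{\eta(R)(\phi^*-\phi)\}\ge 0.
\]
The key pointwise inequality is
\[
\{\eta(R)-\eta(q_\beta)\}(\phi^*-\phi)\ge 0.
\]
To check it, split into two cases. On $\{R>q_\beta\}$ we have $\phi^*-\phi=1-\phi\ge0$, and monotonicity gives $\eta(R)\ge\eta(q_\beta)$. On $\{R\le q_\beta\}$ we have $\phi^*-\phi=-\phi\le0$, and $\eta(R)\le\eta(q_\beta)$. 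Taking expectations, and using $\E(\phi^*-\phi)=0$ to remove the constant $\eta(q_\beta)$, yields the claim. Dividing by $1-\beta$, assuming $\beta<1$ so the frontier denominator is positive, gives the statement in terms of $\mathcal{F}(\beta)$.

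The main obstacle is a technical one. $\eta$ is defined only up to $R$-null sets, and $\eta(q_\beta)$ may not be well defined, especially since $R$ has no atom at $q_\beta$. I will handle this by replacing $\eta(q_\beta)$ with any constant $c$ lying between $\sup_{r\le q_\beta}\eta(r)$ and $\inf_{r>q_\beta}\eta(r)$. For example, one can take $c=\lim_{r\downarrow q_\beta}\eta(r)$ for a monotone version of $\eta$, whose existence is what the assumption supplies. The pointwise argument goes through with $c$ in place of $\eta(q_\beta)$.

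Two further remarks belong with the proof. First, ties elsewhere in the distribution of $R$ do not matter. Second, the optimum is generally not unique when $\eta$ is flat near the threshold.
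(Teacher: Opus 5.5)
Your argument is correct. It runs the same rearrangement idea as the paper's proof, but in a different and tighter form.

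\textbf{The paper's route.} The paper argues by exchange. Take the accepted set $A$ of any budget-$\beta$ rule, swap an accepted case with a reviewed case of lower SEF risk, note that the swap cannot raise accepted loss because $\eta$ is nondecreasing, and repeat until $A=\{R_{\mathrm{SEF}}\le q_\beta\}$. This conveys the ``review the riskiest first'' intuition. However, it is phrased as swapping individual points, which needs a limiting argument over sets of equal mass when $R_{\mathrm{SEF}}$ is continuous. It also covers only deterministic accepted sets.

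\textbf{What your route buys.} Your one-shot inequality $\{\eta(R)-c\}(\phi^*-\phi)\ge 0$, integrated against $\E(\phi^*-\phi)=0$, gains several things:
\begin{itemize}
\item It needs no iteration.
\item It covers randomized rules.
\item It shows the no-atom hypothesis is used only to make $\phi^*$ feasible. With an atom, you would randomize on $\{R=q_\beta\}$ and take $c=\eta(q_\beta)$, and the same inequality goes through.
\item It handles the almost-everywhere definition of $\eta$ through the separating constant $c$.
\end{itemize}

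\textbf{The restriction on competing rules is necessary.} Limiting the comparison to rules that depend on $X$ only through $R$ is more than a convenient reading: without it the statement is false. Take $X=(R,W)$ with $R$ uniform on $[0,1]$, $W$ an independent fair coin, and $\E(L\mid X)=(R+W)/2$. Then $\eta$ is increasing. Yet at $\beta=1/2$, reviewing the cases with $W=1$ gives accepted loss $1/4$, against $3/8$ for SEF-Audit. The paper's swap step assumes this restriction silently, because it treats each case's expected loss as $\eta(R_{\mathrm{SEF}}(x))$.

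\textbf{Two wording fixes.} First, ``$\phi$ measurable in $R$'' and ``$\E(L\mid X)$ depends on $X$ only through $R$'' are alternative sufficient conditions, not equivalent ones. Under the second, your inequality applies to every $X$-measurable $\phi$ directly, since $\E\{L\phi(X)\}=\E\{\eta(R)\phi(X)\}$. Second, drop the opening remark that a general rule ``can be reduced'' to an $R$-based one by conditioning. The example above shows no such reduction exists.
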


\begin{proof}
Let $A$ be the accepted set of any rule that accepts a fraction $1-\beta$ of cases. If $A$ contains a point with larger SEF risk than a point outside $A$, swapping them cannot increase the expected accepted loss because $\eta(r)$ is nondecreasing. Repeating this exchange moves all accepted cases below all reviewed cases in SEF risk. The resulting accepted set is exactly $\{x:R_{\mathrm{SEF}}(x)\leq q_\beta\}$, up to threshold ties. Therefore the SEF-Audit accepted set has the smallest expected accepted loss among rules with the same review budget.
\end{proof}

This theorem makes the role of SEF-Audit precise. The method is not claiming magic. It says that if SEF risk is aligned with error risk in a given domain, then the highest-SEF-risk cases are the right cases to review first. The repeated-split experiments below test that alignment rather than assuming it.

\begin{proposition}[Nested acceptance sets]
If $\beta_2\geq\beta_1$, then the SEF-Audit acceptance set at review budget $\beta_2$ is contained in the acceptance set at review budget $\beta_1$, up to quantile ties.
\end{proposition}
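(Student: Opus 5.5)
The argument rests on monotonicity of the quantile function in the budget parameter. By the SEF-Audit rule, the acceptance set at budget $\beta$ is
\[
A_\beta=\{x: R_{\mathrm{SEF}}(x)\leq q_\beta\},
\]
where $q_\beta$ is the $(1-\beta)$ quantile of $R_{\mathrm{SEF}}(X)$. Once we show $q_{\beta_2}\leq q_{\beta_1}$ whenever $\beta_2\geq\beta_1$, containment follows at once: any $x$ with $R_{\mathrm{SEF}}(x)\leq q_{\beta_2}$ also satisfies $R_{\mathrm{SEF}}(x)\leq q_{\beta_1}$, so $A_{\beta_2}\subseteq A_{\beta_1}$.

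The key step is therefore to fix a quantile convention and verify monotonicity. We use the left-continuous generalized inverse
\[
q_\beta=\inf\{r: F(r)\geq 1-\beta\},
\]
where $F$ is the distribution function of $R_{\mathrm{SEF}}(X)$ in the audit population, either the population or the empirical one. If $\beta_2\geq\beta_1$, then $1-\beta_2\leq 1-\beta_1$. Hence $\{r:F(r)\geq 1-\beta_1\}\subseteq\{r:F(r)\geq 1-\beta_2\}$, and taking infima over a larger set gives the smaller value, so $q_{\beta_2}\leq q_{\beta_1}$. Both sets are nonempty for $\beta\in(0,1]$; the endpoint $\beta=0$ is handled by the convention $q_0=\sup R_{\mathrm{SEF}}$ or $+\infty$, under which every case is accepted.

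The only delicate point is the qualifier ``up to quantile ties.'' The inclusion itself holds for any single consistent monotone quantile convention. What ties affect is the claim that $A_\beta$ has mass exactly $1-\beta$: with point masses in $R_{\mathrm{SEF}}$, the realized acceptance fractions can differ from the nominal ones. Tie-breaking between conventions can also break containment if different conventions are mixed across budgets, for example a randomized rule that accepts only part of the cases lying exactly at $q_\beta$. I will note that if tied cases at the threshold are resolved by a fixed auxiliary ordering, such as an independent uniform tiebreaker appended to $R_{\mathrm{SEF}}$ and used for every $\beta$, then the same monotonicity argument applied to the lexicographic score restores exact nesting. The main obstacle is only stating this convention cleanly; the core argument is the one-line monotonicity of the generalized inverse.
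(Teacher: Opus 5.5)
Your proposal is correct and follows the same route as the paper: $q_\beta$ is nonincreasing in $\beta$, which gives $\{R_{\mathrm{SEF}}\le q_{\beta_2}\}\subseteq\{R_{\mathrm{SEF}}\le q_{\beta_1}\}$. Your explicit generalized-inverse check sharpens the paper's one-line argument, as does your remark that ties change only the realized acceptance fraction, not the inclusion, under a fixed deterministic threshold convention; this shows the paper's ``up to quantile ties'' hedge is needed only when tied cases are handled by randomized or inconsistent rules.
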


\begin{proof}
Increasing $\beta$ lowers the quantile threshold $q_\beta$ for accepted cases. Therefore, any case accepted under the larger review budget must also satisfy the weaker acceptance rule under the smaller review budget, except for possible ties at the quantile boundary.
\end{proof}

This nesting property is useful in deployment. It means analysts can increase the review budget without changing the ordering of cases. The riskiest cases are reviewed first.

\begin{proposition}[Positive screening improvement]
Suppose for a given review budget $\beta\in(0,1)$ that
\[
\E\{L(X)\mid R_{\mathrm{SEF}}(X)>q_\beta\}
>
\E\{L(X)\mid R_{\mathrm{SEF}}(X)\leq q_\beta\}.
\]
Then SEF-Audit at budget $\beta$ achieves strictly lower accepted-case loss than uniform random review at the same budget. This condition is weaker than monotone evidence risk and can be checked on a held-out calibration sample at a specific budget without requiring the full conditional risk function $\eta(r)$ to be nondecreasing.
\end{proposition}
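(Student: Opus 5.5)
The plan is to compare the two accepted-case losses directly through the law of total expectation. Write $A=\{R_{\mathrm{SEF}}(X)\leq q_\beta\}$ for the SEF-Audit acceptance event. Let $a=\E\{L(X)\mid A\}$ and $r=\E\{L(X)\mid A^c\}$. Assume, as in the optimal-review theorem, that there is no point mass at $q_\beta$, so that $\Pp(A)=1-\beta$ and $\Pp(A^c)=\beta$ exactly. If ties occur, I would handle them with randomized tie-breaking at the threshold, so that the review fraction is exactly $\beta$.

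The overall mean loss then splits as
\[
\bar\ell=\E\{L(X)\}=(1-\beta)a+\beta r .
\]

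Uniform random review at budget $\beta$ sends each case to review independently of $X$ and $L$, using an external coin with success probability $\beta$. By independence, the accepted-case loss under this rule equals $\bar\ell$. The key step is therefore to show $a<\bar\ell$. From the decomposition,
\[
\bar\ell-a=\beta(r-a).
\]
This is strictly positive because $\beta>0$ and, by hypothesis, $r>a$. That gives the strict improvement claimed in the proposition.

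For the remark about weaker conditions, I would argue as follows. Under monotone evidence risk, $a$ averages $\eta$ over values $\leq q_\beta$ and $r$ averages $\eta$ over values $>q_\beta$, so $r\geq a$. The hypothesis here only compares these two conditional means at the single budget $\beta$, so it does not require $\eta$ to be monotone. A short example with a non-monotone $\eta$ whose averages still satisfy $r>a$ would show the condition is strictly weaker. Both $a$ and $r$ are plain conditional means, so they can be estimated from the held-out calibration sample.

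The only delicate step is making "uniform random review at the same budget" precise and handling threshold ties. Without the no-point-mass assumption, the empirical quantile may not split the population at exactly $1-\beta$. I would first try to resolve this with randomized tie-breaking at $q_\beta$, which restores the exact split and leaves both the decomposition and the inequality unchanged.
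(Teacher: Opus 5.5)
Your argument is the paper's own: decompose $\E\{L\}$ over $A=\{R_{\mathrm{SEF}}(X)\le q_\beta\}$ and its complement, note that independent random review leaves the accepted-case loss equal to $\E\{L\}$, and conclude $\E\{L\}-a=\beta(r-a)>0$.

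The one step to change is your treatment of ties. You do not need the no-point-mass assumption, and the proposition does not make it. With the actual weights the identity reads $\E\{L\}-a=\Pp(A^c)(r-a)$. This is positive, because $r$ is only defined when $\Pp(A^c)>0$. Random review has accepted-case loss $\E\{L\}$ at every budget, and SEF-Audit reviews a fraction $\Pp(A^c)\le\beta$, so the claim holds as stated.

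Your fallback of randomized tie-breaking is not harmless. It replaces $A$ by a different accepted set, while the hypothesis only compares means over $A$ and $A^c$. For example, let $R_{\mathrm{SEF}}$ take three increasing values with probabilities $0.5,0.3,0.2$ and conditional mean losses $0.4,0,0.3$, and let $\beta=0.4$. Then $q_\beta$ is the middle value, $a=0.25<r=0.3$, and $\E\{L\}=0.26$. But if you review tied middle cases with probability $2/3$ to hit the budget exactly, the accepted-case loss becomes $1/3>0.26$, which is worse than random review.

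Finally, monotone evidence risk gives only $r\ge a$; a constant $\eta$, for instance, gives equality. So the ``weaker'' remark should be read as: the condition is implied by monotone risk whenever the two conditional means differ.
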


\begin{proof}
Under uniform random review at budget $\beta$, the accepted fraction $1-\beta$ is drawn independently of $R_{\mathrm{SEF}}$, so the expected accepted-case loss equals $\E\{L(X)\}$. By the law of total expectation,
\[
\E\{L\}
=(1-\beta)\,\E\{L\mid R_{\mathrm{SEF}}\leq q_\beta\}
+\beta\,\E\{L\mid R_{\mathrm{SEF}}>q_\beta\}.
\]
The positive screening condition gives $\E\{L\mid R_{\mathrm{SEF}}>q_\beta\}>\E\{L\mid R_{\mathrm{SEF}}\leq q_\beta\}$, so the accepted-case mean must satisfy $\E\{L\mid R_{\mathrm{SEF}}\leq q_\beta\}<\E\{L\}$, which is the random-review baseline.
\end{proof}

This proposition is the minimal sufficient condition for SEF-Audit to improve over uninformed review. It does not require monotonicity, nor even a global rank association. The ScopeGate permutation diagnostic in the experiments below tests exactly this condition at a chosen budget.

\section{Conformal Reliability Screening}

The reliable evidence score is useful, but a raw score is not a guarantee. We add a conformal screen.

Let $\mathcal{I}_{cal}$ be a calibration set. For each calibration point, define an evidence unreliability score
\[
A_i
=
C(X_i)+\{1-\operatorname{Stab}(X_i)\}.
\]
Other bounded, label-free evidence-risk scores may be used. The score for a
future case must be computable without knowing that case's outcome.

Let $q_{1-\alpha}$ be the split-conformal quantile of $\{A_i:i\in\mathcal{I}_{cal}\}$:
\[
q_{1-\alpha}
=
\text{the }
\left\lceil (|\mathcal{I}_{cal}|+1)(1-\alpha)\right\rceil
\text{th order statistic.}
\]
If this rank equals $|\mathcal{I}_{cal}|+1$, we use the standard convention
$q_{1-\alpha}=+\infty$.

For a new point $x$, SEF flags the prediction as evidence-reliable when
\[
A(x)\leq q_{1-\alpha}.
\]

\begin{proposition}[Distribution-free screening guarantee]
Assume the calibration points and the new test point are exchangeable. Then
\[
\Pp\{A(X_{new})\leq q_{1-\alpha}\}\geq 1-\alpha.
\]
\end{proposition}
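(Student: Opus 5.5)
The plan is the standard split-conformal rank argument. Write $n=|\mathcal{I}_{cal}|$ and let $A_1,\ldots,A_n$ be the calibration scores and $A_{n+1}=A(X_{new})$. The first step is to check that exchangeability of the data points carries over to the scores. The score $A=C+\{1-\operatorname{Stab}\}$ is computed by one fixed, label-free map. That map depends only on the fitted model, the reference distribution, and the perturbation scheme, and all of these were fixed without using the calibration or test points. So $(A_1,\ldots,A_{n+1})$ is exchangeable.

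Next I would translate the event into a rank statement. Let $k=\lceil (n+1)(1-\alpha)\rceil$. If $k=n+1$, the convention gives $q_{1-\alpha}=+\infty$, the event holds with probability one, and we are done. Otherwise $k\leq n$. Because $A_{(k)}$ is the $k$th smallest of the $n$ calibration scores, the event $A_{n+1}\leq A_{(k)}$ contains the event that $A_{n+1}$ is among the $k$ smallest of all $n+1$ scores. More precisely, it contains $\{\#\{i\leq n+1: A_i<A_{n+1}\}\leq k-1\}$: if fewer than $k$ calibration scores lie strictly below $A_{n+1}$, then $A_{(k)}\geq A_{n+1}$.

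Then apply exchangeability. Each index is equally likely to occupy each position in the ordering of $(A_1,\ldots,A_{n+1})$, with ties broken uniformly at random. Hence the probability that the test score has rank at most $k$ among the $n+1$ scores is at least $k/(n+1)$. Ties only help here, because the event uses a non-strict inequality. Finally, $k/(n+1)\geq 1-\alpha$ by the definition of $k$.

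The main obstacle is handling ties carefully. The clean route is to introduce an independent uniform random tie-breaker, so the rank $\pi$ of $A_{n+1}$ among the $n+1$ scores is exactly uniform on $\{1,\ldots,n+1\}$. One then shows $\{\pi\leq k\}\subseteq\{A_{n+1}\leq A_{(k)}\}$. Suppose $\pi\leq k$. Then at most $k-1$ of the calibration scores precede $A_{n+1}$ in the tie-broken order. So at least $n-k+1$ calibration scores are $\geq A_{n+1}$, which forces the $k$th smallest calibration score to be $\geq A_{n+1}$. This gives $\Pp\{A_{n+1}\leq q_{1-\alpha}\}\geq\Pp(\pi\leq k)=k/(n+1)\geq 1-\alpha$.
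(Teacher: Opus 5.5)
Your proposal is correct and takes the same route as the paper, which simply invokes the standard split-conformal rank argument: under exchangeability the test score's rank among the $n+1$ scores is uniform, so $\Pp\{A_{n+1}\le A_{(k)}\}\ge k/(n+1)\ge 1-\alpha$. Your version also fills in details the paper leaves implicit: exchangeability passes from data points to scores because the score map is fixed independently of the calibration and test points, the $q_{1-\alpha}=+\infty$ convention covers $k=n+1$, and a random tie-breaker makes the rank exactly uniform when scores tie.
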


\begin{proof}
This is the standard split-conformal rank argument. Under exchangeability, the rank of the test score among the calibration scores and the test score is uniform. Choosing the conformal quantile gives the stated coverage guarantee.
\end{proof}

This does not say that every reliable flag is correct in a scientific sense. It says that the screening rule has a distribution-free calibration property for the chosen evidence-unreliability score.

The guarantee also inherits the exchangeability assumption stated in the proposition. This is reasonable for the static tabular settings studied here, where calibration and test cases come from the same pool. It fails when the data arrive as a stream and the underlying regime shifts, because the calibration pool can stop resembling the current data. In separate work we study spectrally weighted and drift-aware conformal procedures built for exactly that non-exchangeable case, where the relevance of past calibration residuals is reweighted using the spectral similarity of recent windows and the target miscoverage level is updated online as drift is detected \citep{opoku2026dasc,opoku2026spectral}. Combining that machinery with the evidence-unreliability score above would extend the screen to streaming settings. We do not pursue it here, and the guarantee in this paper should be read as conditional on exchangeability.

We also check the screen empirically (Table~\ref{tab:conformal-screen} and Figure~\ref{fig:conformal-screen}). On each standard benchmark, we split the data into training, calibration, and test parts. The training set fits the prediction model. The calibration set sets the conformal threshold for evidence unreliability at $\alpha=0.10$. The test set then shows which cases are accepted automatically and which cases are flagged.

\begin{table}[!htbp]
\centering
\caption{Empirical conformal-screen validation across 50 random splits. The conformal rule controls the fraction accepted by the evidence-unreliability threshold, while the error columns show whether flagged cases are actually riskier.}
\label{tab:conformal-screen}
\resizebox{\textwidth}{!}{\begin{tabular}{lrrrr}
\toprule
Dataset & Accepted fraction & All error & Accepted error & Flagged error \\
\midrule
Breast cancer & 0.913 & 0.029 & 0.011 & 0.226 \\
Digits & 0.915 & 0.008 & 0.000 & 0.104 \\
Iris & 0.974 & 0.053 & 0.042 & 0.409 \\
Wine & 0.948 & 0.030 & 0.030 & 0.011 \\
\bottomrule
\end{tabular}
}
\end{table}

\begin{figure}[!htbp]
\centering
\includegraphics[width=0.86\textwidth]{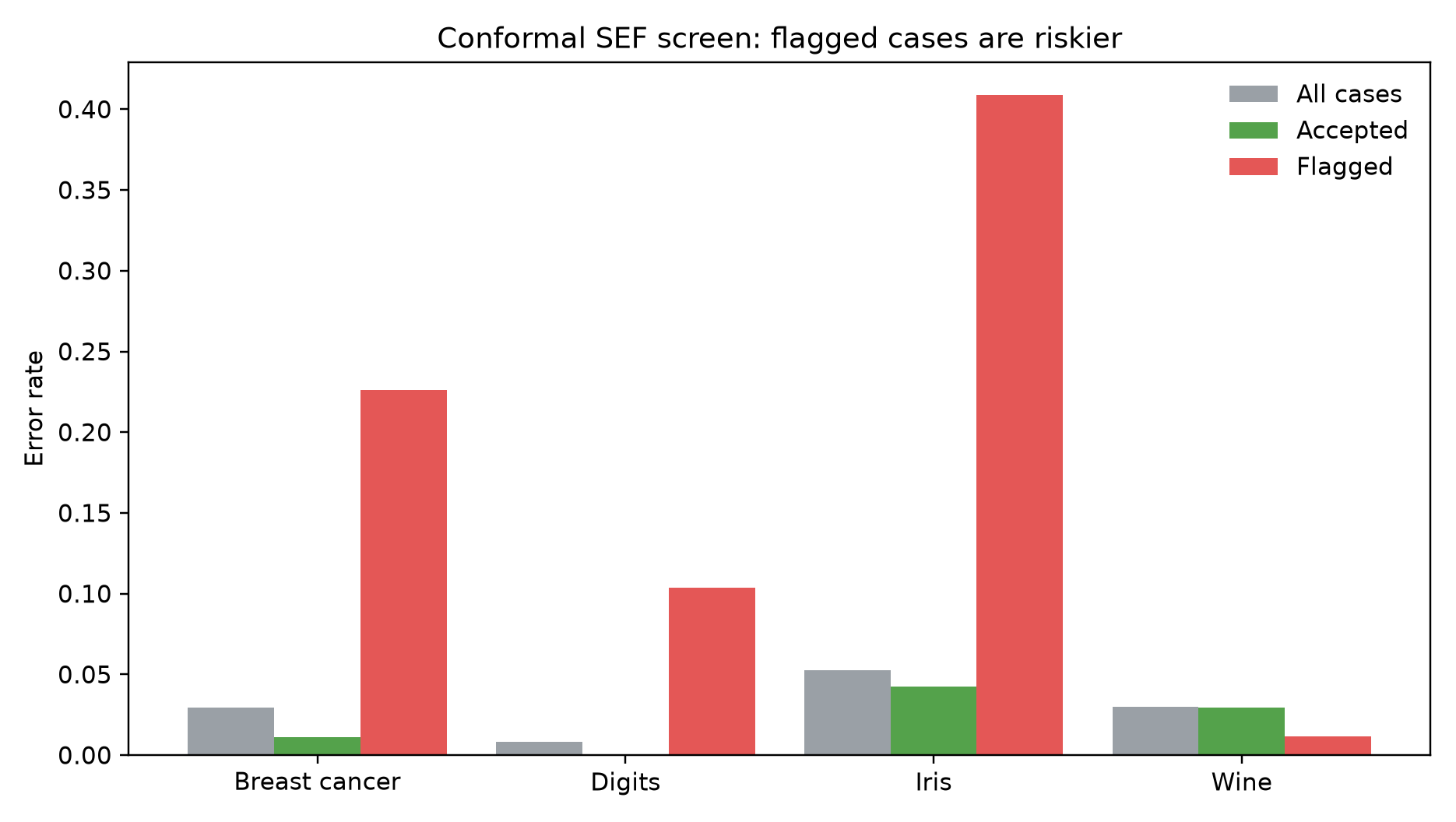}
\caption{Empirical conformal-screen validation. Flagged cases have much higher error than accepted cases on Breast Cancer, Digits, and Iris. Wine is the exception in this small benchmark, which is why the screen should be calibrated and checked for each application.}
\label{fig:conformal-screen}
\end{figure}

The result is not perfect, and it should not be oversold. The screen works clearly on three of the four standard tasks. On Wine, the flagged group is small and not more error-prone on average. A likely explanation is the low residual error in this binary Wine task: after the model separates most observations correctly, too few errors remain for the flagged tail to show a stable enrichment pattern. This is still useful because it shows that the conformal screen gives a calibrated evidence flag, not a guarantee that every flagged group will have higher prediction error in every small data set.

\section{Algorithm}

\begin{enumerate}
\item Fit any prediction model $f$.
\item Choose a reference distribution $P_0$.
\item For each point $x$, compute signed evidence terms $E_j(x)$.
\item Split evidence into $E_j^+(x)$ and $E_j^-(x)$.
\item Compute $S^+(x)$, $S^-(x)$, $M(x)$, $D(x)$, and $C(x)$.
\item Generate $B$ perturbations and recompute evidence terms.
\item Estimate $\operatorname{Stab}_j(x)$ and $\operatorname{Stab}(x)$.
\item Compute $\operatorname{RES}(x)$.
\item Use calibration data to set a conformal evidence-reliability threshold.
\item For SEF-Audit, rank cases by $R_{\mathrm{SEF}}(x)=1-\operatorname{RES}(x)$ and review the highest-risk cases under the chosen review budget.
\end{enumerate}

\section{Synthetic Demonstration}

We ran a first synthetic experiment to test the core claim that model confidence and evidence reliability are not the same thing. The data contain two leading signals. One signal supports the positive class and the other opposes it. A logistic score is then formed as
\[
\eta(x)=2.2x_1-2.0x_2+0.25x_3,
\]
with the class label drawn from a Bernoulli model with success probability $\{1+\exp[-\eta(x)]\}^{-1}$. This creates cases where the model can be confident even though large pieces of evidence push in opposite directions.

For this demonstration, the signed evidence terms are exact because the logit is linear:
\[
E_j(x)=\beta_j(x_j-\bar{x}_j).
\]
We then compute support, opposition, conflict, perturbation stability, and the reliable evidence score. Stability is estimated by perturbing the coefficient vector and reference baseline 150 times.

\begin{figure}[!htbp]
\centering
\includegraphics[width=0.82\textwidth]{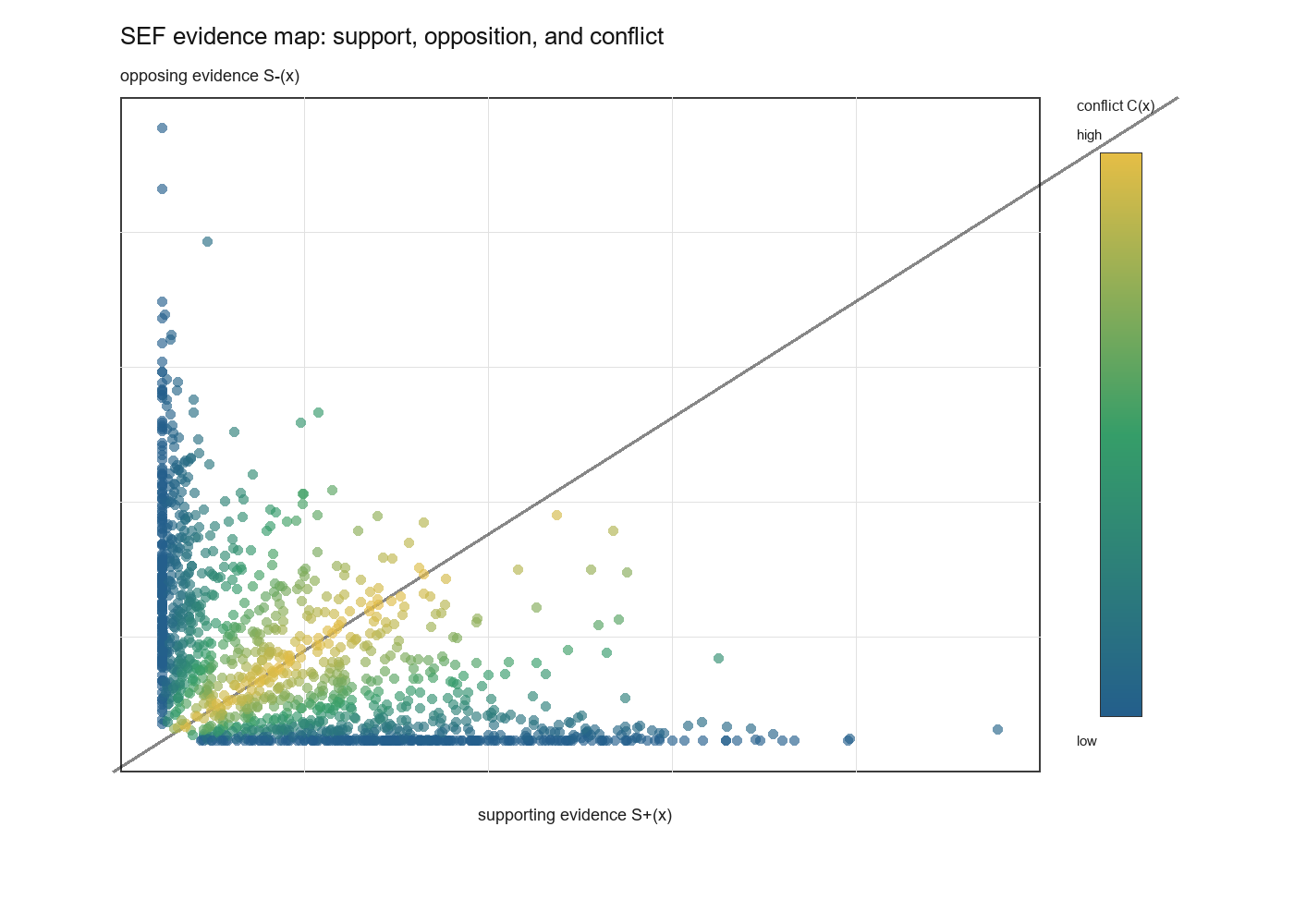}
\caption{SEF evidence map. Each point is one case. The horizontal axis shows supporting evidence and the vertical axis shows opposing evidence. Points near the diagonal have high conflict because support and opposition are both strong.}
\label{fig:evidence-map}
\end{figure}

\begin{figure}[!htbp]
\centering
\includegraphics[width=0.82\textwidth]{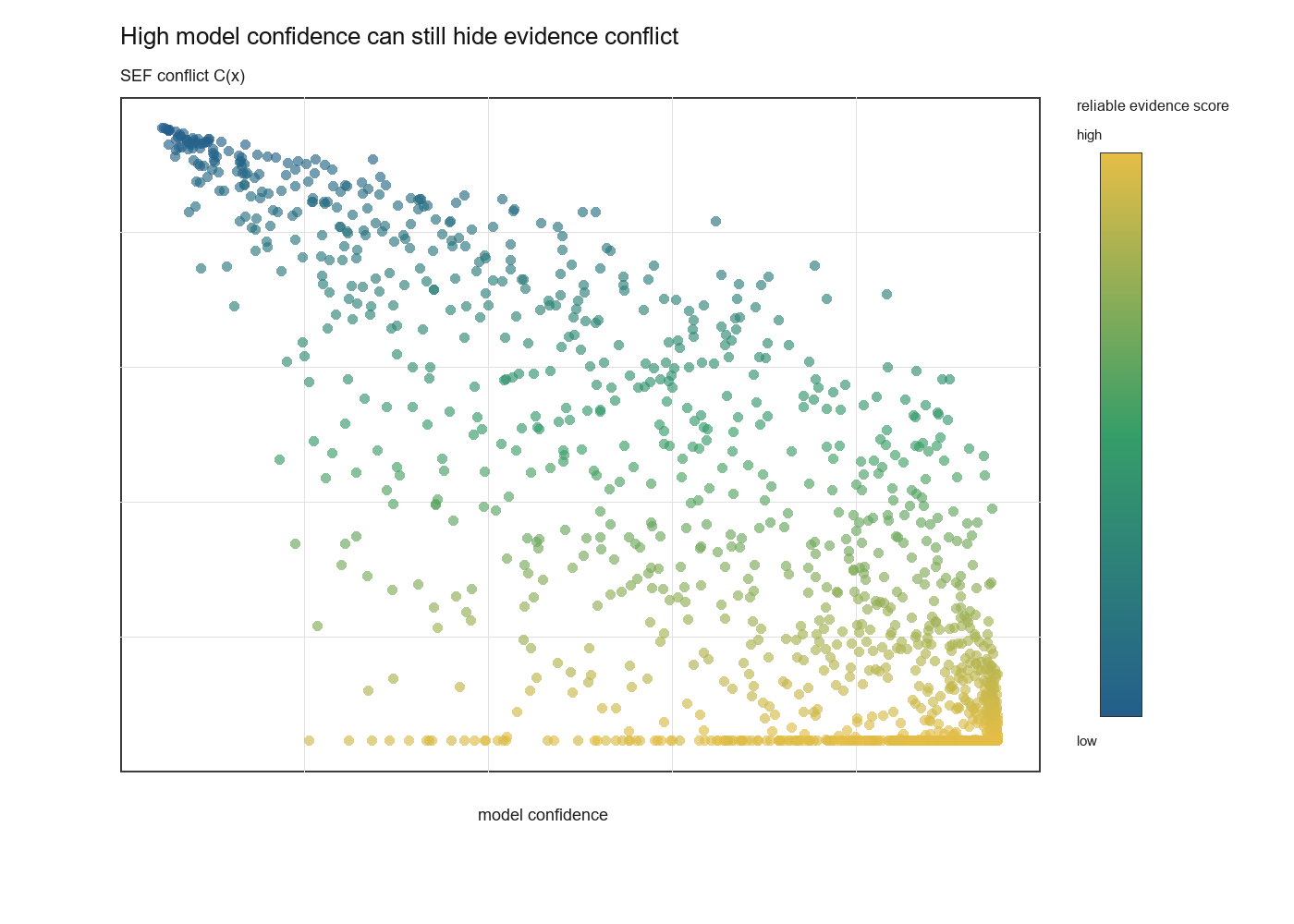}
\caption{Model confidence does not remove evidence conflict. Some cases have high prediction confidence and high SEF conflict at the same time. These are cases where the model score is decisive, but the evidence is internally mixed.}
\label{fig:confidence-conflict}
\end{figure}

\begin{figure}[!htbp]
\centering
\includegraphics[width=0.82\textwidth]{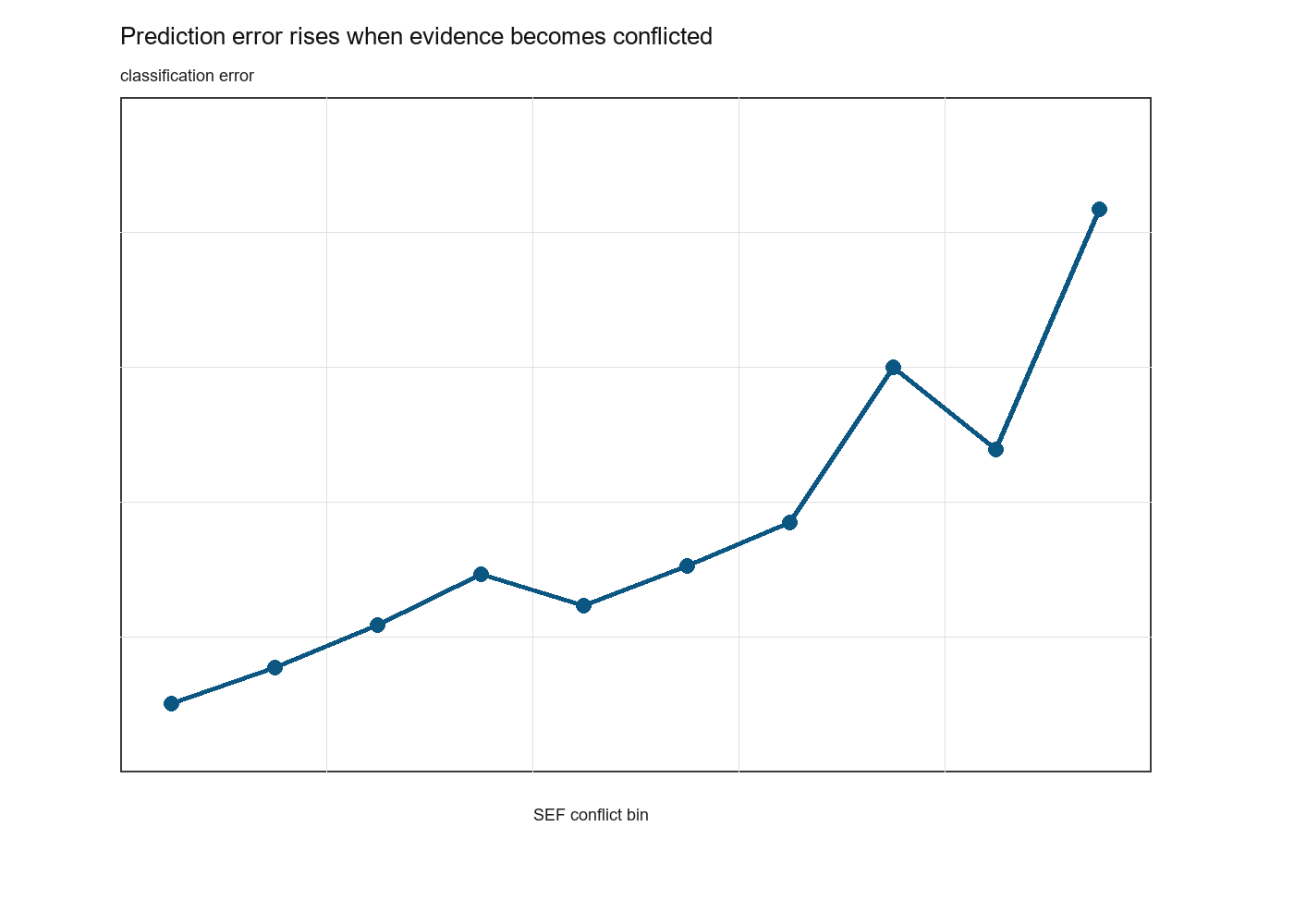}
\caption{Prediction error rises as SEF conflict increases. In this run, the overall error rate is $0.162$, the low-conflict error rate is $0.062$, and the high-conflict error rate is $0.358$.}
\label{fig:conflict-error}
\end{figure}

\begin{figure}[!htbp]
\centering
\includegraphics[width=0.82\textwidth]{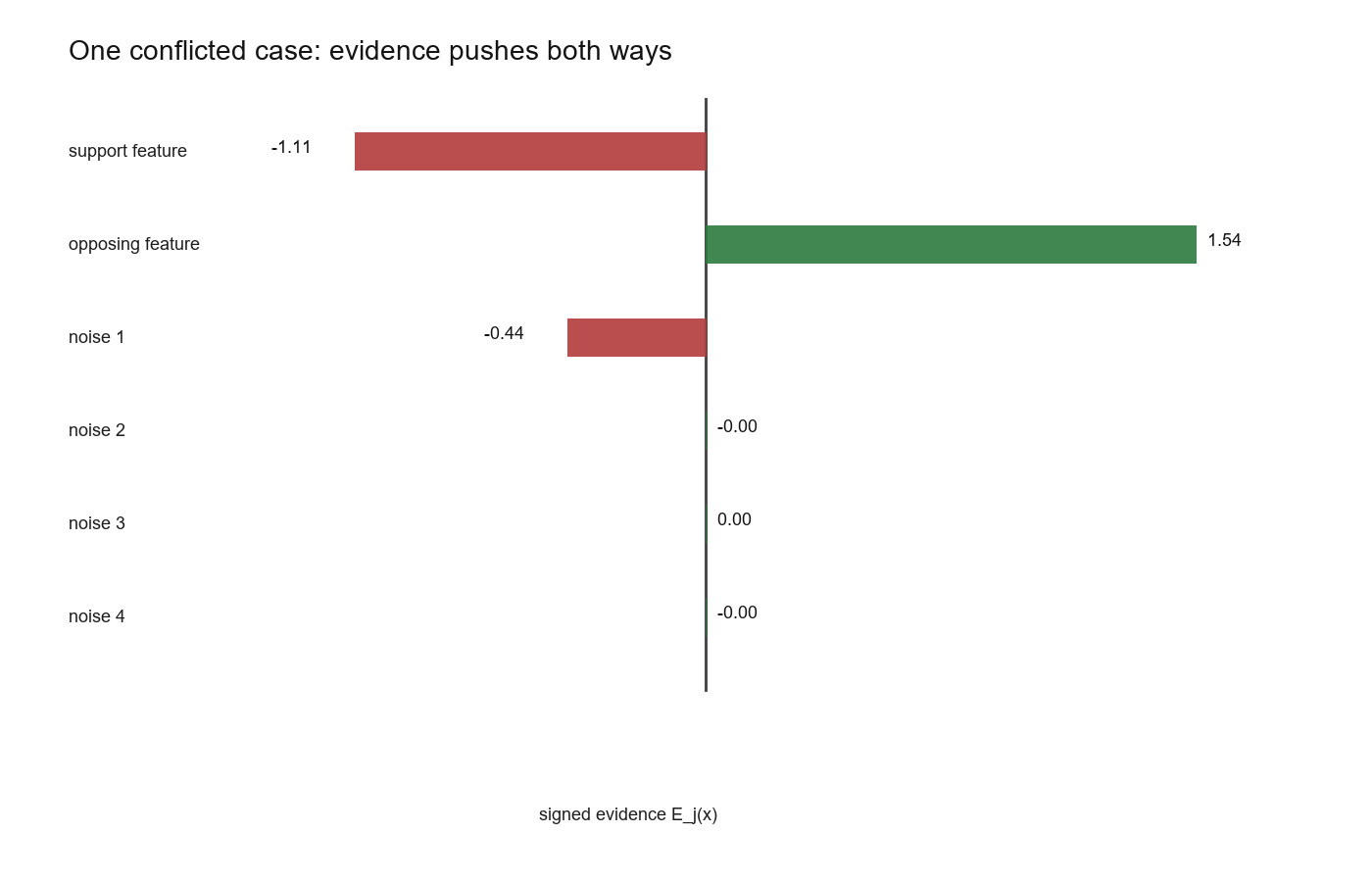}
\caption{A single high-conflict case. The prediction is formed from signed evidence terms that push in both directions. SEF makes this internal opposition visible.}
\label{fig:single-case}
\end{figure}

Figures~\ref{fig:evidence-map}, \ref{fig:confidence-conflict}, \ref{fig:conflict-error}, and~\ref{fig:single-case} illustrate the synthetic results. The overall classification error is $0.162$. Among the lowest-conflict quartile, the error is $0.062$. Among the highest-conflict quartile, the error rises to $0.358$. The reliable evidence score is also informative: the highest-RES quartile has error $0.043$, while the lowest-RES quartile has error $0.360$.

These results do not yet prove that SEF improves every prediction system. They show something narrower and important: evidence conflict carries information that ordinary model confidence can hide.

\FloatBarrier

\section{Sanity Checks on Standard Benchmark Data Sets}

The synthetic example is useful because the true evidence structure is known. We next use four familiar scikit-learn tasks as implementation sanity checks: Iris versicolor versus virginica, Wine class 1 versus class 2, Breast Cancer Wisconsin benign versus malignant, and handwritten Digits 3 versus 5. These low-noise tasks are not the main evidence for practical value. Their near-zero error rates create ceiling effects, so the healthcare, Covertype, black-box, and ScopeGate studies carry more weight.

For each data set, we fit a standardized logistic regression model over 50 stratified train-test splits. We then compute signed evidence terms from the fitted logit,
\[
E_j(x)=\widehat{\beta}_j(x_j-\bar{x}_j),
\]
where variables are standardized using the training data. Stability is estimated by bootstrap refitting the full preprocessing and logistic model. The purpose of the benchmark is not to claim that logistic regression is the best possible predictor. The purpose is to test whether SEF conflict and SEF reliability identify riskier predictions on familiar data.

\begin{table}[!htbp]
\centering
\caption{Repeated standard benchmark results over 50 stratified train-test splits. Values are mean $\pm$ approximate 95\% confidence intervals across splits. Low-conflict and high-conflict refer to the bottom and top quartiles of SEF conflict. High-RES and low-RES refer to the top and bottom quartiles of the reliable evidence score. Success is the fraction of splits where high-conflict cases have higher error than low-conflict cases.}
\label{tab:standard-benchmarks}
\resizebox{\textwidth}{!}{\begin{tabular}{lrrrrrr}
\toprule
Dataset & Error & Low conflict & High conflict & High RES & Low RES & Success \\
\midrule
Iris: versicolor vs virginica & 0.049 $\pm$ 0.009 & 0.000 $\pm$ 0.000 & 0.184 $\pm$ 0.032 & 0.000 $\pm$ 0.000 & 0.189 $\pm$ 0.034 & 0.90 \\
Wine: class 1 vs class 2 & 0.029 $\pm$ 0.006 & 0.000 $\pm$ 0.000 & 0.078 $\pm$ 0.022 & 0.000 $\pm$ 0.000 & 0.082 $\pm$ 0.022 & 0.58 \\
Breast cancer: benign vs malignant & 0.025 $\pm$ 0.003 & 0.000 $\pm$ 0.000 & 0.094 $\pm$ 0.010 & 0.000 $\pm$ 0.000 & 0.094 $\pm$ 0.011 & 1.00 \\
Digits: 3 vs 5 & 0.006 $\pm$ 0.002 & 0.000 $\pm$ 0.000 & 0.024 $\pm$ 0.008 & 0.000 $\pm$ 0.000 & 0.024 $\pm$ 0.008 & 0.50 \\
\bottomrule
\end{tabular}
}
\end{table}

\begin{figure}[!htbp]
\centering
\includegraphics[width=0.86\textwidth]{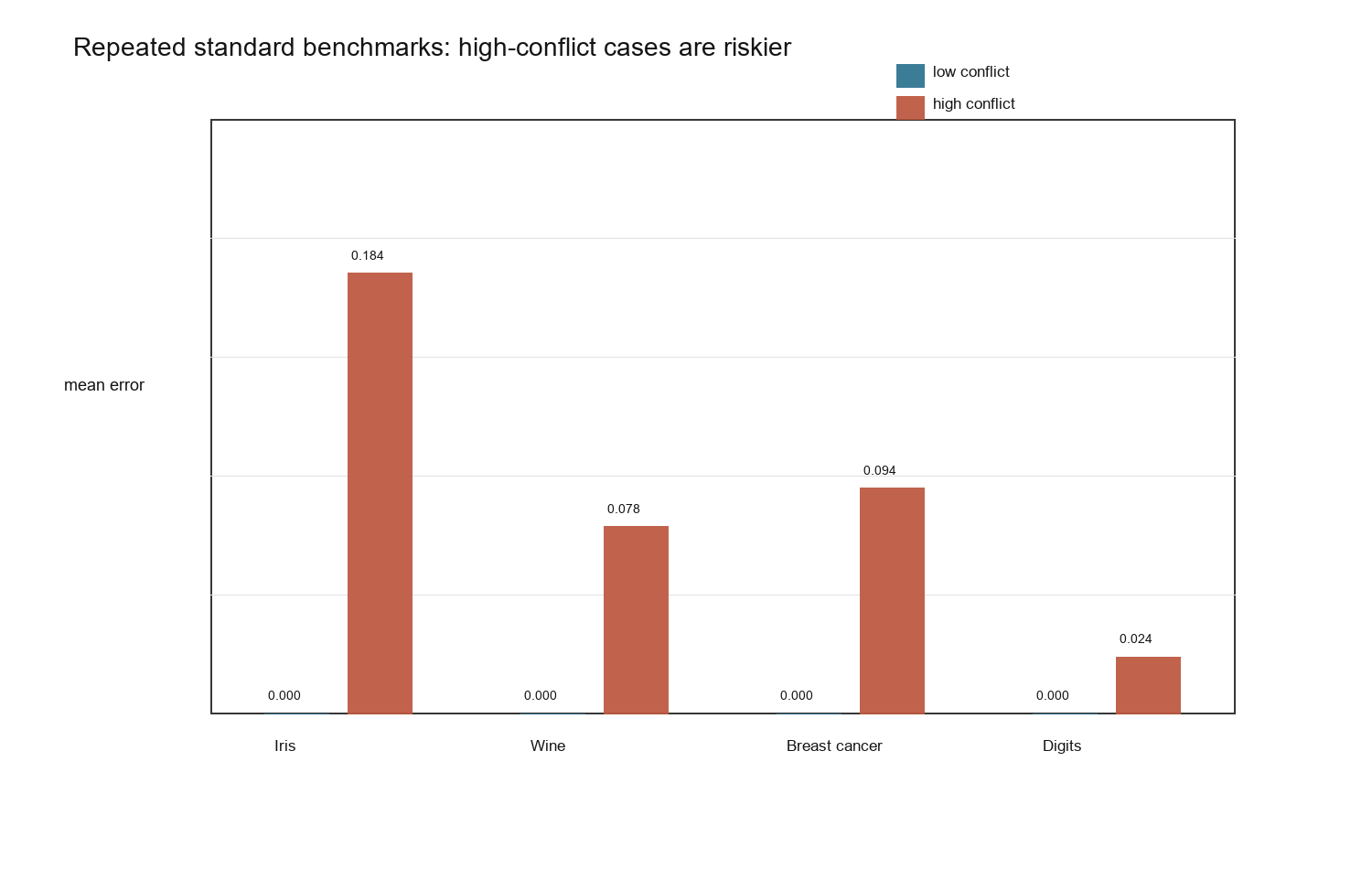}
\caption{Implementation sanity checks on clean standard data sets. The near-zero low-conflict errors are ceiling effects and should not be read as strong evidence of general risk separation.}
\label{fig:standard-conflict}
\end{figure}

\begin{figure}[!htbp]
\centering
\includegraphics[width=0.86\textwidth]{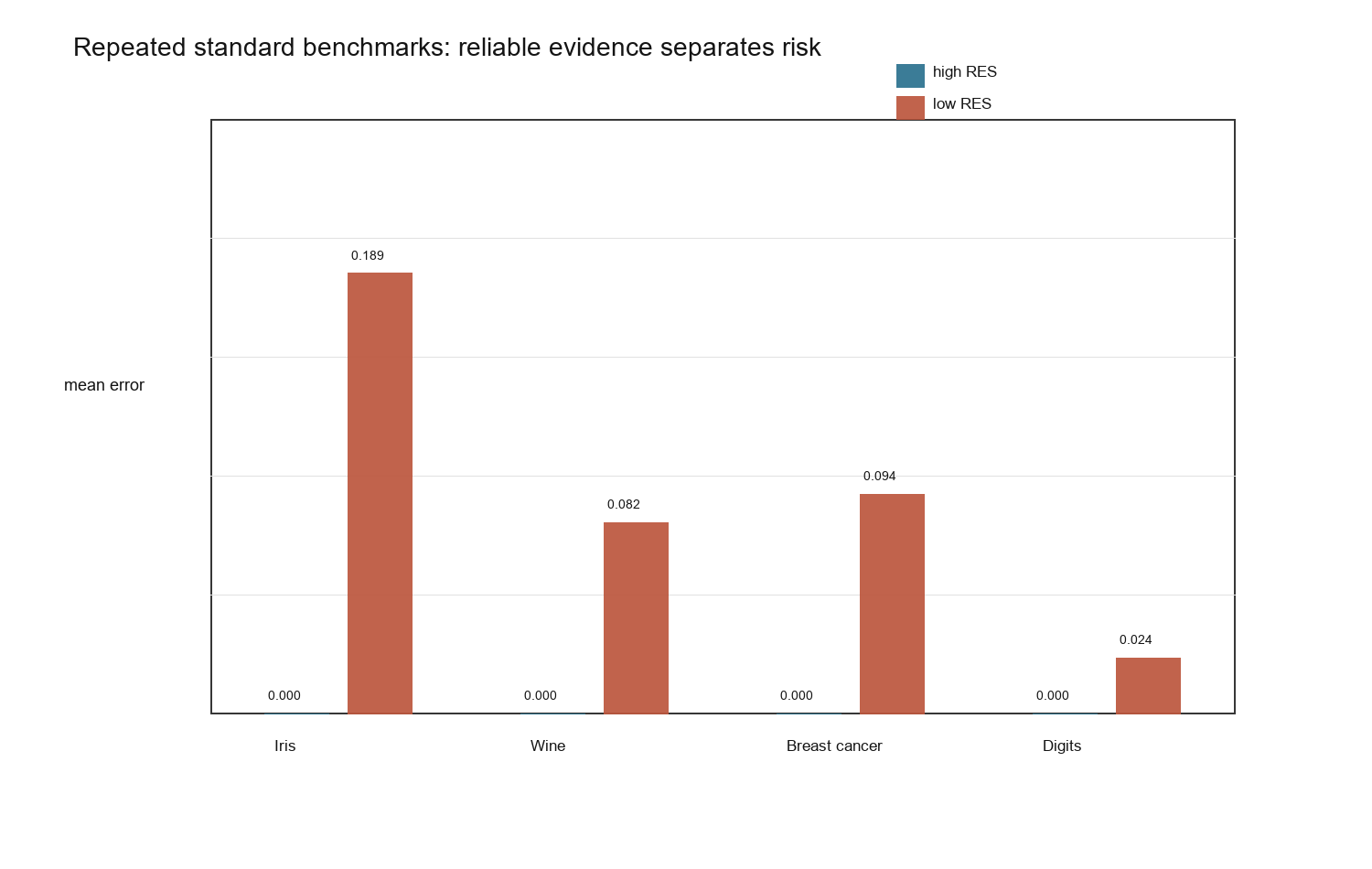}
\caption{Repeated standard benchmark data sets. The reliable evidence score separates safer cases from less reliable cases on average across 50 splits.}
\label{fig:standard-res}
\end{figure}

\begin{figure}[!htbp]
\centering
\includegraphics[width=0.86\textwidth]{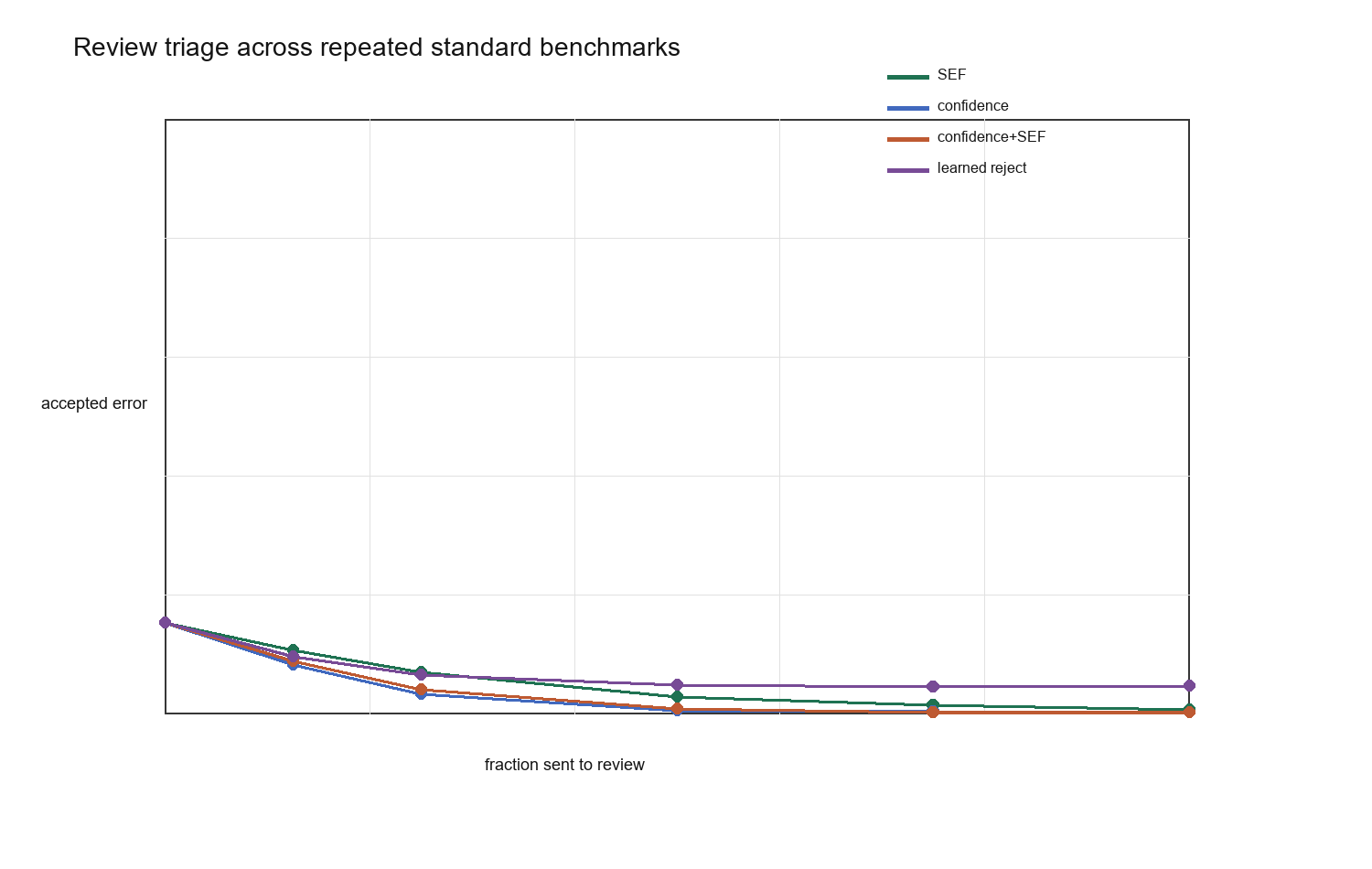}
\caption{Review triage across the repeated standard benchmark tasks. Cases are ranked by SEF risk, confidence risk, a combined confidence-plus-SEF score, or a learned reject baseline.}
\label{fig:triage-curve}
\end{figure}

The benchmark results are encouraging (Table~\ref{tab:standard-benchmarks}, Figures~\ref{fig:standard-conflict} and~\ref{fig:standard-res}), and they are now less dependent on chance splits. On Iris, the overall error is $0.049$, while the high-conflict quartile has error $0.184$. On Wine, the overall error is $0.029$, while the high-conflict quartile has error $0.078$. On Breast Cancer Wisconsin, the overall error is $0.025$, while the high-conflict quartile has error $0.094$. On Digits 3 versus 5, the overall error is $0.006$, while the high-conflict quartile has error $0.024$. Across these tasks, the low-conflict and high-RES quartiles have near-zero average error.

The reliable evidence score gives a similar pattern. These are clean benchmark tasks with low base error, so their role is to confirm the mechanics of the method before the healthcare and large-data benchmarks.

Figure~\ref{fig:triage-curve} shows the operational view. If the analyst sends the highest-risk cases to review and accepts the rest automatically, accepted-case error drops as the review budget grows. On these clean data sets, confidence is a very strong triage rule; SEF adds an evidence audit layer rather than replacing confidence-based selective prediction.

\begin{table}[!htbp]
\centering
\caption{Selective-review comparison at a fixed 30\% review budget across 50 splits. Lower values are better. The learned reject baseline is trained on a held-out part of the training data to predict whether the base classifier will make a mistake.}
\label{tab:selective-baseline}
\resizebox{\textwidth}{!}{\begin{tabular}{lrrrr}
\toprule
Dataset & SEF & Confidence & Confidence+SEF & Learned reject \\
\midrule
Iris: versicolor vs virginica & 0.000 $\pm$ 0.000 & 0.000 $\pm$ 0.000 & 0.000 $\pm$ 0.000 & 0.018 $\pm$ 0.008 \\
Wine: class 1 vs class 2 & 0.008 $\pm$ 0.004 & 0.000 $\pm$ 0.000 & 0.000 $\pm$ 0.000 & 0.003 $\pm$ 0.003 \\
Breast cancer: benign vs malignant & 0.002 $\pm$ 0.001 & 0.003 $\pm$ 0.001 & 0.002 $\pm$ 0.001 & 0.010 $\pm$ 0.003 \\
Digits: 3 vs 5 & 0.000 $\pm$ 0.000 & 0.000 $\pm$ 0.000 & 0.000 $\pm$ 0.000 & 0.002 $\pm$ 0.001 \\
\bottomrule
\end{tabular}
}
\end{table}

Table~\ref{tab:selective-baseline} answers a reviewer concern directly. Confidence and confidence-plus-SEF are hard to beat on the clean standard tasks. SEF is competitive on Breast Cancer and useful as an interpretable review score, but it is not a universal substitute for confidence-based selective prediction. The learned reject baseline uses a held-out $35\%$ portion of the training set. A gradient-boosted reject model with 40 depth-two trees predicts whether the base classifier will err from its confidence and absolute probability margin. This architecture and every random seed are fixed in the experiment script. The later confidence-masked and healthcare experiments ask a sharper question: after confidence has already selected cases that look safe, can SEF still find hidden risk?

\FloatBarrier

\section{Large Real-Data Benchmark}

To test whether SEF can handle a larger real data set, we also use the Covertype data set from the UCI Machine Learning Repository \citep{blackard1998covtype}. The full data set has $581{,}012$ observations and $54$ features. We use the task of predicting whether a forest cover observation belongs to cover type 2 versus all other cover types. For computational speed, the experiment draws a stratified working sample of $120{,}000$ observations and evaluates performance on a held-out test set.

The base predictor is a standardized logistic model fit with stochastic gradient descent. Signed evidence is computed from the fitted logit coefficients,
\[
E_j(x)=\widehat{\beta}_j(x_j-\bar{x}_j),
\]
and SEF stability is estimated by refitting the preprocessing and model on bootstrap subsamples.

\begin{table}[!htbp]
\centering
\caption{Covertype benchmark with 95\% nonparametric bootstrap confidence intervals computed on the held-out test predictions.}
\label{tab:covtype-ci}
\begin{tabular}{lrr}
\toprule
Quantity & Estimate & 95\% bootstrap CI \\
\midrule
Overall error & 0.249 & [0.245, 0.253] \\
Low conflict error & 0.096 & [0.090, 0.101] \\
High conflict error & 0.409 & [0.399, 0.418] \\
High RES error & 0.094 & [0.089, 0.100] \\
Low RES error & 0.409 & [0.401, 0.418] \\
\bottomrule
\end{tabular}

\end{table}

Table~\ref{tab:covtype-ci} reports the large-data result, which is consistent with the smaller benchmarks. The overall test error is $0.249$ with a 95\% bootstrap interval of $[0.245,0.253]$. The low-conflict quartile has error $0.096$ with interval $[0.090,0.101]$, while the high-conflict quartile has error $0.409$ with interval $[0.399,0.418]$. The highest-RES quartile has error $0.094$, while the lowest-RES quartile has error $0.409$. The intervals are narrow and well separated. Thus, on this large real data set, SEF separates cases with clean, stable evidence from cases with more fragile evidence.

\begin{figure}[!htbp]
\centering
\includegraphics[width=0.86\textwidth]{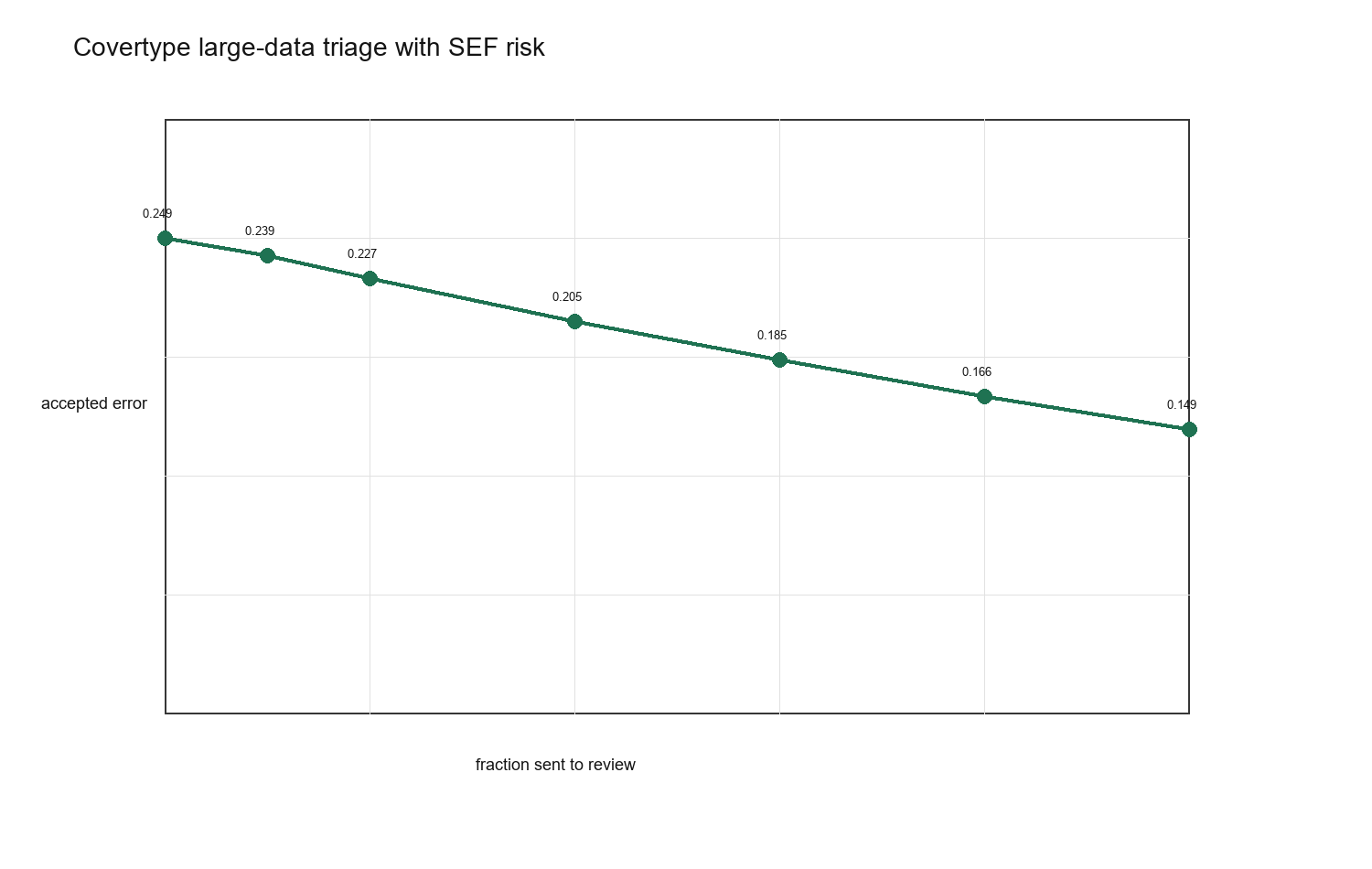}
\caption{Large Covertype benchmark. The full data source has $581{,}012$ rows; this experiment uses a stratified working sample of $120{,}000$ rows. Sending the highest SEF-risk cases to review steadily lowers accepted-case error.}
\label{fig:covtype-triage}
\end{figure}

The triage curve in Figure~\ref{fig:covtype-triage} gives the operational view. Without review, the accepted-case error is $0.249$ with interval $[0.245,0.253]$. Reviewing the riskiest $20\%$ by SEF risk lowers accepted-case error to $0.205$ with interval $[0.201,0.209]$. Reviewing the riskiest $50\%$ lowers accepted-case error to $0.149$ with interval $[0.144,0.153]$. This result matters because it shows that SEF can be useful on data sets that are much larger than the toy examples often used for explanation methods.

\FloatBarrier

\section{Multi-Class Empirical Validation}

The multi-class theorem should be checked on real multi-class predictions, not left as a formal extension. We therefore evaluate pairwise SEF using the predicted class and its strongest rival. The evidence terms come from the corresponding pairwise score gap of a multinomial logistic model. We use full three-class Iris, full three-class Wine, ten-class Digits, and seven-class Covertype. Iris, Wine, and Digits use 30 stratified splits. The larger Covertype experiment uses five stratified splits of a $97{,}950$-row working sample.

\begin{table}[!htbp]
\centering
\caption{Multi-class SEF validation. HC denotes the most confident half of predictions. Success is the fraction of splits where high-conflict confident cases have higher error than low-conflict confident cases. Hybrid review combines confidence risk and SEF conflict at a 30\% review budget.}
\label{tab:multiclass}
\resizebox{\textwidth}{!}{\begin{tabular}{lrrrrrr}
\toprule
Dataset & Classes & Error & HC low conflict & HC high conflict & Success & Hybrid review \\
\midrule
Covertype (7 classes) & 7 & 0.324 & 0.065 & 0.307 & 1.00 & 0.241 \\
Digits (10 classes) & 10 & 0.033 & 0.000 & 0.002 & 0.13 & 0.001 \\
Iris (3 classes) & 3 & 0.045 & 0.000 & 0.000 & 0.00 & 0.000 \\
Wine (3 classes) & 3 & 0.024 & 0.000 & 0.000 & 0.00 & 0.000 \\
\bottomrule
\end{tabular}
}
\end{table}

\begin{figure}[!htbp]
\centering
\includegraphics[width=0.90\textwidth]{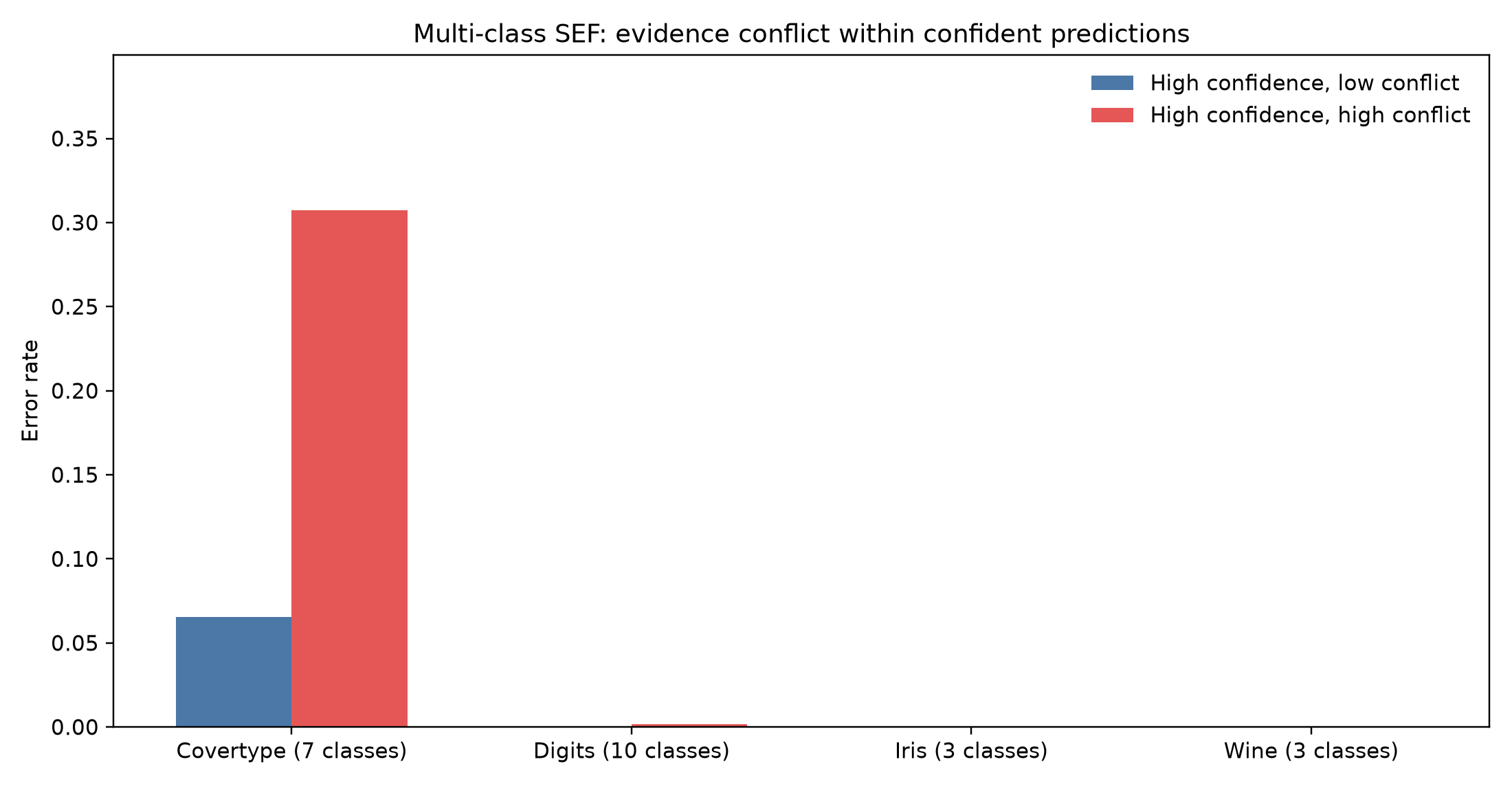}
\caption{Multi-class evidence conflict inside predictions that already look confident. Covertype gives the clearest nontrivial test because the confident subset still contains enough errors to compare.}
\label{fig:multiclass}
\end{figure}

Table~\ref{tab:multiclass} and Figure~\ref{fig:multiclass} present the multi-class results. The seven-class Covertype result supports the multi-class theory in a difficult real task. Within the most confident half of predictions, the low-conflict quartile has error $0.065$, while the high-conflict quartile has error $0.307$. The high-conflict group is riskier in all five splits. At a 30\% review budget, SEF-only accepted error is $0.247$, confidence-only error is $0.241$, and the combined rule gives $0.241$. Thus, conflict gives strong diagnostic separation, while confidence remains slightly better as a stand-alone review score.

The smaller data sets show a ceiling effect. The confident halves of Iris and Wine contain no errors, so neither conflict nor confidence can separate risk there. Digits contains very few errors in its confident half, and split success is correspondingly unstable. These results are still useful because they show why a multi-class audit needs a task with enough residual error to be informative.

\FloatBarrier

\section{Model-Agnostic Black-Box Robustness}

The previous benchmark sections use either known linear evidence terms or logistic coefficients. To test the model-agnostic form of SEF, we also run a repeated robustness experiment with non-linear black-box predictors. The experiment uses the four standard benchmark tasks in Table~\ref{tab:standard-benchmarks}, two fitted models, and 20 random train-test splits. The two models are random forests and histogram gradient boosting. Across four data sets, two models, and 20 seeds, this gives 160 fitted prediction problems.

For this experiment, the SEF evidence term is computed by median-reference feature replacement:
\[
E_j(x)=f(x)-f(x_{-j},\widetilde{x}_j),
\]
where $\widetilde{x}_j$ is the training median of feature $j$ and $f(x)$ is the fitted model logit. This construction does not use model coefficients and can be applied to any classifier with predicted probabilities.

\begin{table}[!htbp]
\centering
\caption{Model-agnostic SEF robustness across 20 seeds. Values are unweighted means over repeated train-test splits within each data set and model class. The SEF evidence terms are computed by feature replacement, not by model coefficients.}
\label{tab:model-agnostic}
\resizebox{\textwidth}{!}{\begin{tabular}{llrrrr}
\toprule
Dataset & Model & Error & Low conflict & High conflict & High RES \\
\midrule
Breast cancer & Gradient boosting & 0.035 & 0.000 & 0.101 & 0.000 \\
Breast cancer & Random forest & 0.045 & 0.002 & 0.111 & 0.002 \\
Digits & Gradient boosting & 0.015 & 0.000 & 0.061 & 0.000 \\
Digits & Random forest & 0.010 & 0.000 & 0.037 & 0.000 \\
Iris & Gradient boosting & 0.069 & 0.015 & 0.155 & 0.013 \\
Iris & Random forest & 0.063 & 0.018 & 0.150 & 0.017 \\
Wine & Gradient boosting & 0.036 & 0.000 & 0.112 & 0.000 \\
Wine & Random forest & 0.025 & 0.000 & 0.059 & 0.000 \\
\bottomrule
\end{tabular}
}
\end{table}

\begin{figure}[!htbp]
\centering
\includegraphics[width=0.86\textwidth]{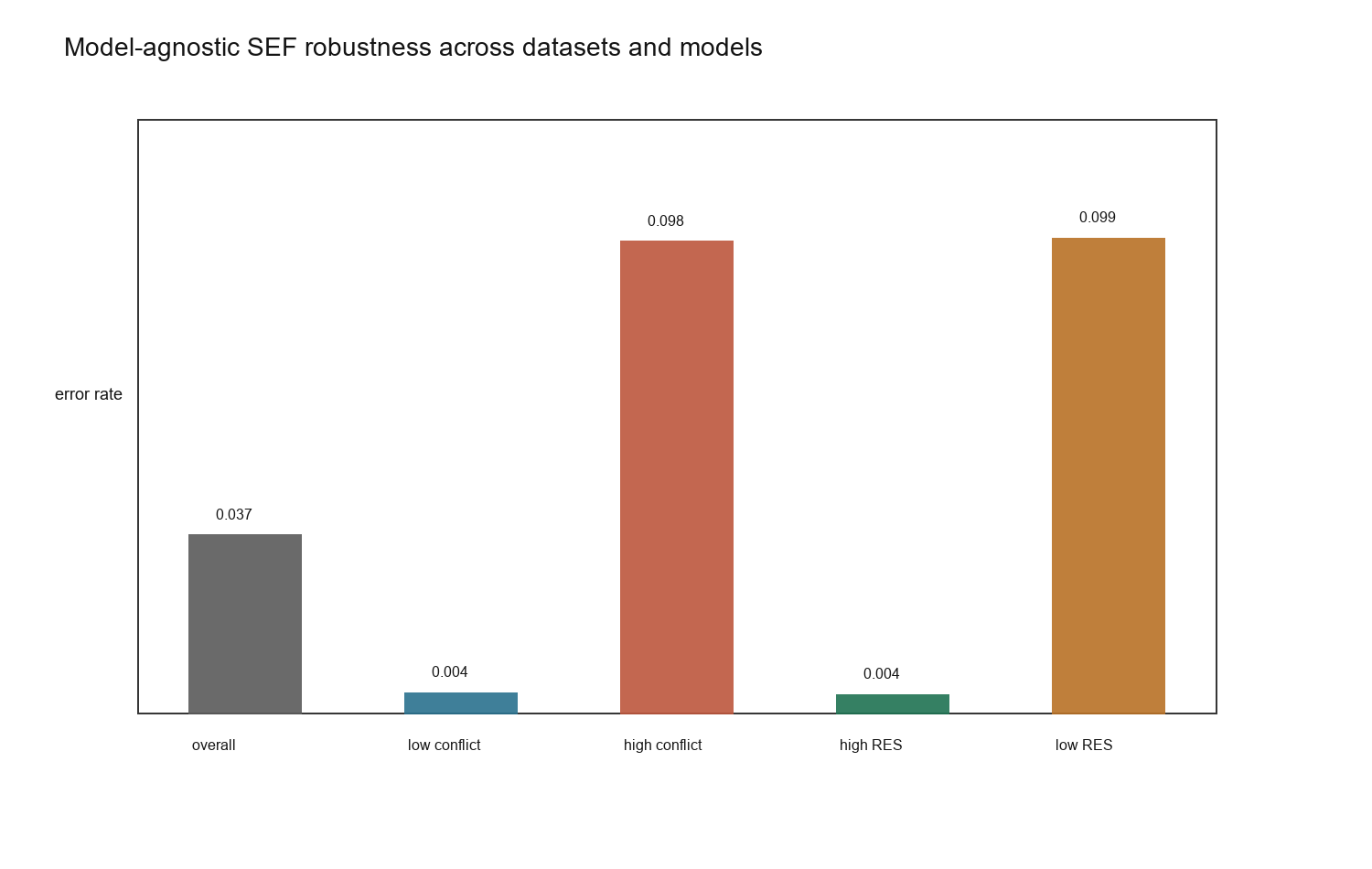}
\caption{Model-agnostic robustness across four standard data sets, two non-linear black-box model classes, and 20 random splits. High-conflict predictions have higher error than low-conflict predictions, even when SEF uses only feature-replacement contrasts.}
\label{fig:model-agnostic}
\end{figure}

Figure~\ref{fig:model-agnostic} visualizes the separation. The unweighted average over the eight data-set-by-model rows in Table~\ref{tab:model-agnostic} gives overall error $0.037$, low-conflict error $0.004$, and high-conflict error $0.098$. Thus, high-conflict predictions are about $24$ times as error-prone as low-conflict predictions in this repeated black-box experiment. The absolute ratio is large because several low-conflict cells have near-zero error, so the safer reading is the direction and separation, not the exact multiplier. This section is important because it shows that SEF is not only a linear-model diagnostic. The conflict idea remains useful when evidence is computed through model-agnostic perturbations.

On these clean benchmark tasks, ordinary model confidence remains a strong triage baseline. SEF adds value by explaining why a prediction may be fragile: the evidence itself is internally conflicted.

\FloatBarrier

\section{Ablation Study}

We next ask which part of SEF drives the review-triage gains. On the standard benchmark tasks, we compare six review rules at a fixed $30\%$ review budget: no review, conflict only, stability only, the full reliable evidence score, confidence only, and an equal-weighted confidence-plus-RES score. We repeat all four tasks over 50 train-test splits, giving 200 fitted prediction problems per review rule.

\begin{table}[!htbp]
\centering
\caption{Ablation study at a $30\%$ review budget over 200 fitted prediction problems. Values are mean accepted-case error $\pm$ approximate 95\% confidence intervals.}
\label{tab:ablation}
\begin{tabular}{lr}
\toprule
Risk score & Accepted error \\
\midrule
no review & 0.027 $\pm$ 0.003 \\
conflict only & 0.003 $\pm$ 0.001 \\
stability only & 0.011 $\pm$ 0.002 \\
RES & 0.003 $\pm$ 0.001 \\
confidence & 0.001 $\pm$ 0.000 \\
confidence + RES & 0.001 $\pm$ 0.000 \\
\bottomrule
\end{tabular}

\end{table}

\begin{figure}[!htbp]
\centering
\includegraphics[width=0.86\textwidth]{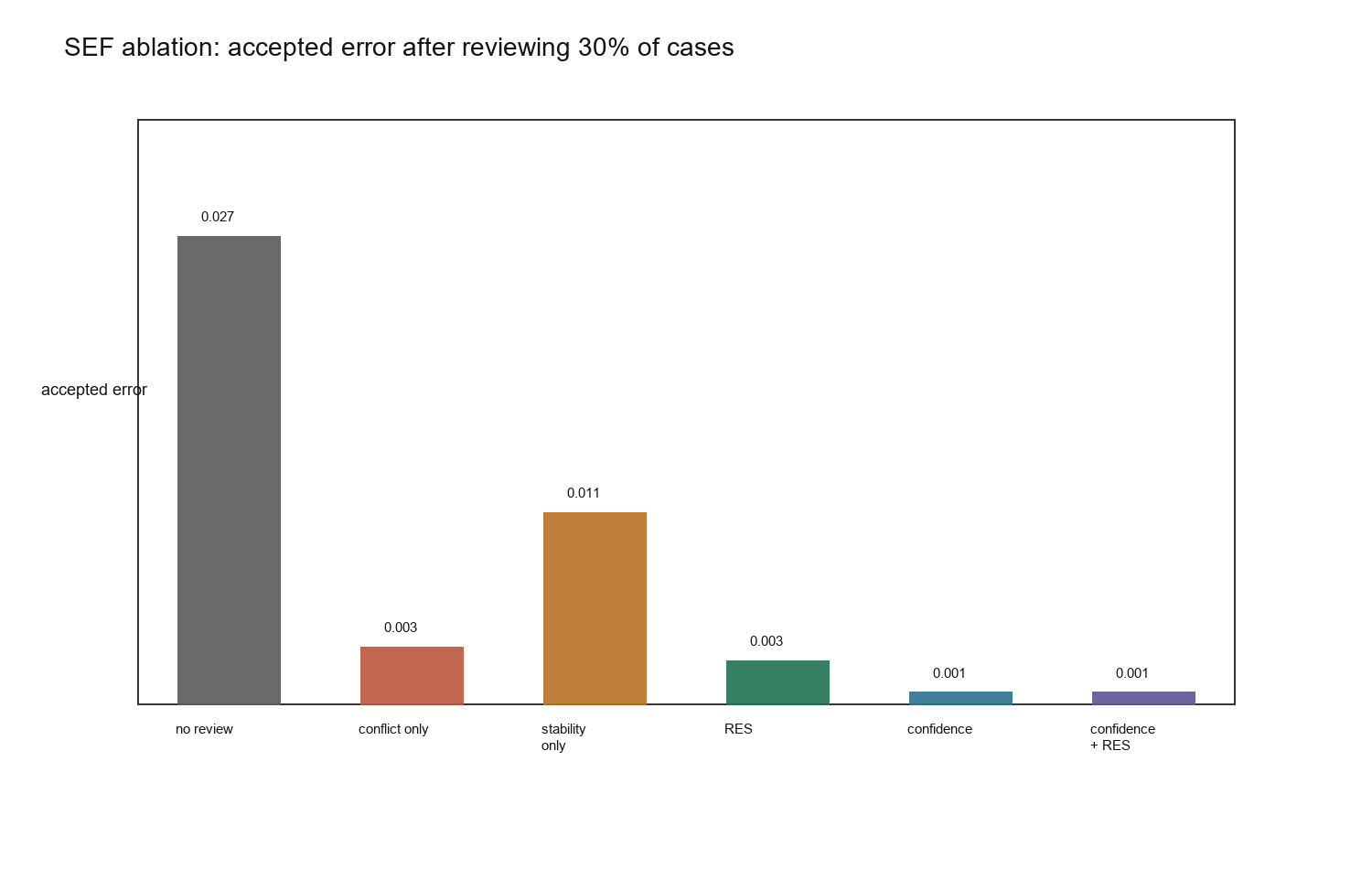}
\caption{Repeated SEF ablation study. Conflict alone and the full reliable evidence score reduce accepted-case error relative to no review. Confidence remains very strong on these clean benchmark tasks.}
\label{fig:ablation}
\end{figure}

Table~\ref{tab:ablation} and Figure~\ref{fig:ablation} report the repeated ablation, which gives two lessons. First, conflict carries useful information: using conflict alone lowers accepted-case error from $0.027$ to $0.003$. The full RES score also gives error $0.003$, while stability alone gives $0.011$. Second, ordinary model confidence remains the strongest single baseline on these clean benchmark tasks, with accepted-case error near $0.001$. SEF adds a complementary diagnostic layer that explains whether the evidence behind a prediction is one-sided or internally conflicted.

\FloatBarrier

\section{Confidence-Masked Conflict Stress Test}

A natural concern is that SEF may only rediscover ordinary model confidence. We therefore add a stress test built to separate these two ideas. The data are generated from a transparent signed-evidence system with one supporting driver, one opposing driver, weak extra signal, and nuisance variables. The fitted model can still be confident when its score is far from zero. However, when strong support and strong opposition arrive together, the label is deliberately less reliable. This creates the situation SEF is meant to diagnose: a prediction can look confident while the evidence behind it is mixed.

The base learner is logistic regression. SEF evidence is computed from the fitted logit, so this stress test does not give SEF information that is hidden from the model. We first select the $40\%$ most confident predictions. Inside that already-confident group, we compare the lowest and highest quartiles of SEF conflict.

\begin{table}[!htbp]
\centering
\caption{Confidence-masked conflict stress test. Even among predictions that already look confident, high SEF conflict marks a riskier subset.}
\label{tab:confidence-masking}
\begin{tabular}{lrr}
\toprule
Group & Cases & Error \\
\midrule
All test cases & 21000 & 0.277 \\
High confidence & 8400 & 0.111 \\
High confidence, low SEF conflict & 2100 & 0.071 \\
High confidence, high SEF conflict & 2100 & 0.174 \\
\bottomrule
\end{tabular}

\end{table}

\begin{figure}[!htbp]
\centering
\includegraphics[width=0.90\textwidth]{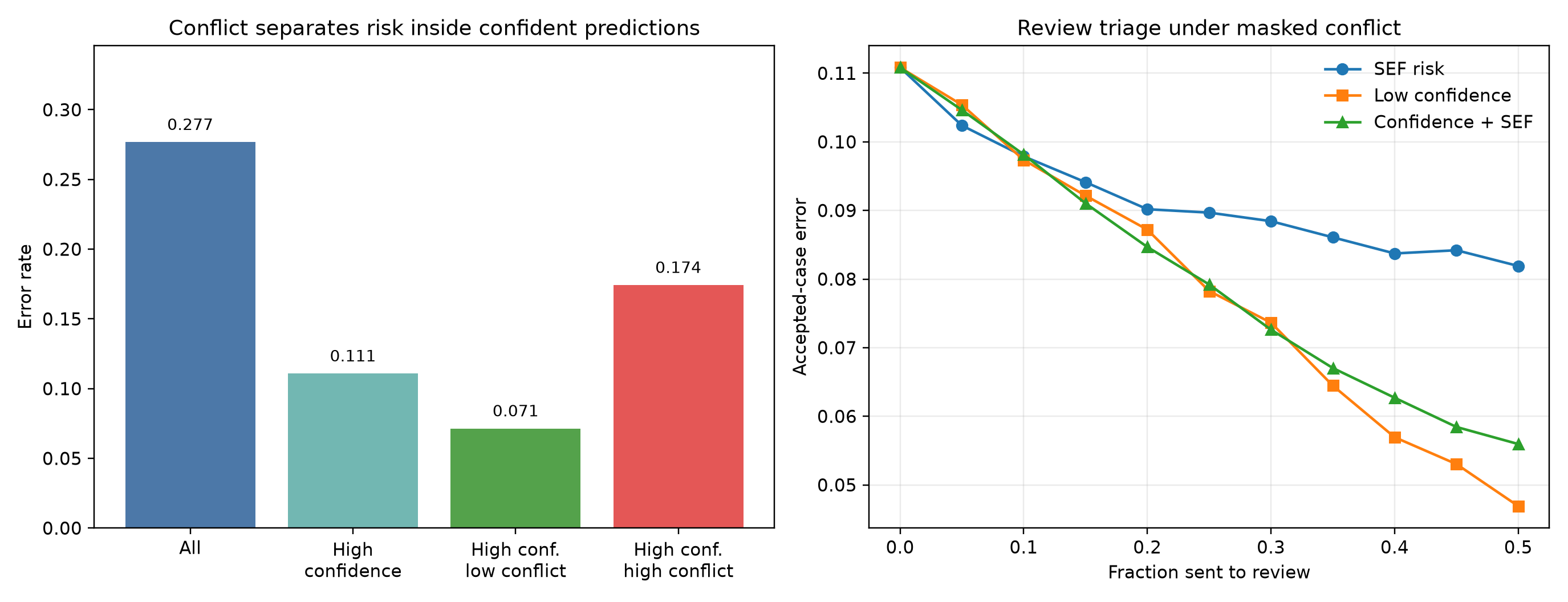}
\caption{Confidence-masked conflict stress test. Left: among high-confidence predictions, the high-conflict quartile has higher error than the low-conflict quartile. Right: when triage is restricted to high-confidence predictions, confidence remains a strong baseline, while combining confidence with SEF gives the lowest accepted-case error in this run.}
\label{fig:confidence-masking}
\end{figure}

Table~\ref{tab:confidence-masking} and Figure~\ref{fig:confidence-masking} present the stress-test results. The result is useful because it is modest and specific. The overall error is $0.277$, and the high-confidence group has error $0.111$. But within that high-confidence group, low SEF conflict has error $0.071$, while high SEF conflict has error $0.174$. Thus, SEF conflict separates a riskier subset even after confidence has already selected cases that look safe.

The triage curve reinforces this. At a $30\%$ review budget inside the high-confidence group, SEF-only triage gives accepted-case error $0.088$, confidence-only triage gives $0.074$, and the combined confidence-plus-SEF rule gives $0.073$. Combining the two scores achieves the lowest accepted-case error.

\FloatBarrier

\section{Real Healthcare Benchmarks Beyond Confidence}

We next test the same idea on real healthcare data. This benchmark uses three public OpenML data sets: Pima diabetes, Statlog heart disease, and blood-transfusion donation behavior. The first two are direct health-risk tasks. The third is a health-service prediction task. These data sets are smaller than the Covertype benchmark, but they are useful because they are closer to the kind of setting where a user may ask whether a prediction is safe enough to trust.

The question here is deliberately narrow:

\begin{quote}
Among predictions that already look confident, does SEF conflict still separate safer cases from riskier cases?
\end{quote}

For each data set, we run 50 stratified train-test splits. The base model is a balanced logistic regression with standard preprocessing. We first keep the top half of test predictions by model confidence. Inside that already-confident group, we compare the lowest and highest quartiles of SEF conflict.

\begin{table}[!htbp]
\centering
\caption{Healthcare benchmark beyond confidence. Within high-confidence predictions, high SEF conflict has higher error than low SEF conflict across all three real data sets. Split success is the fraction of 50 train-test splits where the high-conflict error exceeds the low-conflict error.}
\label{tab:healthcare-beyond-confidence}
\resizebox{\textwidth}{!}{\begin{tabular}{lrrrrr}
\toprule
Dataset & $n$ & Error & High-conf. low-conflict & High-conf. high-conflict & Split success \\
\midrule
Diabetes & 768 & 0.247 & 0.050 & 0.177 & 1.00 \\
Heart disease & 270 & 0.167 & 0.010 & 0.143 & 0.86 \\
Blood transfusion & 748 & 0.339 & 0.162 & 0.244 & 0.88 \\
\bottomrule
\end{tabular}
}
\end{table}

\begin{figure}[!htbp]
\centering
\includegraphics[width=0.90\textwidth]{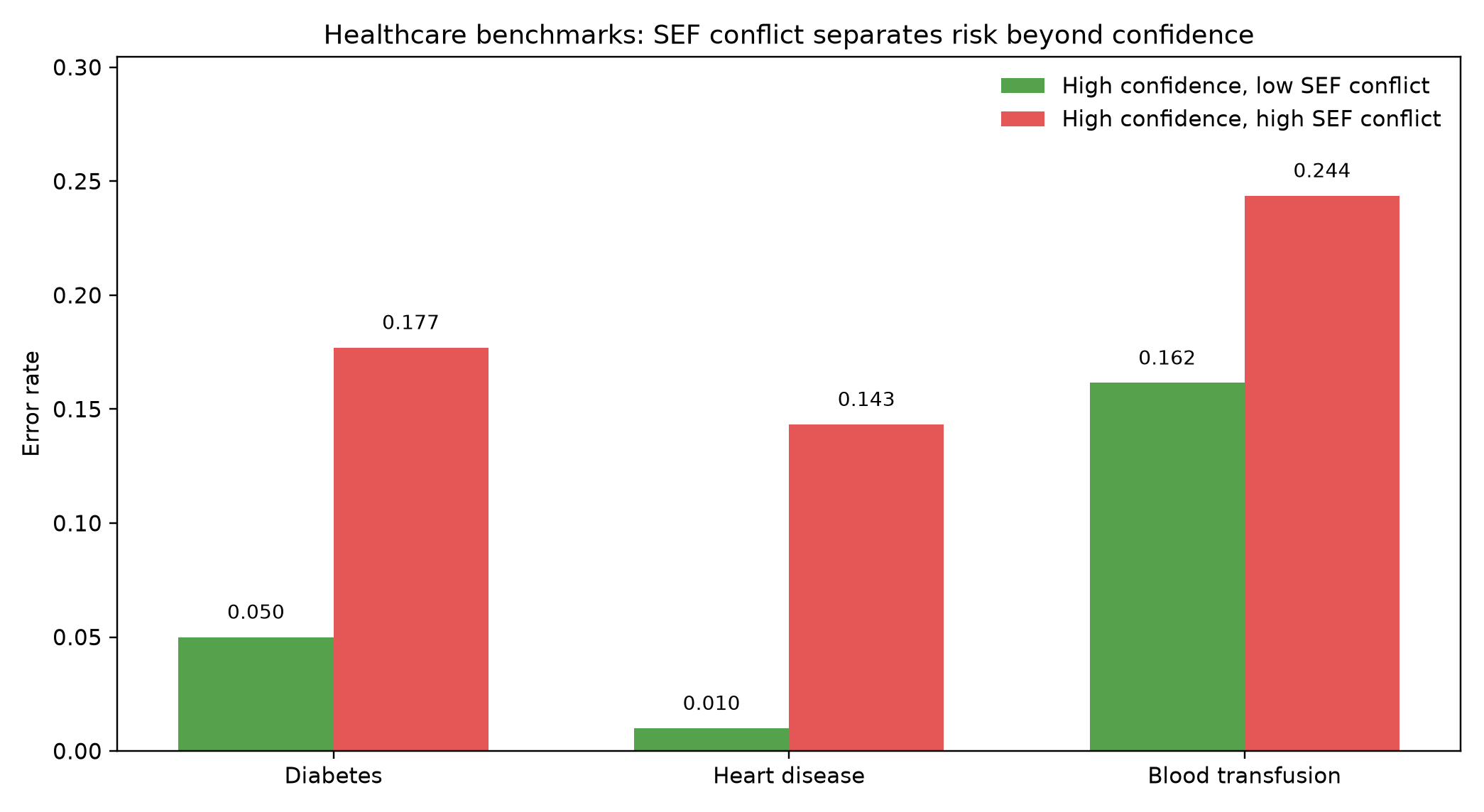}
\caption{Healthcare benchmarks beyond confidence. After restricting to predictions that already look confident, SEF conflict still separates lower-risk and higher-risk subsets.}
\label{fig:healthcare-beyond-confidence}
\end{figure}

Table~\ref{tab:healthcare-beyond-confidence} and Figure~\ref{fig:healthcare-beyond-confidence} show that the pattern is strong enough to matter. In the diabetes data, high-confidence low-conflict cases have average error $0.050$, while high-confidence high-conflict cases have average error $0.177$. In the heart-disease data, the same comparison is $0.010$ versus $0.143$. In the blood-transfusion data, it is $0.162$ versus $0.244$. The high-conflict group has higher error in all 50 diabetes splits, in $86\%$ of the heart-disease splits, and in $88\%$ of the blood-transfusion splits.

This is the clearest real-data evidence that SEF adds something beyond ordinary confidence. The result does not say that confidence is unimportant. It says that, even after confidence has selected predictions that look safe, evidence conflict can still identify a subgroup that deserves more caution.

\FloatBarrier

\section{Comparison with Attribution Entropy}

A reviewer may ask whether SEF conflict is just another name for attribution entropy. This is a fair question. Attribution entropy measures how spread out the absolute attribution magnitudes are across features. SEF conflict measures a different object: how much signed evidence sits on the supporting side and how much sits on the opposing side. A case can have high attribution entropy because many features contribute in the same direction. A case can also have low entropy but high conflict if a few large features push against each other.

To test this directly, we compare four ranking scores on the healthcare benchmark suite: SEF conflict, attribution entropy, a Gini-style attribution spread score, and low model confidence. Each score is evaluated inside the high-confidence group used in the previous section. For each method, we report the error rate among the top-risk quartile and the AUC for ranking errors.

\begin{table}[!htbp]
\centering
\caption{Comparison with attribution entropy and Gini-style attribution spread. Values are averaged over 50 train-test splits and are computed only within predictions that already look confident. Higher top-quartile error and higher error AUC mean the score is better at ranking risky cases.}
\label{tab:entropy-comparison}
\resizebox{\textwidth}{!}{\begin{tabular}{llrr}
\toprule
Dataset & Ranking score & Top-quartile error & Error AUC \\
\midrule
Diabetes & SEF conflict & 0.177 & 0.640 \\
Diabetes & Attribution entropy & 0.158 & 0.524 \\
Diabetes & Attribution Gini spread & 0.138 & 0.505 \\
Diabetes & Low confidence & 0.189 & 0.591 \\
Heart disease & SEF conflict & 0.143 & 0.729 \\
Heart disease & Attribution entropy & 0.032 & 0.384 \\
Heart disease & Attribution Gini spread & 0.033 & 0.391 \\
Heart disease & Low confidence & 0.117 & 0.692 \\
Blood transfusion & SEF conflict & 0.244 & 0.571 \\
Blood transfusion & Attribution entropy & 0.234 & 0.580 \\
Blood transfusion & Attribution Gini spread & 0.238 & 0.590 \\
Blood transfusion & Low confidence & 0.329 & 0.664 \\
\bottomrule
\end{tabular}
}
\end{table}

\begin{figure}[!htbp]
\centering
\includegraphics[width=0.90\textwidth]{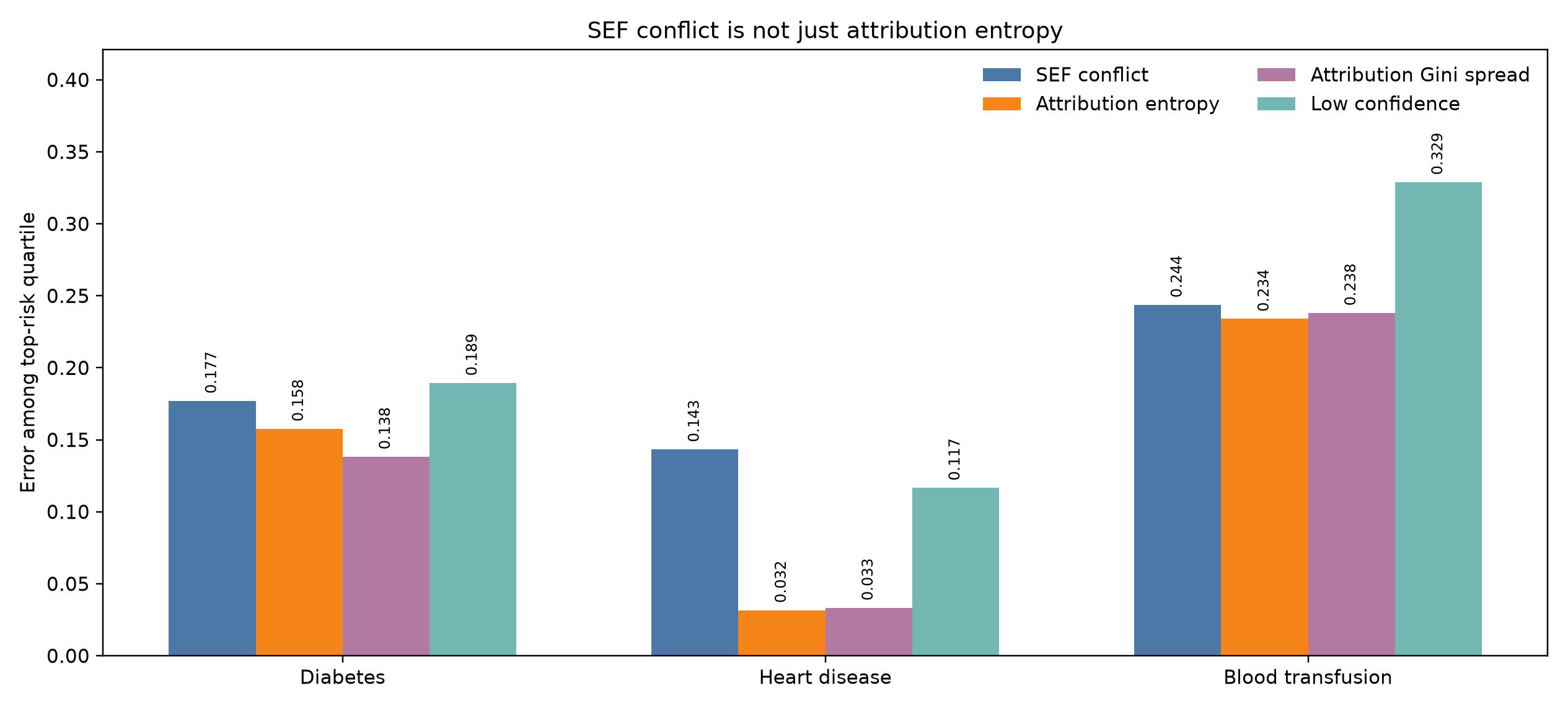}
\caption{SEF conflict compared with attribution entropy, attribution Gini spread, and low confidence. SEF conflict is not a monotone copy of attribution entropy. It is especially stronger than entropy and Gini spread on the diabetes and heart-disease tasks.}
\label{fig:entropy-comparison}
\end{figure}

Table~\ref{tab:entropy-comparison} and Figure~\ref{fig:entropy-comparison} present the comparison, which gives a useful answer. On diabetes, SEF conflict has error AUC $0.640$, while attribution entropy has error AUC $0.524$ and Gini spread has error AUC $0.505$. On heart disease, SEF conflict has error AUC $0.729$, while entropy has $0.384$ and Gini spread has $0.391$. On blood transfusion, the result is less favorable to SEF: low confidence is the strongest ranking score, and entropy, Gini spread, and SEF conflict are closer together.

The rank correlations also show that SEF is not just entropy in disguise. The average Spearman correlation between SEF conflict and attribution entropy is $0.119$ on diabetes, $0.083$ on heart disease, and $0.603$ on blood transfusion. Thus, in two of the three healthcare tasks, the two rankings are quite different. In the task where they are more similar, SEF does not dominate. One likely reason is task structure. In the blood-transfusion data, prediction errors are more closely tied to being near the decision boundary, so confidence already captures much of the risk. In diabetes and heart disease, riskier high-confidence cases are more often cases where strong positive and negative evidence coexist. This is the right conclusion: SEF is not a universal replacement for entropy or confidence, but it captures a signed-evidence feature that entropy alone can miss.

\FloatBarrier

\section{External Finance Stress Test and Scope Boundary}

The earlier experiments show several settings where SEF conflict separates risk beyond confidence. A serious method also needs a test that can fail. We therefore ran a held-out stress test on three public OpenML data sets that were not used to shape the original examples: German Credit, Bank Marketing, and Default of Credit Card Clients. Each experiment uses 25 stratified train-test splits and the same balanced logistic-model evidence construction used in the healthcare study.

\begin{table}[!htbp]
\centering
\caption{External finance and credit-risk stress test over 25 splits. The comparison is restricted to the most confident half of predictions. Success is the fraction of splits where the high-conflict quartile has higher error than the low-conflict quartile.}
\label{tab:finance-scope}
\resizebox{\textwidth}{!}{\begin{tabular}{lrrrrrr}
\toprule
Dataset & $n$ & Error & High-conf. low-conflict & High-conf. high-conflict & Success & SEF AUC \\
\midrule
Bank marketing & 45211 & 0.156 & 0.176 & 0.007 & 0.00 & 0.204 \\
Credit-card default & 30000 & 0.311 & 0.296 & 0.167 & 0.00 & 0.423 \\
German credit & 1000 & 0.283 & 0.125 & 0.160 & 0.56 & 0.530 \\
\bottomrule
\end{tabular}
}
\end{table}

Table~\ref{tab:finance-scope} reports the finance results. The result draws a clear boundary around the method. On German Credit, the high-conflict group has somewhat higher error than the low-conflict group, but the effect is modest and appears in only $56\%$ of splits. On Bank Marketing and Credit Card Default, the pattern reverses: among already-confident predictions, low-conflict cases are more error-prone. This is not a minor exception. It means conflict is not a universal error score, and a naive high-conflict review rule can make triage worse.

A plausible mechanism is coherent misspecification. In the healthcare and Covertype examples, errors often appear when strong evidence points in competing directions. In the two larger finance tasks, many confident errors appear to be low-conflict cases where the fitted model has a coherent but wrong story. The external replication shows that this reversal is not specific to finance: it also appears in Adult Income, Mammography, and the PC1 and KC1 software-defect tasks. Across these domains, class imbalance, correlated predictors, omitted structure, or systematic model bias can produce a confident and internally consistent explanation that is still wrong. We do not treat this mechanism as proved. Instead, the next analysis makes the operational claim testable: it asks whether conflict adds information after confidence and attribution entropy are known, and separately checks whether the sign of that information supports high-conflict review.

\section{ScopeGate: Conditional Value and a Finite-Sample Deployment Test}

The earlier quartile comparisons ask whether high-conflict and low-conflict groups have different errors. They do not establish that conflict adds information after confidence and attribution entropy are already available. We therefore fit two error-risk models only within the most confident half of held-out predictions. The base model uses low confidence and exact linear-attribution entropy. The augmented model adds SEF conflict. Predictions from both error-risk models are cross-fitted, so a case is never scored by an error model trained on that case. We repeat the comparison over 15 train-test splits.

For the balanced logistic models used here, the evidence terms $(z_j-\bar z_j)\widehat\beta_j$ are the exact interventional linear-SHAP contributions on the logit scale under the empirical mean reference. This makes the entropy comparison a SHAP-based comparison for these models. It does not claim conditional-SHAP validity when transformed features are dependent.

\begin{table}[!htbp]
\centering
\caption{ScopeGate analysis on held-out high-confidence predictions. Base AUC cross-fits error risk from low confidence and attribution entropy; $+$SEF AUC also includes conflict. Gap is high-conflict minus low-conflict error inside the ScopeGate cross-fitting split, so it need not equal the simpler quartile gap reported in earlier benchmark tables. Eligible/pass reports the fraction of splits meeting the information rule and, among eligible splits, the fraction passing the one-sided positive-association diagnostic. Time excludes model fitting and bootstrap stability.}
\label{tab:scope-diagnostic}
\resizebox{\textwidth}{!}{\begin{tabular}{lrrrrrr}
\toprule
Dataset & Base AUC & +SEF AUC & $\Delta$AUC & Gap & Eligible/pass & Time/1k rows \\
\midrule
Diabetes & 0.488 & 0.523 & +0.035 & +0.104 & 0.20/0.00 & 50.4 ms \\
Heart disease & 0.663 & 0.664 & +0.001 & +0.144 & 0.00/-- & 169.3 ms \\
Blood transfusion & 0.629 & 0.619 & -0.011 & +0.073 & 1.00/0.20 & 48.5 ms \\
German credit & 0.636 & 0.665 & +0.030 & +0.033 & 1.00/0.00 & 118.2 ms \\
Bank marketing & 0.853 & 0.922 & +0.069 & -0.172 & 1.00/0.00 & 15.4 ms \\
Credit-card default & 0.562 & 0.653 & +0.091 & -0.129 & 1.00/0.00 & 3.5 ms \\
\bottomrule
\end{tabular}
}
\end{table}

\begin{figure}[!htbp]
\centering
\includegraphics[width=0.94\textwidth]{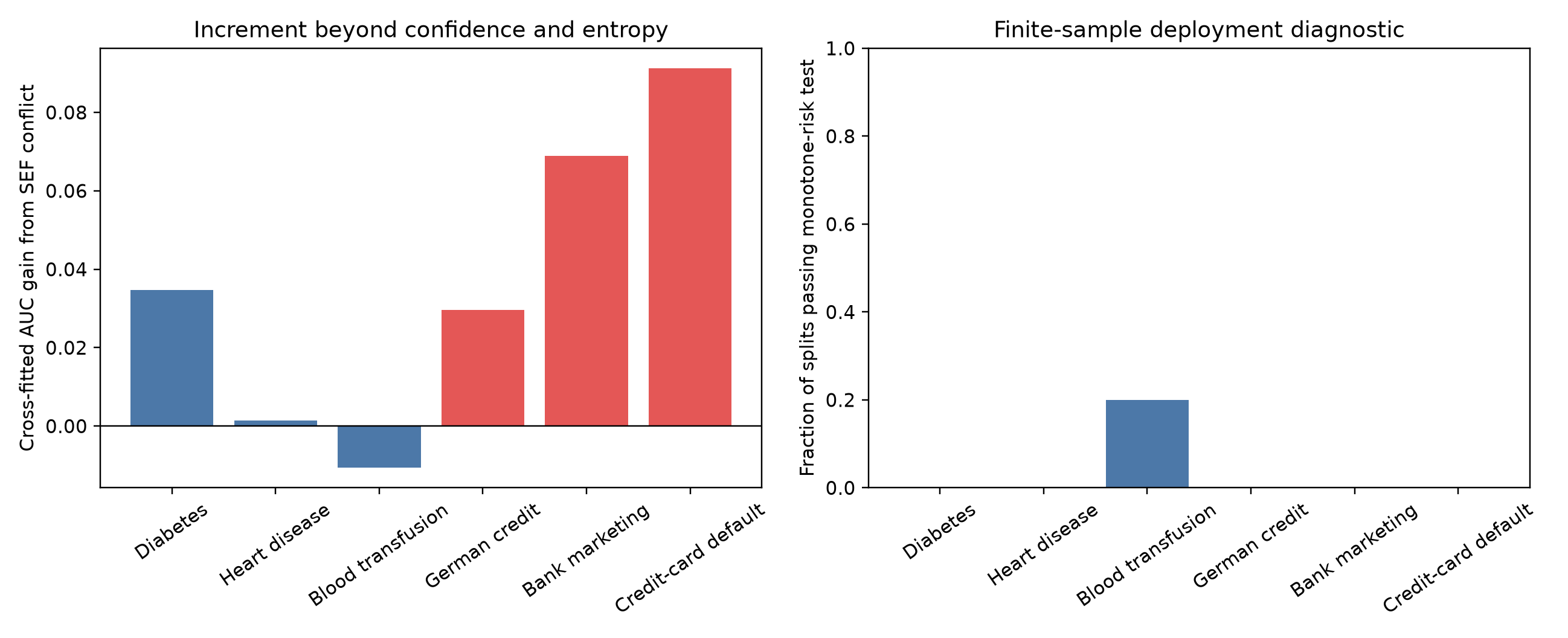}
\caption{Left: cross-fitted error-ranking gain after adding SEF conflict to confidence and attribution entropy. Right: the finite-sample positive-direction diagnostic. Red finance bars show why incremental information and review direction must be treated as separate questions.}
\label{fig:scope-diagnostic}
\end{figure}

Figure~\ref{fig:scope-diagnostic} visualizes the cross-fitted gains and direction diagnostics. The stricter analysis changes the interpretation of the paper. Conflict adds cross-fitted error-ranking AUC beyond confidence and entropy on Diabetes ($+0.035$), German Credit ($+0.030$), Bank Marketing ($+0.069$), and Credit Card Default ($+0.091$). Paired sign-flip tests across splits give $p=0.0731$, $0.0037$, $0.0002$, and $0.0002$, respectively. The Diabetes gain is suggestive but does not reject at the conventional $5\%$ level, which agrees with its low ScopeGate eligibility and zero pass rate. The gain is negligible on Heart Disease ($+0.001$) and negative on Blood Transfusion ($-0.011$). Thus, conditional value is strong on the two large finance tasks and mixed on the smaller healthcare tasks.

Direction is a separate issue. Bank Marketing and Credit Card Default have negative conflict gaps, $-0.172$ and $-0.129$, and never pass the positive-direction diagnostic. Their large AUC gains mean that conflict contains useful information, but the fitted error model learns that information in the opposite direction. German Credit has a small positive average gap but no split passes the $5\%$ diagnostic. Diabetes and Heart Disease often have too few high-confidence errors for a decisive calibration test; Heart Disease is ineligible in every split under the stated information rule. This eligibility failure is itself useful information: when there are too few errors or too little support in the conflict tails, the method should not pretend to certify a review policy. The practical conclusion is sharper than the earlier speculative explanation: SEF conflict can add information without supporting a universal high-conflict review rule.

The same audit measures computational cost. Across these six data sets, exact linear attributions plus SEF summaries require between 3.5 and 169.3 milliseconds per 1,000 held-out rows on the test machine, excluding model fitting. Fixed preprocessing overhead makes the smallest data sets slower on a per-row basis. The SEF aggregation itself takes at most about 18 milliseconds per split in this study. We also timed bootstrap stability separately on the four standard tasks. With $B=40$ refits, stability estimation took 0.188--0.257 seconds, compared with 0.0054--0.0072 seconds for one fit-and-score pass, a measured ratio of 30.1--39.2 on the test machine. These wall-clock values are hardware- and model-dependent, but they show why one-pass SEF timing should not be read as total deployment cost.

\section{Independent External Replication}

The ScopeGate result should not rest only on the data sets used above. We therefore ran a separate replication suite on ten additional public OpenML binary-classification data sets: Adult Income, Spambase, Phoneme, Mammography, Credit Approval, QSAR Biodegradation, PC1 software defects, KC1 software defects, Electricity, and Magic Telescope. Together these data sets contain 139,325 observations and cover socioeconomic, communication, speech, healthcare, credit, chemistry, software, market, and physics tasks. None of these data sets was used to tune the earlier examples.

The replication uses the same pipeline as ScopeGate. For each data set, we run ten stratified train-test splits. Within the most confident half of held-out predictions, the base error-risk model uses low confidence and attribution entropy. The augmented model adds SEF conflict. The deployment diagnostic again asks a separate question: whether higher conflict is positively associated with observed error strongly enough to justify high-conflict review.

\begin{table}[!htbp]
\centering
\caption{Independent external replication on ten additional public OpenML data sets. Base AUC cross-fits error risk from low confidence and attribution entropy; $+$SEF AUC also includes conflict. Gap is high-conflict minus low-conflict error. Eligible/pass has the same meaning as in Table~\ref{tab:scope-diagnostic}.}
\label{tab:external-replication}
\resizebox{\textwidth}{!}{\begin{tabular}{lrrrrrr}
\toprule
Dataset & $n$ & Base AUC & +SEF AUC & $\Delta$AUC & Gap & Eligible/pass \\
\midrule
Adult income & 48842 & 0.783 & 0.839 & +0.056 & -0.002 & 1.00/0.00 \\
Spambase & 4601 & 0.547 & 0.518 & -0.029 & +0.003 & 0.00/-- \\
Phoneme & 5404 & 0.740 & 0.754 & +0.013 & +0.002 & 1.00/0.00 \\
Mammography & 11183 & 0.794 & 0.859 & +0.065 & -0.063 & 1.00/0.00 \\
Credit approval & 690 & 0.570 & 0.594 & +0.024 & +0.016 & 0.00/-- \\
QSAR biodeg. & 1055 & 0.609 & 0.615 & +0.006 & +0.034 & 0.00/-- \\
PC1 defects & 1109 & 0.658 & 0.692 & +0.035 & -0.196 & 1.00/0.00 \\
KC1 defects & 2109 & 0.727 & 0.728 & +0.001 & -0.012 & 1.00/0.00 \\
Electricity & 45312 & 0.651 & 0.651 & -0.000 & +0.083 & 1.00/1.00 \\
Magic telescope & 19020 & 0.586 & 0.730 & +0.144 & +0.229 & 1.00/1.00 \\
\bottomrule
\end{tabular}
}
\end{table}

\begin{figure}[!htbp]
\centering
\includegraphics[width=0.95\textwidth]{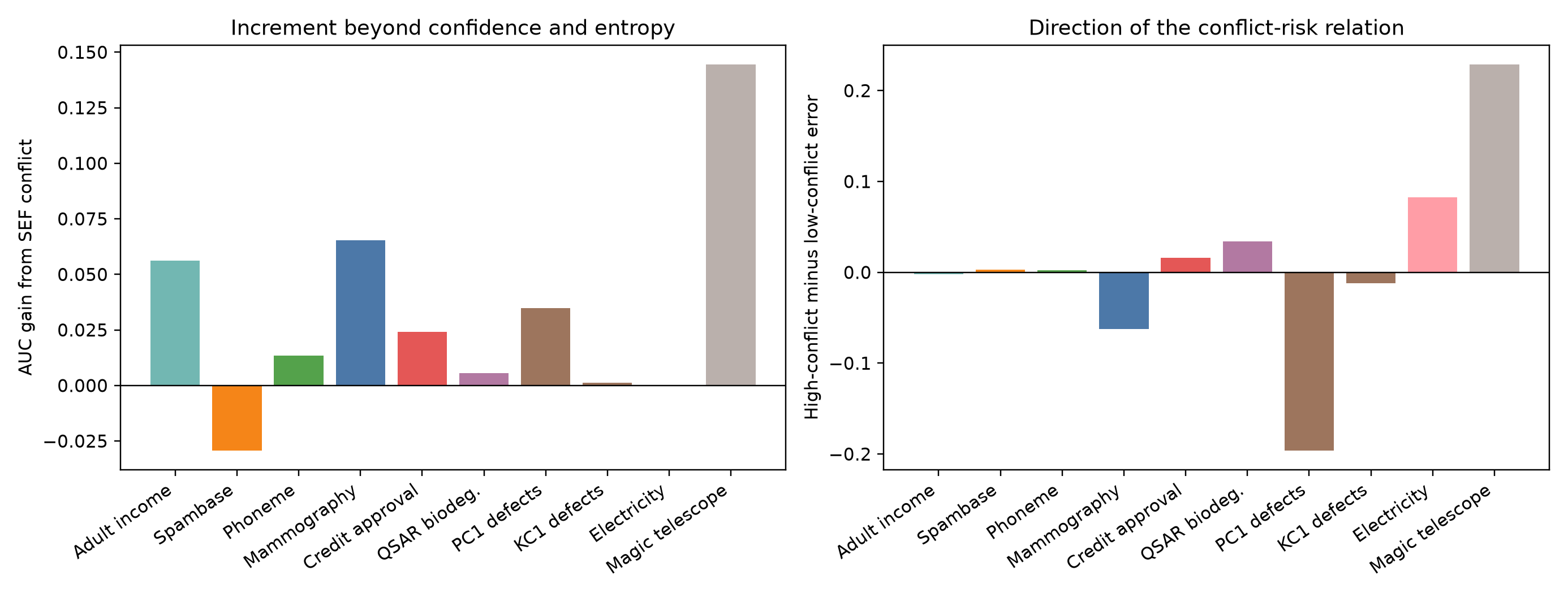}
\caption{Independent external replication. Left: incremental error-ranking value from adding SEF conflict after confidence and attribution entropy. Right: direction of the conflict-risk relation. The two panels show why information value and safe review direction are different questions.}
\label{fig:external-replication}
\end{figure}

Table~\ref{tab:external-replication} and Figure~\ref{fig:external-replication} present the external replication. The replication strengthens the paper, but it does so in a careful way. Conflict improves cross-fitted error-ranking AUC on seven of the ten external data sets. Electricity is effectively neutral: its displayed change rounds to $-0.000$, although its positive-direction diagnostic passes in every split. The largest gains appear on Magic Telescope ($+0.144$), Mammography ($+0.065$), Adult Income ($+0.056$), and PC1 defects ($+0.035$). This supports the claim that evidence conflict often contains information not captured by confidence and attribution entropy alone.

At the same time, the direction is not universal. Mammography, PC1 defects, KC1 defects, and Adult Income show positive incremental AUC but nonpositive or weak high-minus-low conflict gaps. In those tasks, conflict helps an error-risk model, but it should not be used as a simple high-conflict review rule. Electricity and Magic Telescope are the clearest positive-direction replications: both are eligible in every split and pass the one-sided ScopeGate diagnostic in every split. Spambase is a useful negative control because SEF does not add conditional AUC there, and the diagnostic is not eligible due to too few high-confidence errors.

This external replication moves the method from a set of examples to a broader empirical claim. The claim is not that high conflict is always bad. The claim is that signed evidence conflict is a measurable structure that often carries conditional error information, and ScopeGate is needed to decide whether that information points in the review direction a practitioner wants.

The external suite also gives a stronger test of whether SEF is merely attribution entropy under another name. Table~\ref{tab:external-entropy} compares the two scores, Gini spread, and low confidence inside the same held-out high-confidence groups. Figure~\ref{fig:external-entropy} shows both the error-ranking difference and the rank correlation between SEF conflict and entropy.

\begin{table}[!htbp]
\centering
\caption{Independent comparison with attribution entropy on ten external data sets. Error AUC measures how well each score ranks errors inside the held-out high-confidence group. The last column is the split-averaged Spearman correlation between the SEF and entropy rankings.}
\label{tab:external-entropy}
\resizebox{\textwidth}{!}{\begin{tabular}{lrrrrr}
\toprule
Dataset & SEF AUC & Entropy AUC & Gini AUC & Low-conf. AUC & $\rho$(SEF, entropy) \\
\midrule
Adult income & 0.471 & 0.385 & 0.352 & 0.772 & 0.441 \\
Spambase & 0.552 & 0.414 & 0.390 & 0.631 & 0.294 \\
Phoneme & 0.489 & 0.678 & 0.653 & 0.685 & -0.101 \\
Mammography & 0.135 & 0.193 & 0.202 & 0.284 & 0.805 \\
Credit approval & 0.560 & 0.583 & 0.547 & 0.571 & 0.272 \\
QSAR biodeg. & 0.557 & 0.570 & 0.575 & 0.725 & -0.075 \\
PC1 defects & 0.322 & 0.295 & 0.315 & 0.487 & 0.182 \\
KC1 defects & 0.478 & 0.235 & 0.249 & 0.592 & 0.041 \\
Electricity & 0.582 & 0.597 & 0.577 & 0.648 & 0.079 \\
Magic telescope & 0.722 & 0.445 & 0.478 & 0.568 & 0.286 \\
\bottomrule
\end{tabular}
}
\end{table}

\begin{figure}[!htbp]
\centering
\includegraphics[width=0.96\textwidth]{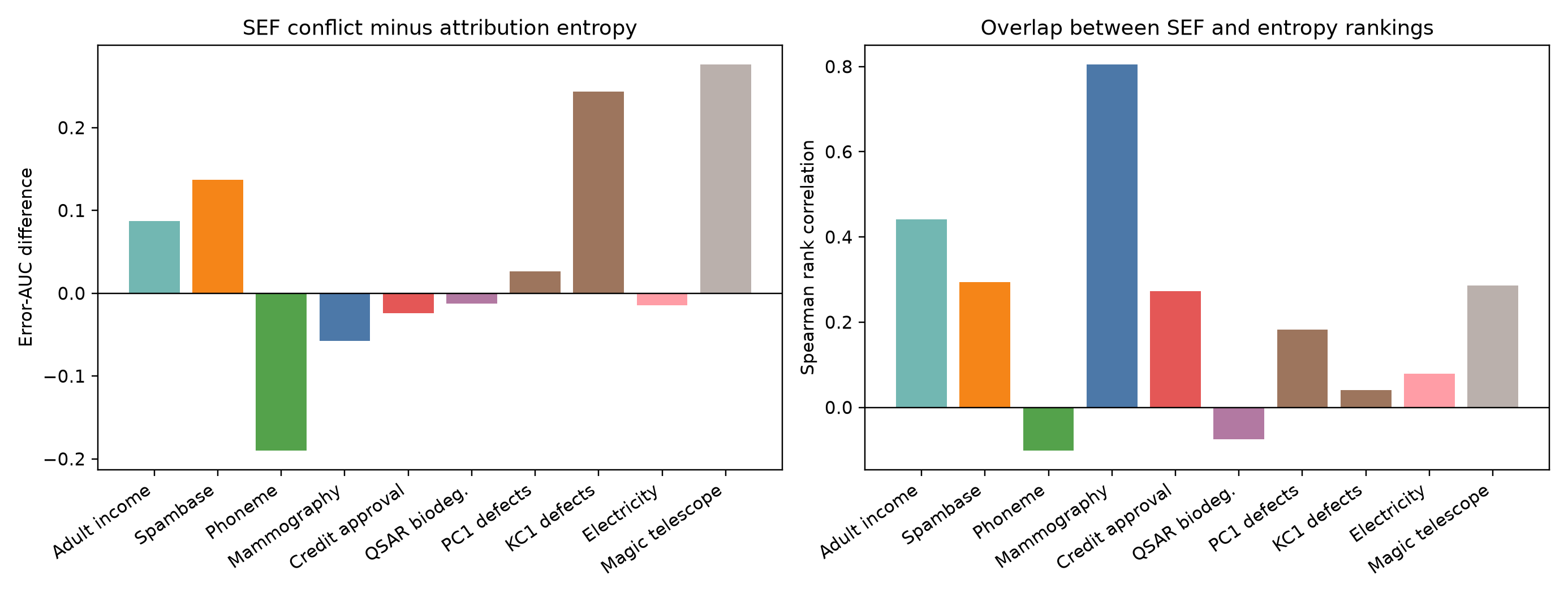}
\caption{Independent SEF-versus-entropy comparison. Left: error-AUC difference, where positive bars favor SEF conflict. Right: rank correlation between SEF conflict and attribution entropy. The mixed signs and correlations show that neither score is a monotone replacement for the other.}
\label{fig:external-entropy}
\end{figure}

The expanded comparison does not support a blanket dominance claim. SEF conflict ranks errors substantially better than entropy on Adult Income, Spambase, KC1 defects, and Magic Telescope, with the largest gain on Magic Telescope ($+0.276$). Entropy is clearly stronger on Phoneme ($-0.190$) and moderately stronger on Mammography ($-0.058$). Low confidence remains the strongest single raw ranking score on eight of the ten tasks; Magic Telescope is the clearest exception, where SEF conflict has AUC $0.722$ versus $0.568$ for low confidence. The SEF--entropy rank correlation ranges from $-0.101$ on Phoneme to $0.805$ on Mammography. These results answer the novelty question more directly than the three-data-set healthcare comparison: SEF and entropy can overlap strongly, differ sharply, or favor different rankings depending on the task. Their relationship is empirical and should be reported rather than assumed.

Figure~\ref{fig:evidence-frontier} turns the external ScopeGate result into a review-budget check. Each curve removes the highest-conflict fraction from the held-out high-confidence group and reports error among the remaining cases.

\begin{figure}[!htbp]
\centering
\includegraphics[width=0.94\textwidth]{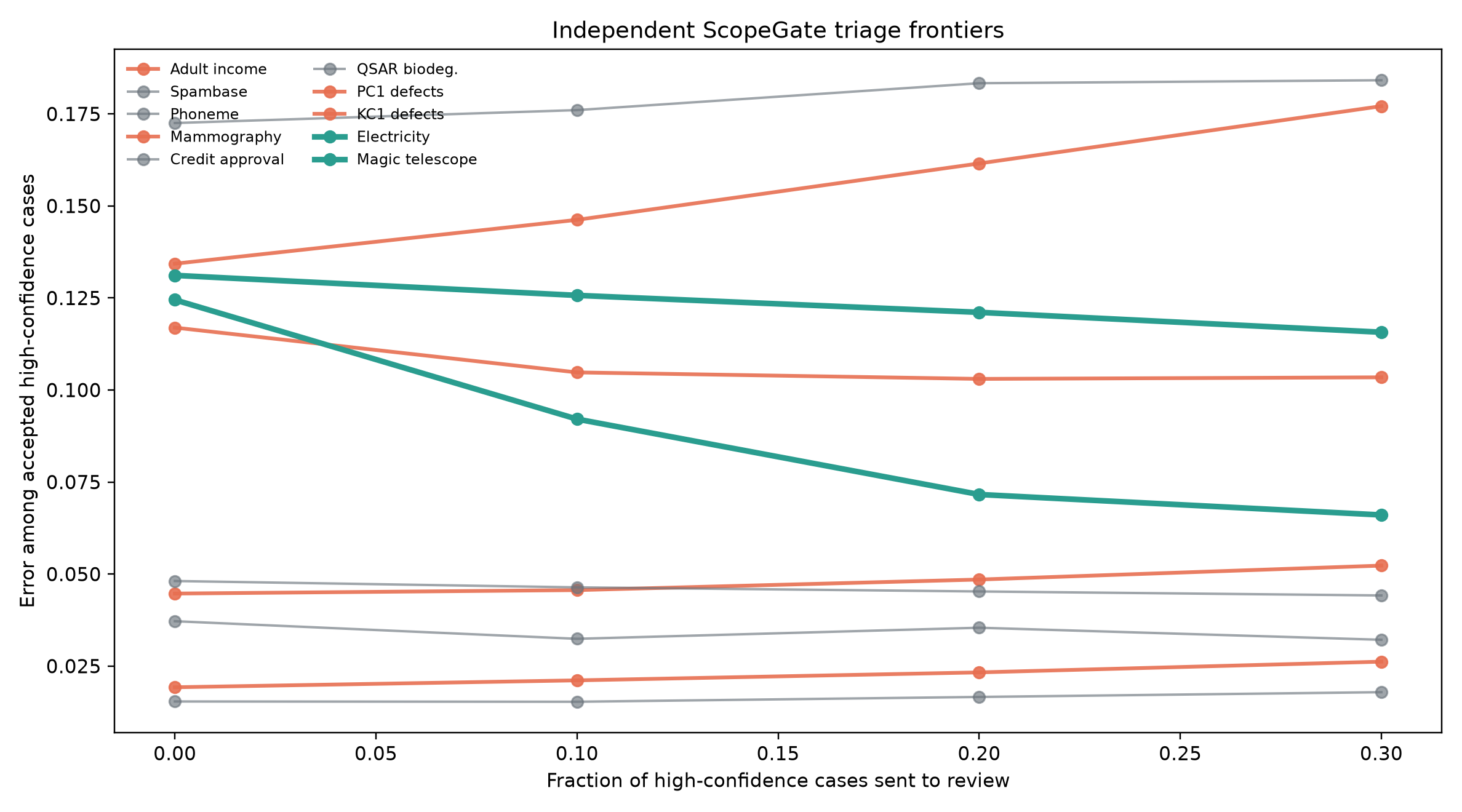}
\caption{Independent ScopeGate triage frontiers. Teal curves are Electricity and Magic Telescope, where the positive-direction diagnostic passes in every split. Orange curves mark identified direction reversals. Gray curves are inconclusive or unsupported settings. Reviewing more high-conflict cases lowers accepted-case error in the supported tasks but can raise it in reversal settings.}
\label{fig:evidence-frontier}
\end{figure}

The frontier makes the deployment boundary visible. On Magic Telescope, reviewing the highest-conflict $30\%$ lowers accepted high-confidence error from about $0.124$ to $0.066$. Electricity also improves, though more modestly. By contrast, accepted error rises on Adult Income and several reversal tasks when high-conflict cases are removed. A single pooled curve would hide this difference. ScopeGate is therefore not an optional refinement: it determines whether a high-conflict review rule is supported in the target population.

\FloatBarrier

\section{Comparison with Alternative Tools}

Table~\ref{tab:method-comparison} summarizes the role of SEF relative to common alternatives. The table is not meant to say that SEF dominates all existing methods. It shows the specific gap SEF is designed to fill.

\begin{table}[!htbp]
\centering
\caption{Conceptual comparison. SEF is positioned as an evidence-audit layer, not as a replacement for attribution, confidence, calibration, or selective prediction.}
\label{tab:method-comparison}
\small
\begin{tabular}{p{0.24\textwidth}p{0.66\textwidth}}
\toprule
Method family & What SEF adds \\
\midrule
Model confidence & Reports how decisive the model output is. SEF adds a check for mixed evidence behind that output. \\
SHAP and Shapley-style attribution & Reports feature contributions. SEF turns signed contributions into conflict, stability, and reliability scores. \\
LIME & Gives a local surrogate explanation. SEF adds an explicit reliability audit for that explanation. \\
Permutation or model reliance & Describes variable importance, often at a global level. SEF moves the question to case-level evidence conflict. \\
Conformal prediction & Gives calibrated prediction or risk guarantees. SEF adds evidence decomposition before reliability screening. \\
Selective prediction & Decides which cases to accept or reject. SEF ranks review cases by evidence risk, and Table~\ref{tab:selective-baseline} compares it with confidence and a learned reject baseline. \\
SEF & Reports support, opposition, conflict, stability, and a reliable evidence score for each audited prediction. \\
\bottomrule
\end{tabular}
\end{table}

\FloatBarrier

\section{Related Work and Positioning}

SEF is related to several active areas, but it is not the same as any one of them.

\subsection{Interpretable and Explainable Learning}

Interpretable machine learning has grown into a large field, with work on what explanations should mean, how they should be evaluated, and when a transparent model should be preferred to a black-box model \citep{doshi2017rigorous,murdoch2019definitions,rudin2019stop}. SEF fits into this discussion by taking a narrow but practical view. It does not try to solve every problem in interpretability. It asks whether the evidence behind one prediction is one-sided, conflicted, or unstable.

This is important because a prediction can look simple from the outside while being mixed on the inside. A model may give a high probability, but that high probability may come from strong supporting evidence fighting against strong opposing evidence. SEF makes that internal disagreement visible.

\subsection{Local Attribution Methods}

SEF uses signed evidence terms, so it is naturally connected to attribution methods such as Shapley values, LIME, and integrated gradients \citep{shapley1953value,lundberg2017unified,ribeiro2016why,sundararajan2017axiomatic}. Those methods answer a useful question: which features contributed to the prediction?

SEF starts after that question. Once signed contributions are available, SEF asks how those contributions behave as an evidence system. It separates total support from total opposition, measures conflict, estimates perturbation stability, and builds a reliability score. In this sense, SEF can use existing attribution tools as inputs, but its main object is not attribution size. Its main object is evidence reliability.

\subsection{Global Effects and Model Importance}

Other work studies global feature effects and model reliance. Partial dependence, accumulated local effects, and model-class reliance are useful for understanding how variables matter across a population \citep{apley2020visualizing,fisher2019all}. SEF has a different target. It is mainly instance-level. It studies whether one prediction is supported by clean evidence or by a fragile balance of competing signals.

These two views are complementary. Global effect tools can say that a variable matters on average. SEF can say that, for this particular case, the evidence is strong, conflicted, stable, or unstable.

\subsection{Stability and Conformal Reliability}

The stability part of SEF is connected to the statistical idea that a reported signal should survive reasonable perturbations of the data or fitting process \citep{meinshausen2010stability}. The conformal screen is connected to distribution-free prediction and risk control \citep{shafer2008tutorial,angelopoulos2021gentle,bates2021distribution}. SEF brings these ideas into the explanation setting by asking whether the explanation itself is stable enough to trust.

This matters in applied data analysis. A user may not only ask whether a prediction is accurate on average. The user may ask whether the evidence pattern behind a particular prediction would remain similar if the data were resampled, slightly perturbed, or refit.

\subsection{Selective Prediction and Review Triage}

SEF-Audit is also related to selective prediction and reject-option learning, where a system can abstain or send cases to review \citep{geifman2017selective,geifman2019selectivenet}. The difference is the review score. Many selective systems rely on confidence or train a reject head. SEF-Audit ranks cases by evidence risk: high conflict, low stability, or both.

The experiments show why this distinction matters. Confidence is a strong baseline on clean benchmark tasks, and we do not hide that. SEF adds a different reason for review. It can flag a case because the evidence is internally conflicted, even when the model output looks decisive. The best use case is not replacing selective prediction. It is adding an evidence reason to a selective-prediction workflow.

\subsection{Position of SEF}

SEF is best understood as a model-agnostic evidence-audit layer. It is not a new base learner like a random forest or a gradient boosting machine \citep{breiman2001random,friedman2001greedy}. It is not a replacement for SHAP, LIME, or integrated gradients. It is a way to take signed evidence from a fitted model and turn it into support, opposition, conflict, stability, and review risk.

\section{Reproducibility}

All experiments are implemented as standalone Python scripts. The replication archive includes a single runner, \texttt{scripts/run\_core\_experiments.py}, that regenerates the main result tables and figures. The shared SEF implementation uses $\varepsilon=10^{-12}$ for numerical stability in all reported experiments; the algebraic identities in the theory are stated for $\varepsilon=0$ to show the clean population form.

The seed schedule is fixed in the scripts. Standard benchmark, selective-baseline, ablation, conformal-screen, healthcare, entropy-comparison, and identification stress-test experiments use 50 repetitions. The identification stress test uses deterministic offsets from seed \texttt{20260618}; the other repeated studies in this group use seeds $0,\ldots,49$. The black-box model-agnostic experiment uses seeds $0,\ldots,19$. The finance stress test uses seeds $0,\ldots,24$. ScopeGate uses seeds $0,\ldots,14$ plus seed \texttt{20260618} for the paired sign-flip test. The independent external replication uses seeds $0,\ldots,9$. The synthetic and Covertype sampling steps use seed \texttt{20260617}; Covertype bootstrap intervals use deterministic offsets from that seed. The multi-class standard tasks use seeds $0,\ldots,29$, and the multi-class Covertype splits use seeds $0,\ldots,4$ after drawing the working sample.

\begin{table}[!htbp]
\centering
\caption{Public data sources and identifiers used in the experiments.}
\label{tab:data-sources}
\resizebox{\textwidth}{!}{%
\begin{tabular}{lll}
\toprule
Study & Source & Dataset identifier \\
\midrule
Standard benchmarks & scikit-learn & Iris, Wine, Breast Cancer, Digits \\
Large benchmark & UCI via scikit-learn & Covertype \\
Healthcare & OpenML & diabetes v1; heart-statlog v1; blood-transfusion-service-center v1 \\
Finance stress test & OpenML & credit-g v1; bank-marketing v1; default-of-credit-card-clients v1 \\
External replication & OpenML & adult v2; spambase v1; phoneme v1; mammography v1; credit-approval v1; qsar-biodeg v1; pc1 v1; kc1 v1; electricity v1; MagicTelescope v1 \\
\bottomrule
\end{tabular}
}
\end{table}

Table~\ref{tab:data-sources} lists the public data sources used in all experiments. The replication archive also records the raw run-level results, summary tables, dependency list, model settings, OpenML names and versions, and the measured stability-runtime audit in \texttt{results/sef\_stability\_runtime.csv}. No private data are used. The binary and multi-class Covertype studies are drawn from the same public source and may overlap at the row level; they should be read as complementary stress tests of different prediction tasks, not as independent confirmatory samples. The complete executable archive is publicly available at \url{https://github.com/jopoku16/sef-scopegate}.

\FloatBarrier

\section{Practical Guidance}

This section gives a simple guide for using SEF in practice.

\textbf{Use SEF when individual predictions need review.} SEF is most useful when a prediction may trigger a decision, a report, a diagnosis, a risk score, or a human review. In those settings, the user often needs to know more than the predicted class or probability.

\textbf{Choose the reference distribution carefully.} The reference distribution defines what counts as baseline evidence. For a general model audit, the training distribution is a natural starting point. For subgroup analysis, a subgroup reference may be better. For policy or clinical work, the reference should match the population to which the prediction will be compared.

\textbf{Use grouped evidence when features are strongly related.} If features are highly correlated, replacing one feature at a time can create unrealistic cases. In that setting, SEF should be computed on feature groups or with a conditional replacement rule.

\textbf{Read conflict and confidence together.} A high-confidence prediction is not always a clean prediction. SEF conflict asks whether strong support and strong opposition are both present. Confidence asks how decisive the model output is. These are different questions, and the experiments show that both can be useful.

\textbf{Calibrate before using SEF for review triage.} Compute the proposed review score on a separate calibration sample with observed outcomes. Activate high-SEF-risk review only when the positive-direction permutation diagnostic is adequately informed and rejects at the chosen level. If the estimated direction is negative, if the test does not reject, or if too few errors are observed, do not use high SEF risk as an automatic review rule. Conflict may still be used as an input to a separately trained and cross-validated error model, but its direction must be learned rather than assumed.

\textbf{Plan healthcare validation around informative errors.} The present Diabetes and Heart Disease samples are too small for ScopeGate to support a high-conflict review rule reliably. A healthcare deployment study should specify the smallest clinically relevant conflict--error gap, collect an independent prospective calibration cohort, and continue collecting labeled outcomes until both conflict quartiles contain enough cases and the high-confidence group contains at least 20 errors and 20 correct predictions. The diagnostic should also be repeated by clinically important subgroup. Until those conditions are met, SEF may describe evidence structure, but it should not determine patient-level review decisions.

\textbf{Report the full setup.} A SEF analysis should state the fitted model, the reference distribution, the evidence construction, the perturbation scheme, the number of perturbations, and the review budget. Without these details, the reliability scores are hard to reproduce.

\textbf{Budget stability as repeated fitting, not as a one-pass score.} If stability uses $B$ model refits, its leading cost is approximately $B$ fits plus $B$ evidence calculations. In the standard-task audit, $B=40$ cost 30.1--39.2 times one fit-and-score pass. Larger models should measure this cost directly and consider fewer perturbations, grouped evidence units, parallel refits, or a cheaper perturbation scheme.

\section{Extensions Beyond Tabular Data}

The current experiments use tabular features, but the SEF framework applies whenever signed evidence units can be defined.

\textbf{Text.} For text classification, evidence units can be token-level or phrase-level attributions from integrated gradients, attention-based methods, or SHAP on token embeddings. A sentiment prediction might assign positive evidence to one phrase and negative evidence to another. SEF conflict then measures how much the prediction rests on competing textual signals, and the flip-margin theorem bounds how much attribution noise is needed to reverse the decision.

\textbf{Images.} For image classification, evidence units can be superpixel attributions from LIME or region attributions from gradient-based methods. A medical image classifier might assign positive evidence to a lesion region and negative evidence to a surrounding artifact. SEF conflict would flag predictions that depend on this internal disagreement, and the perturbation stability score would test whether the evidence pattern survives resampling of the attribution baseline.

\textbf{Retrieval-augmented systems.} For retrieval-augmented generation, evidence units can be retrieval-chunk attributions. If one retrieved passage supports an answer and another contradicts it, SEF conflict quantifies this disagreement before the system reports a confident answer. The multi-class extension applies naturally when the system must choose among several candidate answers.

In each case, the algebraic identities, the identification theorem, the flip-margin bound, and the perturbation stability concentration all transfer directly because they depend only on the existence of signed evidence terms, not on the data modality. The positive screening proposition and ScopeGate diagnostic also apply without modification. Full empirical evaluation of these extensions is future work.

Foundation-model and agentic settings raise a further difficulty that tabular data does not. There the inputs themselves move: prompts are rewritten, domains shift, and a single answer may depend on a chain of retrieval steps and tool calls rather than one prediction. A calibration layer for such systems has to control risk over the whole trajectory, not over one input-output pair. In separate work we develop conformal risk control for these two settings, one for prompt and domain shift in foundation models \citep{opoku2026promptshift} and one for retrieval and tool-use drift in agentic systems, where risk is scored over the whole trajectory of retrieval steps and tool calls rather than the final answer alone \citep{opoku2026toolchain}. SEF is complementary to that line of work rather than a substitute for it: conflict supplies a per-step evidence signal describing whether a given intermediate output rests on agreeing or competing support, which is the kind of quantity such a calibration layer can consume. Connecting the two is future work.

\section{Limitations}

SEF has several limitations.

First, the method depends on how evidence terms are constructed. Shapley-style evidence, linear-model evidence, and feature-replacement evidence can give different numerical values. This is not unique to SEF; it is a general issue in model explanation. In practice, the reference distribution and attribution construction should be reported clearly.

Second, feature replacement can be imperfect when variables are strongly correlated. Replacing one feature while holding all others fixed may create unrealistic feature combinations. Conditional replacement or grouped evidence terms may be better in those settings.

Third, SEF is not a causal method by itself. A positive evidence term means that a feature supports the model's prediction relative to the chosen reference. It does not mean that changing that feature would causally change the outcome unless additional causal assumptions are made.

Fourth, computational cost has two parts. The one-pass conflict calculation is modest in the linear studies: exact linear attributions plus SEF summaries take 3.5--169.3 milliseconds per 1,000 held-out rows on the test machine. Perturbation stability is much more expensive because it requires repeated refits or repeated evidence calculations. In our separate standard-task audit, $B=40$ stability refits cost 30.1--39.2 times one fit-and-score pass. This ratio is not a universal constant; larger applications may need fewer perturbations, grouped perturbations, parallel refits, or faster approximations.

Fifth, the current experiments focus on tabular prediction. We sketch extensions to text, images, and retrieval-augmented systems in the preceding section, and the algebraic results transfer directly, but the empirical evaluation of those extensions remains future work.

Sixth, the direction of conflict risk is task-dependent. Bank Marketing and Credit Card Default show strong incremental error-ranking information from conflict after confidence and entropy are known, but the association is negative. This rules out a universal claim that higher conflict always means higher error. The monotone evidence-risk condition must be checked in each target population before SEF risk is used as an automated review score.

Finally, SEF should be used alongside standard model checks. Accuracy, calibration, subgroup performance, confidence, and uncertainty measures still matter. SEF adds an evidence-reliability view; it does not remove the need for ordinary validation.

\section{Discussion}

This paper makes two separable contributions. The first is descriptive: SEF decomposes any fitted prediction into support, opposition, conflict, and stability, giving a structured view of evidence that confidence alone cannot provide. The identification theorem draws a sharp boundary around when this structure is recoverable from confidence and when it is not. The flip-margin theorem and perturbation radius give these quantities geometric and operational meaning.

The second contribution is prescriptive: ScopeGate tests whether the descriptive structure is safe to act on. The positive screening proposition gives the minimal condition under which SEF-Audit improves over uninformed review, and the permutation diagnostic checks that condition on held-out data. This two-stage design prevents the method from overpromising. The finance and external replication experiments show that evidence conflict can carry strong conditional information while pointing in the wrong direction for naive triage. ScopeGate catches this.

The practical upshot is that SEF is suited for settings where a prediction must be explained to a person who cares about reliability. In healthcare, credit, or public-policy applications, a model should show whether the evidence behind its prediction is one-sided, opposed, or unstable. SEF provides that description. Whether the description should trigger review is a separate, empirical question that ScopeGate answers for each target population.

\section*{Broader Impact}

SEF may help people inspect predictions in healthcare, finance, public policy,
and other high-stakes settings. Its main positive use is to show when the
evidence behind a prediction is mixed or unstable, so that a person can review
the case more carefully. The main risk is misuse: conflict could be treated as
a universal error score even though this paper shows that its direction can
reverse. SEF should therefore not decide access to care, credit, employment, or
other important outcomes by itself. Any review rule should be tested on a
separate, representative calibration sample, checked across relevant
subgroups, and used alongside ordinary accuracy, calibration, and fairness
checks.

\section*{Statements and Declarations}

\subsection*{Competing Interests}

The authors declare no competing financial or non-financial interests that are
directly or indirectly related to the work submitted for publication.

\subsection*{Funding}

The authors received no specific funding for this work.

\subsection*{Author Contributions}

J.O. contributed to conceptualisation, methodology, software, formal analysis,
experiments, visualisation, and writing the original draft. D.B. contributed to
validation, interpretation of results, and critical review and revision of the
manuscript. Both authors read and approved the final manuscript and accept
responsibility for the work.

\subsection*{Data Availability}

All experiments use publicly available benchmark data. The sources and dataset
identifiers are listed in Table~\ref{tab:data-sources}. No new data were
collected for this study.

\subsection*{Code Availability}

Code and reproducibility materials are available at
\url{https://github.com/jopoku16/sef-scopegate}. The replication archive
includes a single runner that regenerates the main result tables and figures,
as described in the Reproducibility section.

\subsection*{Ethics Approval}

This study uses public benchmark data and does not report any new
human-subject or animal data collection. Ethics approval was therefore not
required.

\subsection*{Consent to Participate and Consent to Publish}

Not applicable. The study involves no human participants and no individual
personal data.

\bibliographystyle{plainnat}
\bibliography{references}

@article{opoku2026spectral,
  title={Spectral Adaptive Conformal Prediction for Structured Non-Exchangeable Data},
  author={Opoku, Jeffery and Banahene, David},
  journal={arXiv preprint arXiv:2606.15950},
  year={2026},
  url={https://arxiv.org/abs/2606.15950}
}

@article{opoku2026dasc,
  title={Drift-Aware Spectral Conformal Prediction for Non-Exchangeable Streaming Data},
  author={Opoku, Jeffery and Banahene, David},
  journal={arXiv preprint arXiv:2606.15953},
  year={2026},
  url={https://arxiv.org/abs/2606.15953}
}

@article{opoku2026promptshift,
  title={{PromptShift-CRC}: Drift-Aware Conformal Risk Control for Foundation Models Under Prompt and Domain Shift},
  author={Opoku, Jeffery and Banahene, David},
  journal={arXiv preprint arXiv:2606.15964},
  year={2026},
  url={https://arxiv.org/abs/2606.15964}
}

@article{opoku2026toolchain,
  title={{ToolChain-CRC}: Conformal Risk Control for Agentic {AI} Under Retrieval and Tool-Use Drift},
  author={Opoku, Jeffery and Banahene, David},
  journal={arXiv preprint arXiv:2606.18467},
  year={2026},
  url={https://arxiv.org/abs/2606.18467}
}

@article{shapley1953value,
  title={A Value for n-Person Games},
  author={Shapley, Lloyd S.},
  journal={Contributions to the Theory of Games},
  volume={2},
  pages={307--317},
  year={1953}
}

@inproceedings{lundberg2017unified,
  title={A Unified Approach to Interpreting Model Predictions},
  author={Lundberg, Scott M. and Lee, Su-In},
  booktitle={Advances in Neural Information Processing Systems},
  year={2017}
}

@inproceedings{ribeiro2016why,
  title={Why Should {I} Trust You? Explaining the Predictions of Any Classifier},
  author={Ribeiro, Marco Tulio and Singh, Sameer and Guestrin, Carlos},
  booktitle={Proceedings of the ACM SIGKDD International Conference on Knowledge Discovery and Data Mining},
  pages={1135--1144},
  year={2016}
}

@inproceedings{sundararajan2017axiomatic,
  title={Axiomatic Attribution for Deep Networks},
  author={Sundararajan, Mukund and Taly, Ankur and Yan, Qiqi},
  booktitle={International Conference on Machine Learning},
  pages={3319--3328},
  year={2017}
}

@article{meinshausen2010stability,
  title={Stability Selection},
  author={Meinshausen, Nicolai and B{\"u}hlmann, Peter},
  journal={Journal of the Royal Statistical Society: Series B},
  volume={72},
  number={4},
  pages={417--473},
  year={2010}
}

@article{shafer2008tutorial,
  title={A Tutorial on Conformal Prediction},
  author={Shafer, Glenn and Vovk, Vladimir},
  journal={Journal of Machine Learning Research},
  volume={9},
  pages={371--421},
  year={2008}
}

@article{angelopoulos2021gentle,
  title={A Gentle Introduction to Conformal Prediction and Distribution-Free Uncertainty Quantification},
  author={Angelopoulos, Anastasios N. and Bates, Stephen},
  journal={arXiv preprint arXiv:2107.07511},
  year={2021}
}

@article{bates2021distribution,
  title={Distribution-Free, Risk-Controlling Prediction Sets},
  author={Bates, Stephen and Angelopoulos, Anastasios N. and Lei, Lihua and Malik, Jitendra and Jordan, Michael I.},
  journal={Journal of the ACM},
  volume={68},
  number={6},
  pages={1--34},
  year={2021}
}

@article{schapire1998boosting,
  title={Boosting the Margin: A New Explanation for the Effectiveness of Voting Methods},
  author={Schapire, Robert E. and Freund, Yoav and Bartlett, Peter and Lee, Wee Sun},
  journal={The Annals of Statistics},
  volume={26},
  number={5},
  pages={1651--1686},
  year={1998}
}

@article{murdoch2019definitions,
  title={Definitions, Methods, and Applications in Interpretable Machine Learning},
  author={Murdoch, W. James and Singh, Chandan and Kumbier, Karl and Abbasi-Asl, Reza and Yu, Bin},
  journal={Proceedings of the National Academy of Sciences},
  volume={116},
  number={44},
  pages={22071--22080},
  year={2019}
}

@article{breiman2001random,
  title={Random Forests},
  author={Breiman, Leo},
  journal={Machine Learning},
  volume={45},
  pages={5--32},
  year={2001}
}

@article{friedman2001greedy,
  title={Greedy Function Approximation: A Gradient Boosting Machine},
  author={Friedman, Jerome H.},
  journal={The Annals of Statistics},
  volume={29},
  number={5},
  pages={1189--1232},
  year={2001}
}

@article{doshi2017rigorous,
  title={Towards a Rigorous Science of Interpretable Machine Learning},
  author={Doshi-Velez, Finale and Kim, Been},
  journal={arXiv preprint arXiv:1702.08608},
  year={2017}
}

@article{rudin2019stop,
  title={Stop Explaining Black Box Machine Learning Models for High Stakes Decisions and Use Interpretable Models Instead},
  author={Rudin, Cynthia},
  journal={Nature Machine Intelligence},
  volume={1},
  pages={206--215},
  year={2019}
}

@article{fisher2019all,
  title={All Models are Wrong, but Many are Useful: Learning a Variable's Importance by Studying an Entire Class of Prediction Models Simultaneously},
  author={Fisher, Aaron and Rudin, Cynthia and Dominici, Francesca},
  journal={Journal of Machine Learning Research},
  volume={20},
  number={177},
  pages={1--81},
  year={2019}
}

@article{apley2020visualizing,
  title={Visualizing the Effects of Predictor Variables in Black Box Supervised Learning Models},
  author={Apley, Daniel W. and Zhu, Jingyu},
  journal={Journal of the Royal Statistical Society: Series B},
  volume={82},
  number={4},
  pages={1059--1086},
  year={2020}
}

@article{geifman2017selective,
  title={Selective Classification for Deep Neural Networks},
  author={Geifman, Yonatan and El-Yaniv, Ran},
  journal={Advances in Neural Information Processing Systems},
  year={2017}
}

@article{geifman2019selectivenet,
  title={{SelectiveNet}: A Deep Neural Network with an Integrated Reject Option},
  author={Geifman, Yonatan and El-Yaniv, Ran},
  journal={International Conference on Machine Learning},
  year={2019}
}

@misc{blackard1998covtype,
  title={Covertype Data Set},
  author={Blackard, Jock A. and Dean, Denis J. and Anderson, Charles W.},
  howpublished={UCI Machine Learning Repository},
  year={1998}
}

@book{boucheron2013concentration,
  title={Concentration Inequalities: A Nonasymptotic Theory of Independence},
  author={Boucheron, Stephane and Lugosi, Gabor and Massart, Pascal},
  publisher={Oxford University Press},
  year={2013}
}

\end{document}